\newcommand{\emphScore}[1]{\emph{#1}}
\newcommand{\aka}{\emph{a.k.a.}\xspace}
\newcommand{\first}{1\textsuperscript{st}\xspace}
\newcommand{\second}{2\textsuperscript{nd}\xspace}
\newcommand{\third}{3\textsuperscript{rd}\xspace}
\newtheorem{axiom}{Axiom}
\newtheorem{definition}{Definition}
\newtheorem{property}{Property}
\newtheorem{example}{Example}
\newtheorem{lemma}{Lemma}
\newcommand{\mysection}[1]{\vspace{2pt}\noindent\textbf{#1}}
\definecolor{cvprblue}{rgb}{0.21,0.49,0.74}
\title{The \emph{Tile}: A 2D Map of Ranking Scores for Two-Class Classification}
\author{S\'ebastien Pi\'erard, Ana\"is Halin, Anthony Cioppa, Adrien Deli\`ege, and Marc Van Droogenbroeck\\
Montefiore Institute, University of Li\`ege, Li\`ege, Belgium\\
{\tt\small \{S.Pierard,Anais.Halin,Anthony.Cioppa,Adrien.Deliege,M.VanDroogenbroeck\}@uliege.be}
}
\begin{document}

\newcommand{\paperA}{paper~A~\cite{Pierard2024Foundations}\xspace}
\newcommand{\paperB}{paper~B~\cite{Pierard2024TheTile}\xspace}
\newcommand{\paperC}{paper~C~\cite{Halin2024AHitchhikers}\xspace}
\newcommand{\PaperA}{Paper~A~\cite{Pierard2024Foundations}\xspace}
\newcommand{\PaperB}{Paper~B~\cite{Pierard2024TheTile}\xspace}
\newcommand{\PaperC}{Paper~C~\cite{Halin2024AHitchhikers}\xspace}

\global\long\def\sampleSpace{\Omega}%
\global\long\def\aSample{\omega}%
\global\long\def\eventSpace{\Sigma}%
\global\long\def\anEvent{E}%
\global\long\def\measurableSpace{(\sampleSpace,\eventSpace)}%
\global\long\def\expectedValueSymbol{\mathbf{E}}%

\global\long\def\aPerformance{P}%
\global\long\def\allPerformances{\mathbb{\aPerformance}_{\measurableSpace}}%
\global\long\def\aSetOfPerformances{\Pi}%
\global\long\def\randVarSatisfaction{S}%
\global\long\def\aScore{X}%
\global\long\def\allScores{\mathbb{\aScore}_{\measurableSpace}}%
\newcommandx\domainOfScore[1][usedefault, addprefix=\global, 1=\aScore]{\mathrm{dom}(#1)}%
\global\long\def\evaluation{\mathrm{eval}}%
\global\long\def\opFilter{\mathrm{filter}_\randVarImportance}%
\global\long\def\opNoSkill{\mathrm{no\text{\textendash{}}skill}}%
\global\long\def\opPriorShift{\mathrm{shift}_{\pi\rightarrow\pi'}}%
\global\long\def\opChangePredictedClass{\mathrm{change}_{\randVarPredictedClass}}%
\global\long\def\opChangeGroundtruthClass{\mathrm{change}_{\randVarGroundtruthClass}}%
\global\long\def\opSwapGroundtruthAndPredictedClasses{\mathrm{swap}_{\randVarGroundtruthClass\leftrightarrow\randVarPredictedClass}}%
\global\long\def\opSwapClasses{\mathrm{swap}_{\classNeg\leftrightarrow\classPos}}%
\global\long\def\allWorstPerformances{\frownie}
\global\long\def\allBestPerformances{\smiley}

\global\long\def\randVarGroundtruthClass{Y}%
\global\long\def\randVarPredictedClass{\hat{Y}}%
\global\long\def\allClasses{\mathbb{C}}%
\global\long\def\aClass{c}%
\global\long\def\classNeg{c_-}%
\global\long\def\classPos{c_+}%
\global\long\def\sampleTN{tn}%
\global\long\def\sampleFP{fp}%
\global\long\def\sampleFN{fn}%
\global\long\def\sampleTP{tp}%
\global\long\def\eventTN{\{\sampleTN\}}%
\global\long\def\eventFP{\{\sampleFP\}}%
\global\long\def\eventFN{\{\sampleFN\}}%
\global\long\def\eventTP{\{\sampleTP\}}%
\global\long\def\scorePTN{PTN}%
\global\long\def\scorePFP{PFP}%
\global\long\def\scorePFN{PFN}%
\global\long\def\scorePTP{PTP}%
\global\long\def\scoreAccuracy{A}%
\global\long\def\scoreExpectedSatisfaction{\aScore_{\randVarSatisfaction}}%
\global\long\def\scoreTNR{TNR}%
\global\long\def\scoreFPR{FPR}%
\global\long\def\scoreTPR{TPR}%
\global\long\def\scoreFNR{FNR}%
\global\long\def\scoreNPV{NPV}%
\global\long\def\scoreFOR{FOR}%
\global\long\def\scorePPV{PPV}%
\global\long\def\scorePrecision{\scorePPV}
\global\long\def\scoreFDR{FDR}%
\global\long\def\scoreJaccardNeg{J_-}%
\global\long\def\scoreJaccardPos{J_+}%
\global\long\def\scoreCohenKappa{\kappa}%
\global\long\def\scoreScottPi{\pi}%
\global\long\def\scoreFleissKappa{\kappa}%
\global\long\def\scoreBalancedAccuracy{BA}%
\global\long\def\scoreWeightedAccuracy{WA}%
\global\long\def\scoreYoudenJ{J_Y}
\global\long\def\scorePLR{PLR}%
\global\long\def\scoreNLR{NLR}%
\global\long\def\scoreOR{OR}%
\global\long\def\scoreSNPV{SNPV}%
\global\long\def\scoreSPPV{SPPV}%
\global\long\def\scoreACP{ACP}%
\newcommandx\scoreFBeta[1][usedefault, addprefix=\global, 1=\beta]{F_{#1}}%
\global\long\def\scoreFOne{\scoreFBeta[1]}%
\global\long\def\priorpos{\pi_+}%
\global\long\def\priorneg{\pi_-}%
\global\long\def\scoreBiasIndex{BI}%
\global\long\def\ratepos{\tau_+}%
\global\long\def\rateneg{\tau_-}%
\global\long\def\scoreACP{ACP}%
\global\long\def\scorePFour{P_4}%
\global\long\def\normalizedConfusionMatrix{C}%
\global\long\def\scoreConfusionMatrixDeterminant{|\normalizedConfusionMatrix|}

\global\long\def\allEntities{\mathbb{E}}%
\global\long\def\entitiesToRank{\mathbb{E}}%
\global\long\def\anEntity{\epsilon}%
\global\long\def\randVarImportance{I}%
\global\long\def\randVarCanonicalImportance{\randVarImportance_{a,b}}
\global\long\def\canonicalRankingScore{\rankingScore[\randVarCanonicalImportance]}
\newcommandx\rankingScore[1][usedefault, addprefix=\global, 1=\randVarImportance]{R_{#1}}%
\global\long\def\canonicalRankingScore{\rankingScore[\randVarImportance_{a,b}]}
\global\long\def\scoreVUT{VUT}%
\global\long\def\tileCurvePriors{\gamma_\pi}
\global\long\def\tileCurveRates{\gamma_\tau}
\global\long\def\relWorseOrEquivalent{\lesssim}%
\global\long\def\relBetterOrEquivalent{\gtrsim}%
\global\long\def\relEquivalent{\sim}%
\global\long\def\relBetter{>}%
\global\long\def\relWorse{<}%
\global\long\def\relIncomparable{\not\lesseqqgtr}%
\global\long\def\rank{\mathrm{rank}_\entitiesToRank}%
\global\long\def\ordering{\relWorseOrEquivalent}%
\global\long\def\invertedOrdering{\relBetterOrEquivalent}%

\global\long\def\LScityscapes{\ding{171}}
\global\long\def\LSade{\ding{170}}
\global\long\def\LSvoc{\ding{169}}
\global\long\def\LScoco{\ding{168}}

\global\long\def\indicatorSymbol{\mathbf{1}}
\global\long\def\realNumbers{\mathbb{R}}%
\global\long\def\aRelation{\mathcal{R}}%
\global\long\def\achievableByCombinations{\Phi}%
\global\long\def\allConvexCombinations{\mathrm{conv}}%
\newcommand{\indep}{\perp \!\!\! \perp}


\global\long\def\cityscapes{\LScityscapes{}~Cityscapes}
\global\long\def\ade{\LSade{}~ADE20K}
\global\long\def\voc{\LSvoc{}~Pascal VOC 2012}
\global\long\def\coco{\LScoco{}~COCO-Stuff 164k}

\newcommand{\MethodDesigner}{Bernadette}
\newcommand{\Benchmarker}{Leonard}
\newcommand{\AppDeveloper}{Howard}
\newcommand{\TheoreticalAnalyst}{Sheldon}

\newcommand{\tile}{Tile\xspace}
\newcommand{\tiles}{Tiles\xspace}
\newcommand{\valueTile}{Value Tile\xspace}
\newcommand{\baselineTile}{Baseline Value Tile\xspace}
\newcommand{\SOTATile}{State-of-the-Art Value Tile\xspace}
\newcommand{\noSkillTile}{No-Skill Tile\xspace}
\newcommand{\skillTile}{Relative-Skill Tile\xspace}
\newcommand{\correlationTile}{Correlation Tile\xspace}
\newcommand{\rankingTile}{Ranking Tile\xspace}
\newcommand{\entityTile}{Entity Tile\xspace}

\global\long\def\aNonSkilledPerformance{\aPerformance_{\indep}}
\global\long\def\allNonSkilledPerformances{\mathbb{\aPerformance}^{\randVarGroundtruthClass\indep\randVarPredictedClass}_{\measurableSpace}}%

\global\long\def\allPriorFixedPerformances{\mathbb{\aPerformance}^{\priorpos}_{\measurableSpace}}%

\newcommand{\comma}{\,,}
\newcommand{\point}{\,.}

\newcommandx\unconditionalProbabilisticScore[1]{\aScore_{#1}^{U}}%
\global\long\def\formulaPTN{\unconditionalProbabilisticScore{\eventTN}}%
\global\long\def\formulaPFP{\unconditionalProbabilisticScore{\eventFP}}%
\global\long\def\formulaPFN{\unconditionalProbabilisticScore{\eventFN}}%
\global\long\def\formulaPTP{\unconditionalProbabilisticScore{\eventTP}}%
\global\long\def\formulapriorneg{\unconditionalProbabilisticScore{\{\sampleTN,\sampleFP\}}}%
\global\long\def\formulapriorpos{\unconditionalProbabilisticScore{\{\sampleFN,\sampleTP\}}}%
\global\long\def\formularateneg{\unconditionalProbabilisticScore{\{\sampleTN,\sampleFN\}}}%
\global\long\def\formularatepos{\unconditionalProbabilisticScore{\{\sampleFP,\sampleTP\}}}%
\global\long\def\formulaAccuracy{\unconditionalProbabilisticScore{\{\sampleTN,\sampleTP\}}}%

\newcommandx\conditionalProbabilisticScore[2]{\aScore_{#1 \vert #2}^{C}}%
\global\long\def\formulaTNR{\conditionalProbabilisticScore{\{\sampleTN\}}{\{\sampleTN,\sampleFP\}}}%
\global\long\def\formulaTPR{\conditionalProbabilisticScore{\{\sampleTP\}}{\{\sampleFN,\sampleTP\}}}%
\global\long\def\formulaNPV{\conditionalProbabilisticScore{\{\sampleTN\}}{\{\sampleTN,\sampleFN\}}}%
\global\long\def\formulaPPV{\conditionalProbabilisticScore{\{\sampleTP\}}{\{\sampleFP,\sampleTP\}}}%
\global\long\def\formulaJaccardNeg{\conditionalProbabilisticScore{\{\sampleTN\}}{\{\sampleTN,\sampleFP,\sampleFN\}}}%
\global\long\def\formulaJaccardPos{\conditionalProbabilisticScore{\{\sampleTP\}}{\{\sampleFP,\sampleFN,\sampleTP\}}}%

\global\long\def\scoreBennettS{S}


\renewcommand{\paperB}{paper~B\xspace}
\renewcommand{\PaperB}{Paper~B\xspace}

\maketitle

\begin{abstract}

In the computer vision and machine learning communities, as well as in many other research domains, rigorous evaluation of any new method, including classifiers, is essential. One key component of the evaluation process is the ability to compare and rank methods.
However, ranking classifiers and accurately comparing their performances, especially when taking application-specific preferences into account, remains challenging. 
For instance, commonly used evaluation tools like Receiver Operating Characteristic (ROC) and Precision/Recall (PR) spaces display performances based on two scores.
Hence, they are inherently limited in their ability to compare classifiers across a broader range of scores and lack the capability to establish a clear ranking among classifiers.
In this paper, we present a novel versatile tool, named the \emph{\tile}, that organizes an infinity of ranking scores in a single 2D map for two-class classifiers, including common evaluation scores such as the accuracy, the true positive rate, the positive predictive value, Jaccard's coefficient, and all $\scoreFBeta$ scores. 
Furthermore, we study the properties of the underlying ranking scores, such as the influence of the priors or the correspondences with the ROC space, and depict how to characterize any other score by comparing them to the \tile. 
Overall, we demonstrate that the \tile is a powerful tool that effectively captures all the rankings in a single visualization and allows interpreting them. 
\footnote{This paper is the second of a trilogy. 
In a nutshell, \paperA presents an axiomatic framework and an infinite family of scores for ranking classifiers. In this paper (\paperB), we particularize this framework to two-class classification and introduce the \emph{\tile}, a visual tool that organizes these scores in a single 2D map. Finally, \paperC 
provides a guide to using the \tile according to four practical scenarios. 
For that, we present different \tile flavors that are applied to a real application.
}
\end{abstract}

\section{Introduction}
\label{sec:intro}

\begin{figure}[ht]
\begin{centering}
\resizebox{1\linewidth}{!}{
\includegraphics[width=.6\linewidth]{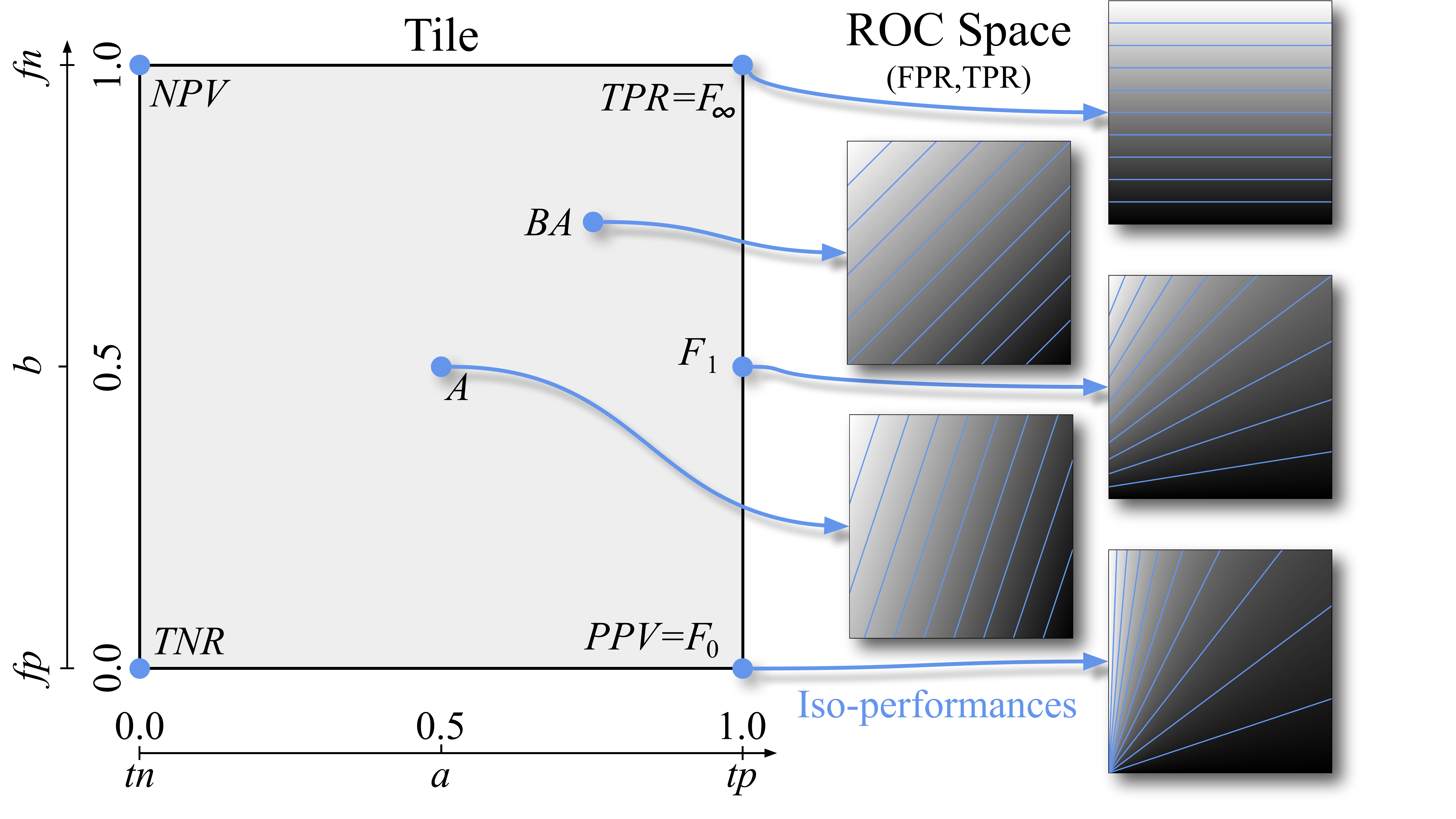}}
\par\end{centering}
\caption{\textbf{Introducing the \tile.} We introduce a new visual tool, called the \tile, representing an infinite family of ranking scores to evaluate the performances of two-class classifiers at a glance. 
In this figure, we highlight the correspondences between specific ranking scores on the \tile{} and their corresponding set of iso-performance lines in the ROC space. Notably, the variation of iso-performance lines along the right border of the \tile{} demonstrates the limitations of the ROC space for ranking performance. This visualization illustrates how the \tile{} simplifies the task of ranking classifiers and enhances the interpretation of performance scores across various evaluation spaces, such as the ROC space.
\label{fig:graphical_abstract}}
\end{figure}

Two-class classification is a fundamental task, encountered in numerous real-world scenarios. For instance, it plays a vital role in medical diagnostics, such as blood tests, MRI scans, and other imaging techniques to determine whether a patient has a disease or is healthy. In security systems, alarms must activate only when intrusions are detected. Similarly, in quality control, identifying defects in manufactured items is crucial to ensure faulty products do not reach the market. To address these challenges, selecting the right classifier is essential. However, this requires ranking classifiers in the context of application-specific preferences. For example, in medical testing, minimizing false negatives is critical since failing to diagnose a patient could have life-threatening consequences. In security systems, the focus is often on maximizing true negatives, accepting occasional false alarms as a trade-off for ensuring safety. Meanwhile, in quality control, false positives can be costly, as they may trigger unnecessary halts in production. Each application thus has unique requirements regarding the types of errors a classifier can tolerate. 

A wide range of scores penalizing different types of errors are available in the literature. However, selecting the appropriate score to rank classifiers taking application-specific preferences into account can be challenging. Additionally, the common practice of ranking based on a single score, as done in most benchmarks, can lead to a suboptimal classifier choice. Moreover, so-called evaluation spaces combining two scores, such as Receiver Operating Characteristic (ROC) and Precision/Recall (PR) spaces, do not allow ranking classifiers. This raises a recurring question: how can classifiers be effectively ranked to best align with the specific needs of each application?

In this paper, we introduce a novel visual tool for two-class classification, called the \emph{\tile}, which organizes an infinity of performance orderings derived from ranking scores into a two-dimensional map. 
Our \tile is parametrized by two parameters reflecting application-specific preferences: the first controls the trade-off between true positives and true negatives, while the second balances false positives and false negatives. This parametrization allows mapping common performance scores from the literature, such as the accuracy or $\scoreFOne$, as illustrated on the left side of \cref{fig:graphical_abstract}. 
Next, we analyze the correspondences between the \tile and standard evaluation spaces such as ROC and PR, with a particular emphasis on iso-performance lines. As shown in \cref{fig:graphical_abstract}, iso-performance lines can be drawn on the ROC space for each score of the \tile. Even though rankings of classifiers on the ROC space is dependent on the choice of a score, our \tile enables a straightforward, unified interpretation of the ranking of two-class classifiers at a glance.
Finally, we demonstrate that our \tile's organization allows to easily rank classifiers and study some ranking properties such as the orderings induced by all ranking scores, the characterization of any score, and the robustness of ranking scores.

\mysection{Contribution.} We summarize our contributions as follows.
\textbf{(i)} We introduce a novel visual tool, called the \tile, that organizes an infinity of ranking scores on a two-dimensional map.
\textbf{(ii)} We analyze the correspondences between the \tile and common evaluation spaces such as ROC and PR, with a particular focus on iso-performance lines.
\textbf{(iii)} We show that our \tile's organization allows to easily compare ranking scores, rank classifiers, and study ranking properties.

\section{Preliminaries and Related Work}
\label{sec:related-work}

We first present the necessary preliminaries, including the mathematical framework, definitions of the relevant scores, their underlying structure, and a review of related work to set the context. 
By coherence, we adopt the mathematical framework, terminology, and notations introduced in \paperA. Note that all acronyms and mathematical symbols used in this paper are defined where they appear\footnote{A list of them is provided in the supplementary material.}.

\subsection{Mathematical Framework}

The framework is based on the probability theory. A score $\aScore$ is a real-valued function defined on a subset $\domainOfScore$ of the performance space $\allPerformances$. The latter is the set of all possible probability measures on the measurable space $\measurableSpace$, where the sample space (\aka universe) is $\sampleSpace$, and the event space (\aka $\sigma$-algebra) is $\eventSpace$. A performance $\aPerformance$ is an element of $\allPerformances$, thus, a probability measure. 

Moreover, the framework is also based on the order theory. The symbols $\ordering_\aScore$ and $\invertedOrdering_\aScore$ are used to denote, respectively, the ordering and dual ordering induced by the score $\aScore$. 
When both performances $\aPerformance_1,\aPerformance_2\in\domainOfScore$, $\aPerformance_1$ is \emph{worse than}, \emph{equivalent to}, or \emph{better than} $\aPerformance_2$ when $\aScore(\aPerformance_1)<\aScore(\aPerformance_2)$, $\aScore(\aPerformance_1)=\aScore(\aPerformance_2)$, or $\aScore(\aPerformance_1)>\aScore(\aPerformance_2)$, respectively. When $\aPerformance_1\notin\domainOfScore$ or $\aPerformance_2\notin\domainOfScore$, they are \emph{equivalent} if $\aPerformance_1=\aPerformance_2$, and \emph{incomparable} otherwise.

\mysection{Two-class crisp classification.}
We particularize this probabilistic framework to the case in which the sample space contains two satisfying and two unsatisfying elements, thus $|\sampleSpace|=4$ and $\eventSpace=2^\sampleSpace$. Using the binary random variable $\randVarSatisfaction:\sampleSpace\rightarrow\{0,1\}$ to denote the satisfaction, we have  $|\{\aSample\in\sampleSpace:\randVarSatisfaction(\aSample)=0\}|=|\randVarSatisfaction^{-1}(\{0\})|=2$, and $|\{\aSample\in\sampleSpace:\randVarSatisfaction(\aSample)=1\}|=|\randVarSatisfaction^{-1}(\{1\})|=2$. 
Variable $\randVarSatisfaction$ determines how the elements of $\sampleSpace$ are interpreted. 
The most popular interpretation is undoubtedly the popular \emph{two-class crisp classification}. 
For this choice, we take $\sampleSpace=\{\sampleTN,\sampleFP,\sampleFN,\sampleTP\}$. The samples $\sampleTN$, $\sampleFP$, $\sampleFN$ and $\sampleTP$ are interpreted as, respectively, a \emph{true negative}, a \emph{false positive} (type I error, false alarm), a \emph{false negative} (type II error, miss), and a \emph{true positive} (hit). In that case, $\randVarSatisfaction(\sampleTN)=\randVarSatisfaction(\sampleTP)=1$ and $\randVarSatisfaction(\sampleFP)=\randVarSatisfaction(\sampleFN)=0$.

\mysection{Classes and predictions}. We can introduce the set of classes $\allClasses=\{\classNeg,\classPos\}$, as well as the 
random variables $\randVarGroundtruthClass:\sampleSpace\rightarrow\allClasses$ and $\randVarPredictedClass:\sampleSpace\rightarrow\allClasses$ such that $\randVarGroundtruthClass(\sampleTN)=\randVarGroundtruthClass(\sampleFP)=\classNeg$, $\randVarGroundtruthClass(\sampleFN)=\randVarGroundtruthClass(\sampleTP)=\classPos$, $\randVarPredictedClass(\sampleTN)=\randVarPredictedClass(\sampleFN)=\classNeg$, and $\randVarPredictedClass(\sampleFP)=\randVarPredictedClass(\sampleTP)=\classPos$. Indeed, $\randVarGroundtruthClass$ and $\randVarPredictedClass$ can be interpreted as the ground-truth and predicted classes. 
The \emph{no-skill performances} 
are those such that $\aPerformance(\randVarGroundtruthClass,\randVarPredictedClass)=\aPerformance(\randVarGroundtruthClass) \aPerformance(\randVarPredictedClass)$. 
The satisfaction is the indicator $\randVarSatisfaction=\indicatorSymbol_{\randVarGroundtruthClass=\randVarPredictedClass}$.  

\subsection{Scores}

In the literature, numerous performance scores\footnote{In this paper we choose the term \emph{score} to avoid any possible confusion with the mathematical meaning of the terms \emph{metric}, \emph{measure} and \emph{indicator}.} (also called metrics, measures, indicators, criteria, factors, and indices~\cite{Texel2013Measure,Canbek2017Binary}) have been introduced for two-class crisp classification. In fact, the list of scores is almost endless, as can be seen from numerous reviews~\cite{Ballabio2018Multivariate,Canbek2017Binary,Choi2010Survey,Ferri2009AnExperimental,Powers2011Evaluation,Sokolova2009ASystematic}. Choosing one score over another depends on the application field (medical, machine learning, statistics, \etc).

\global\long\def\toMaximize{}%
\global\long\def\toMinimize{}%
\global\long\def\notOptimizable{}%

\mysection{Unconditional probabilistic scores.} 
Unconditional probabilistic scores, denoted by $\unconditionalProbabilisticScore{\anEvent}$, are parameterized by an event $\anEvent\in\eventSpace$ such that $\emptyset\subsetneq \anEvent\subsetneq \sampleSpace$:
\begin{equation}
    \unconditionalProbabilisticScore{\anEvent}:\allPerformances\rightarrow[0,1]:\aPerformance\mapsto
    \aPerformance(\anEvent) \,.
\end{equation}
There exist only $14$ unconditional probabilistic scores. For singleton events, there is $\scorePTN=\formulaPTN$ \notOptimizable (called \emphScore{rejection rate}), $\scorePFP=\formulaPFP$ \notOptimizable, $\scorePFN=\formulaPFN$ \notOptimizable, and $\scorePTP=\formulaPTP$ \notOptimizable (called \emphScore{detection rate}). There are also the \emphScore{priors} for the negative and positive classes, given by $\priorneg=\formulapriorneg$ \notOptimizable and $\priorpos=\formulapriorpos$ \notOptimizable respectively, as well as the \emphScore{negative} and \emphScore{positive prediction rates}, given by $\rateneg=\formularateneg$ \notOptimizable and $\ratepos=\formularatepos$ \notOptimizable respectively. The \emphScore{prevalence} is a synonym used for $\priorpos$. The \emphScore{accuracy} \toMaximize (\aka \emphScore{matching coefficient}) corresponds to the expected value of $\randVarSatisfaction$ and is given by $\scoreAccuracy=\formulaAccuracy$. Its complement~\cite{Canbek2017Binary} is the \emphScore{error rate} or \emphScore{misclassification rate} \toMinimize.

\mysection{Conditional probabilistic scores.}
The conditional probabilistic scores $\conditionalProbabilisticScore{\anEvent_1}{\anEvent_2}$ are parameterized by two events, $\anEvent_1,\anEvent_2\in\eventSpace$ such that $\emptyset\subsetneq \anEvent_1\subsetneq \anEvent_2\subseteq\sampleSpace$:
\begin{equation}
    \conditionalProbabilisticScore{\anEvent_1}{\anEvent_2}:\domainOfScore[\conditionalProbabilisticScore{\anEvent_1}{\anEvent_2}]\rightarrow[0,1]:\aPerformance\mapsto
    \aPerformance(\anEvent_1\vert \anEvent_2) \comma
\end{equation}
with $\domainOfScore[\conditionalProbabilisticScore{\anEvent_1}{\anEvent_2}]=\{\aPerformance\in\allPerformances:\aPerformance(\anEvent_2)\ne0\}$. There exist only $50$ conditional probabilistic scores (including the $14$ unconditional ones, as $\unconditionalProbabilisticScore{\anEvent}=\conditionalProbabilisticScore{\anEvent}{\sampleSpace}$). The probabilities of making a correct decision for negative and positive inputs are given by the \emphScore{True Negative Rate} $\scoreTNR=\formulaTNR$ \toMaximize (\aka specificity, selectivity, inverse recall) and the \emphScore{True Positive Rate} $\scoreTPR=\formulaTPR$ \toMaximize (\aka \emphScore{sensitivity}, \emphScore{recall}). Their complements are the \emphScore{False Positive Rate} $\scoreFPR$ \toMinimize and the \emphScore{False Negative Rate} $\scoreFNR$ \toMinimize respectively. The probabilities of negative and positive predictions being correct are given by the \emphScore{Negative Predictive Value} $\scoreNPV=\formulaNPV$ \toMaximize (\aka inverse precision) and the \emphScore{Positive Predictive Value} $\scorePPV=\formulaPPV$ \toMaximize (\aka precision). Their complements are the \emphScore{False Omission Rate} $\scoreFOR$ \toMinimize, and the \emphScore{False Discovery Rate} $\scoreFDR$ \toMinimize respectively. \emphScore{Jaccard's coefficient} is $\scoreJaccardNeg=\formulaJaccardNeg$ \toMaximize for the negative class and $\scoreJaccardPos=\formulaJaccardPos$ \toMaximize for the positive class. The latter is also called \emphScore{Tanimoto coefficient}, \emphScore{similarity}, \emphScore{intersection over union}, \emphScore{critical success index}~\cite{Hogan2010Equitability}, as well as \emphScore{G-measure} in~\cite{Flach2003TheGeometry}.

\mysection{Non-probabilistic scores.}
There is also an infinite number of scores that have no probabilistic significance.

We start with transformations of probabilistic scores. \emphScore{Bennett's} $\scoreBennettS$ \toMaximize~\cite{Warrens2012TheEffect} is related to the accuracy by $\scoreBennettS=2\scoreAccuracy-1$. The \emphScore{F-one}  score \toMaximize, also called \emphScore{Dice-S{\o}rensen coefficient}, is related to Jaccard by $\scoreFOne=\nicefrac{2\scoreJaccardPos}{\scoreJaccardPos+1}$. The \emphScore{standardized negative and predictive values} are transformations of the negative and predictive values given by $\scoreSNPV=\frac{\scoreTNR}{\scoreTNR+\scoreFNR}=\frac{\scoreNPV\priorpos}{\scoreNPV(\priorpos-\priorneg)+\priorneg}$ \toMaximize and $\scoreSPPV=\frac{\scoreTPR}{\scoreFPR+\scoreTPR}=\frac{\scorePPV\priorneg}{\scorePPV(\priorneg-\priorpos)+\priorpos}$ \toMaximize~\cite{Heston2011Standardizing}. The \emphScore{likelihood ratios}~\cite{Gardner2006Receiver‐operating,Glas2003TheDiagnosticOddsRatio,Powers2011Evaluation,Brown2006ROC} are also transformations of these scores. The \emphScore{Negative Likelihood Ratio} is $\scoreNLR=\frac{\scoreFNR}{\scoreTNR}=\frac{1-\scoreSNPV}{\scoreSNPV}$ \toMinimize. 
The \emphScore{Positive Likelihood Ratio}~\cite{Altman1994Diagnostic} is $\scorePLR=\frac{\scoreTPR}{\scoreFPR}=\frac{\scoreSPPV}{1-\scoreSPPV}$ \toMaximize.
\emphScore{Cohen's $\scoreCohenKappa$ statistic}~\cite{Cohen1960ACoefficient, Kaymak2012TheAUK} \notOptimizable is a transformation of the accuracy: $\scoreCohenKappa=\frac{\scoreAccuracy-\scoreAccuracy\circ\opNoSkill}{1-\scoreAccuracy\circ\opNoSkill}$ where $\circ$ is the function composition operator and $\opNoSkill$ is the operation that transforms a performance $\aPerformance$ into $\aPerformance'$ such that $\aPerformance'(\randVarGroundtruthClass,\randVarPredictedClass)=\aPerformance(\randVarGroundtruthClass)\aPerformance(\randVarPredictedClass)$. It is also known as \emphScore{Heidke Skill Score}~\cite{Canbek2017Binary,Wilks2020Statistical}.
It is also common to combine probabilistic scores, either linearly or by averaging them. For example, the \emphScore{Bias Index} $\scoreBiasIndex$ \notOptimizable has been defined as $\ratepos-\priorpos$ in \cite{Byrt1993Bias}.

Many authors combine $\scoreTNR$ and $\scoreTPR$. Their weighted arithmetic mean is the \emphScore{weighted accuracy} $\scoreWeightedAccuracy$ \toMaximize. When the weights are $0.5$, we obtain the \emphScore{balanced accuracy} $\scoreBalancedAccuracy$ \toMaximize, and when the weights correspond to the priors, we get the \emphScore{accuracy}. Instead of taking an arithmetic mean, it has been proposed to take the geometric mean~\cite{Barandela2003Strategies,Guo2008OnTheClass}, which leads to the \emphScore{G-measure}~\cite{Canbek2017Binary} \toMaximize (not to be confused with the G-measure of~\cite{Flach2003TheGeometry}). 
\emphScore{Youden's index} \cite{Youden1950Index, Fluss2005Estimation} or \emphScore{Youden's} $\scoreYoudenJ$ \emphScore{statistic} is $\scoreYoudenJ=\scoreTNR+\scoreTPR-1=2\scoreBalancedAccuracy-1$ \toMaximize. It is also called \emphScore{informedness} and \emphScore{Peirce Skill Score}~\cite{Canbek2017Binary,Wilks2020Statistical}. The \emphScore{determinant} $\scoreConfusionMatrixDeterminant$ \notOptimizable~\cite{Wimmer2006APerson} of the (normalized) confusion matrix is $\scoreConfusionMatrixDeterminant=\priorneg\priorpos\scoreYoudenJ$. 

Some authors prefer to combine $\scorePPV$ with $\scoreTPR$. Their weighted harmonic mean is the \emphScore{F-score} $\scoreFBeta$ \toMaximize.
Others prefer to combine $\scoreNPV$ with $\scorePPV$. The \emphScore{markedness}~\cite{Powers2020Evaluation-arxiv} is defined as $\scoreNPV+\scorePPV-1$ \toMaximize and is also known as the \emphScore{Clayton Skill Score}~\cite{Canbek2017Binary,Wilks2020Statistical}.
It has also been proposed to combine the $4$ probabilistic scores $\scoreTNR$, $\scoreTPR$, $\scoreNPV$, and $\scorePPV$. Their arithmetic mean is called \emphScore{Average Conditional Probability} $\scoreACP$ \toMaximize \cite{Burset1996Evaluation}, and their harmonic mean is $\scorePFour$ \toMaximize \cite{Sitarz2023Extending}.
Finally, a plethora of other scores can be found in the literature concerning similarity scores~\cite{Batyrshin2016Visualization,Brusco2021AComparison,Choi2010Survey,Gower1986Metric,Mejia2018Towards,Warrens2013Comparison}. Many of them are peculiar cases of the ranking scores that are discussed in detail in this paper.

\subsection{Structuring the Scores}
\label{sub:structuring-the-scores}

In~\cite{Ferri2009AnExperimental}, $18$ scores have been experimentally structured in the form of histograms of linear and rank correlations, based on $30$ datasets. 
Similarly, Choi \etal~\cite{Choi2010Survey} made a hierarchical clustering of $76$ binary similarity and distance scores based on a random binary dataset. In~\cite{Mejia2018Towards}, $7$ properties have been arbitrarily defined, and $11$ scores have been classified into $5$ classes based on whether the properties are verified. 
In~\cite{Texel2013Measure}, taking the object-oriented software development standpoint, the structure takes the form of a \emph{Unified Modeling Language} (UML) diagram to represent the concepts of measure, metric, and indicator, as well as the relationships between the three concepts. 
\cite{Canbek2017Binary} proposes to distinguish between only two types of scores: the measures and the metrics. 
Based on a sample of $44$ scores and related quantities ($22$ measures and $22$ metrics), they propose to divide measures into $4$ levels (base, \first, \second and \third), and metrics into 3 levels (base, \first and \second). The $4$ basic measures are the elements of the confusion matrix (also known as the contingency matrix). Based on this, the authors draw what they describe as ``the periodic table of elements in binary classification performance''. It is a map of the $44$ scores, organized around the confusion matrix, the vertical dimension being related to the ground-truth class and the horizontal dimension to the predicted class.
In this work, we avoid mixing scores and only consider those suitable for  performance ordering and performance-based ranking of classifiers. 

\subsection{Reminders on the Axioms of Ranking}

We also leverage the axiomatic definitions around the notion of performance introduced in \paperA. In a nutshell, the \first axiom states that if several performances have been ranked, then removing or adding a performance should not affect the relative order of the previously present performances. We will reuse the second and third axioms later in this paper, rephrased as follows. 

\setcounter{axiom}{1} 
\begin{axiom}[reminder]
\label{axiom:satisfaction}
If the degree of satisfaction that can be obtained with a \first classifier is for sure less or equal than the degree of satisfaction that can be obtained with a \second one, then the former classifier is not better than the latter.
\end{axiom}
\setcounter{axiom}{2} 
\begin{axiom}[reminder]
\label{axiom:combinations}
Given a set of classifiers, and a arbitrary set of possible operations to perturb (\eg, add noise to their output) and/or combine them, then, on the basis of their performances only, it must be impossible to determine a sequence of these operations that would lead with certainty to a classifier either better than the best of the initial ones or worse than the worst of the initial ones.
\end{axiom}
Moreover, we show for the first time that it is possible to establish a continuous, two-dimensional map of an infinity of scores, and that the map includes many scores that are widespread in the literature.

\section{Ranking Scores for Two-Class Classification}
\label{sec:ranking-scores}

\subsection{Particularization of Ranking Scores}

To build the \tile, we consider the family of ranking scores in the special case of the two-class crisp classification task:

\begin{equation}
    \rankingScore(\aPerformance)=\frac{
        \sum_{\aSample\in\{\sampleTN,\sampleTP\}} \randVarImportance(\aSample)\aPerformance(\{\aSample\})
    }
    {
        \sum_{\aSample\in\{\sampleTN,\sampleFP,\sampleFN,\sampleTP\}} \randVarImportance(\aSample)\aPerformance(\{\aSample\})
    } \,
\end{equation}
where $\randVarImportance:\sampleSpace\rightarrow\realNumbers_{\ge0}$ is a non-negative random variable, called \emph{importance}, such that $\randVarImportance(\sampleTN)+\randVarImportance(\sampleTP)\ne0$ and $\randVarImportance(\sampleFP)+\randVarImportance(\sampleFN)\ne0$. 
These scores allow for the comparison of all two-class classification performances in $\domainOfScore[\rankingScore]$, even when the classes are imbalanced ($\priorneg\ne\priorpos$).
 
\begin{example}[Probabilistic ranking scores]
    \label{example:probabilistic-ranking-scores}
    The family of ranking scores includes the following $9$ probabilistic scores: 
    $\scoreNPV$, 
    $\conditionalProbabilisticScore{\{\sampleTN,\sampleTP\}}{\{\sampleTN,\sampleFN,\sampleTP\}}$, 
    $\scoreTPR$, 
    $\scoreJaccardNeg$, 
    $\scoreAccuracy$, 
    $\scoreJaccardPos$, 
    $\scoreTNR$, 
    $\conditionalProbabilisticScore{\{\sampleTN,\sampleTP\}}{\{\sampleTN,\sampleFP,\sampleTP\}}$, 
    and $\scorePPV$. 
\end{example}

\begin{example}[F-scores]
    For all $\beta\ge0$, $\scoreFBeta$ is a ranking score.
\end{example}

\begin{example}[PABDC]
    The ranking scores for which the importance values $\randVarImportance(\aSample)$ are rational numbers correspond to the class of \emph{Presence/Absence Based Dissimilarity Coefficients} (PABDC) satisfying the first $9$ properties listed in~\cite{Baulieu1989AClassification} (see Prop.~1 in that paper).
\end{example}

\subsection{Properties}

\global\long\def\samplesNeg{\{\sampleTN,\sampleFP\}}
\global\long\def\samplesPos{\{\sampleFN,\sampleTP\}}

We start by examining the effect of the target/prior shift operation~\cite{Sipka2022TheHitchhikerGuide}, denoted by $\opPriorShift$. It transforms a performance $\aPerformance$ into $\aPerformance'$ such that $\aPerformance'(\anEvent)=\aPerformance(\anEvent)\frac{\priorneg'}{\priorneg}$ for all $\anEvent\in2^{\samplesNeg}$ and $\aPerformance'(\anEvent)=\aPerformance(\anEvent)\frac{\priorpos'}{\priorpos}$ for all $\anEvent\in2^{\samplesPos}$.

\begin{property}
    The composition of a ranking score with a target/prior shift operation is a ranking score. We have $\rankingScore[\randVarImportance]\,\circ\,\opPriorShift=\rankingScore[\randVarImportance']$ with $\randVarImportance'(\aSample)=\randVarImportance(\aSample)\frac{\priorneg'}{\priorneg}$ for all $\aSample\in\samplesNeg$ and $\randVarImportance'(\aSample)=\randVarImportance(\aSample)\frac{\priorpos'}{\priorpos}$ for all $\aSample\in\samplesPos$. 
\end{property}

As we have particularized the axiomatic framework to the particular case of two-class crisp classification, the following property holds. 
\begin{property}
    \label{prop:scale-invariance-per-satisfaction}
    Multiplying both $\randVarImportance(\sampleTN)$ and $\randVarImportance(\sampleTP)$, or both $\randVarImportance(\sampleFP)$ and $\randVarImportance(\sampleFN)$, by the same strictly positive constant leads to another ranking score that is monotonously increasing with the original one. The induced performance ordering is thus unaffected by such a transformation.
\end{property}
\noindent Thanks to the last property, we can get rid of the redundancy between the performance orderings induced by the ranking scores by focusing on the \emph{canonical ranking scores}.
\begin{definition}
    Canonical ranking scores are given by
    \begin{equation*}
        \canonicalRankingScore=\frac{(1-a)\scorePTN+a\scorePTP}{(1-a)\scorePTN+(1-b)\scorePFP+b\scorePFN+a\scorePTP},
    \end{equation*}
    where $a,b\in[0,1]$ and $\randVarCanonicalImportance$ is the importance given by $\randVarCanonicalImportance(\sampleTN)=1-a$, $\randVarCanonicalImportance(\sampleFP)=1-b$, $\randVarCanonicalImportance(\sampleFN)=b$, and $\randVarCanonicalImportance(\sampleTP)=a$.
\end{definition}
\noindent Consequently, the $\scoreNPV$, $\scorePPV$, $\scoreTNR$, $\scoreTPR$, $\scoreAccuracy$, and $\scoreFBeta$ scores belong to the canonical ranking scores. 

To avoid potential issues, it should be noted that averaging ranking scores cannot be done without precautions. In fact, there are only a few special cases in which a score obtained in this way can be used for ranking. The same precaution also applies for canonical ranking scores. For example, because of this issue, the orderings induced by $\scoreACP$, $\scorePFour$, and $\scoreVUT=\int_0^1 \int_0^1 \canonicalRankingScore\,da\,db$ are incompatible with the axioms of ranking. If a compromise has to be found between several canonical ranking scores, we recommend averaging the importances rather than the scores. 

\subsection{Correspondences with the ROC Space}

We now study the correspondences between the canonical ranking scores and the \emph{Receiver Operating Characteristic} (ROC) space (\ie, $\scoreFPR\times\scoreTPR$) for fixed priors.

\begin{figure}
\begin{centering}
\subfloat[The locus of equivalent performances (\ie, those for which the ranking score takes a given value) are lines (restricted to ROC). These lines form a pencil whose vertex $(fpr_v,tpr_v)$ (in red) is located outside the ROC space, either in the bottom-left or upper-right areas (in gray). The value taken by the score varies linearly along any line of slope $-\priorneg/\priorpos$ (in purple).\label{fig:ranking_scores_in_roc_1}]{\begin{centering}
\includegraphics[width=0.8\linewidth]{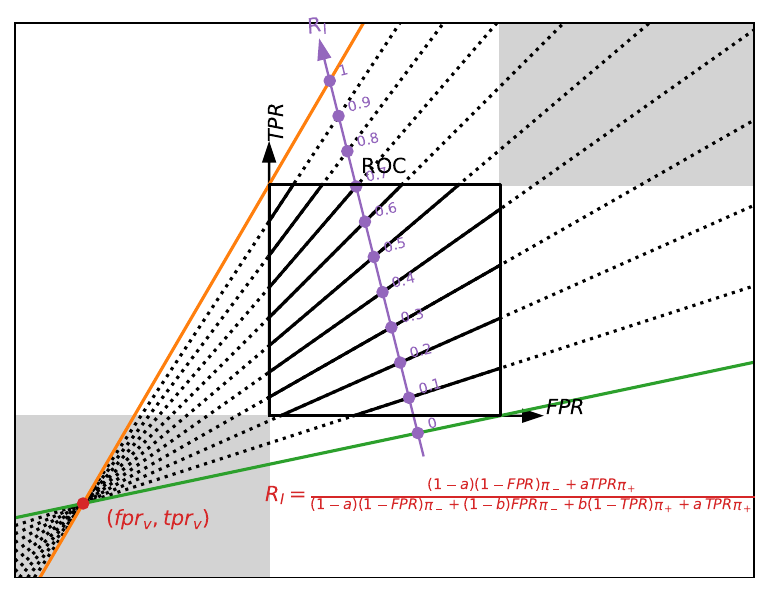}
\par\end{centering}

}
\par\end{centering}
\begin{centering}
\subfloat[The locus of the vertices $(fpr_v,tpr_v)$ (red point) for all ranking scores with a given value of $a$ are lines (restricted to the gray areas). These lines form a pencil whose vertex is located at the bottom-right corner of ROC. The value of $a$ varies linearly along any line of slope $-\priorneg/\priorpos$ (in purple).
\label{fig:ranking_scores_in_roc_2}]{\begin{centering}
\includegraphics[width=0.8\linewidth]{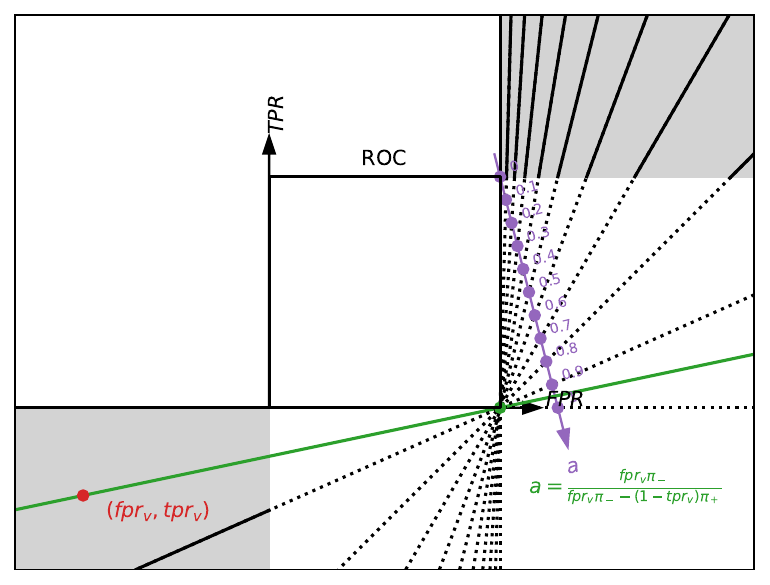}
\par\end{centering}
}
\par\end{centering}
\begin{centering}
\subfloat[The locus of vertices $(fpr_v,tpr_v)$ (red point) for all ranking scores with a given value of $b$ are lines (restricted to the gray areas). These lines form a pencil whose vertex is located at the top-left corner of ROC. The value of $b$ varies linearly along any line of slope $-\priorneg/\priorpos$ (in purple).
\label{fig:ranking_scores_in_roc_3}]{\begin{centering}
\includegraphics[width=0.8\linewidth]{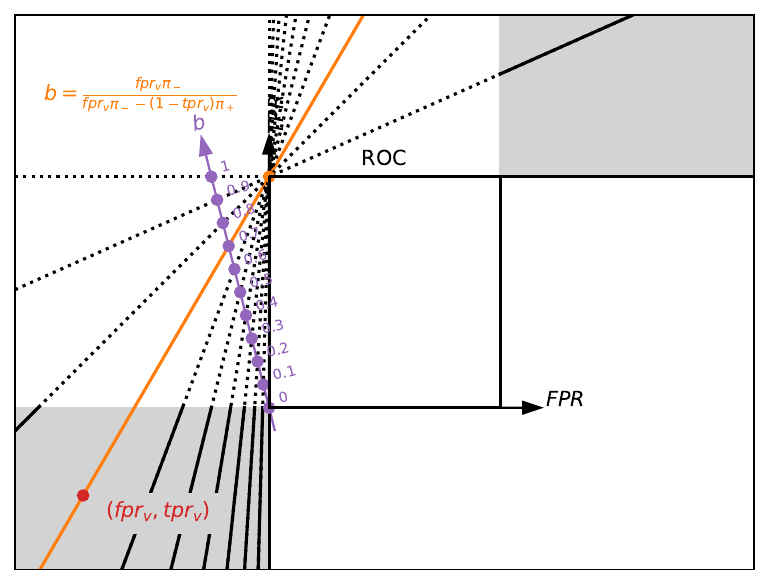}
\par\end{centering}
}
\par\end{centering}
\caption{The geometry of the ranking scores $\canonicalRankingScore$ in the ROC plane ($\scoreFPR,\scoreTPR)$. Example given for the class priors $\priorpos=1-\priorneg=0.2$ and the importance given by $(a,b)=(0.95,0.7)$.\label{fig:ranking_scores_in_roc}}

\end{figure}

Three pencils of lines are depicted in \cref{fig:ranking_scores_in_roc}. The first pencil (see~\cref{fig:ranking_scores_in_roc_1}) can be used to read the value of $\canonicalRankingScore$ in any point of ROC (on any line of slope $-\priorneg/\priorpos$, in purple in the figure), when the location of the red point is known. The red point corresponds to the locus of iso-performance lines for a given ranking score. It corresponds to the intersection of the green and orange lines, that is, the iso-performance lines for which the score is equal to 0 and 1, respectively. The equations of these lines can be obtained by looking at other pencils, as following. 
The second pencil (see~\cref{fig:ranking_scores_in_roc_2}) can be used to find the green line based on the value of the parameter $a$ (the green line is vertical when $a=0$ and horizontal when $a=1$). Finally, the third pencil (see~\cref{fig:ranking_scores_in_roc_3}) can be used to find the orange line based on the value of the parameter $b$ (the orange line is vertical when $b=0$ and horizontal when $b=1$). This procedure generalizes the geometric constructions provided by Flach in~\cite{Flach2003TheGeometry} for $\scoreAccuracy$, $\scoreFOne$, and $\scorePPV$.

\mysection{Contour plots.} \Cref{fig:ranking_scores_in_roc_1} can be considered as a \emph{contour plot} depicting the preorder induced by a score. The depicted curves (lines in our case) are called \emph{iso-performance lines} in~\cite{Provost2001Robust} and \emph{isometrics} in \cite{Flach2003TheGeometry}. Applying any monotonic function to the score leaves these curves unchanged. 
Hence, a line corresponds to a set of \emph{equivalent} performances. All performances that are on the top-left side of the line are \emph{better than} them, and all performances that are on the bottom-right side of the line are \emph{worse than} them. Note that this geometric analysis is not peculiar to the canonical ranking scores. It is valid for all ranking scores, as for any ranking score there exists a canonical ranking score such that the orderings induced by them are equal.

\mysection{Interpretation with respect to Axiom~\ref{axiom:satisfaction}.}
It is also interesting to note that, for any $\randVarImportance$, the red point is in one of the gray areas (extending to infinity). When $a<b$, the red point is on the right and above ROC, and when $a>b$, it is on the left and below ROC. When $a=b$, all lines in \cref{fig:ranking_scores_in_roc_1} (including the green and orange ones) are parallel (\eg, for $\scoreTNR$, $\scoreAccuracy$, and $\scoreTPR$) and the red point is a point at infinity. 
This is related to Axiom~\ref{axiom:satisfaction}. Indeed, in the case of fixed priors, the performance orderings induced by any score that has a pencil of iso-performance lines is a performance ordering that can be induced by a ranking score. This is because the family of ranking scores covers all cases where the vertex of the pencil is in one of the gray areas. Putting the vertex outside these areas would be illogical as there would either be better performances than the one located in the upper-left corner of ROC, or worse than the one located in the lower-right corner of ROC, which is prohibited by Axiom~\ref{axiom:satisfaction}.

\mysection{Interpretation with respect to Axiom~\ref{axiom:combinations}.}
As the ROC space is, for fixed priors, a linear projection of $\allPerformances$, any convex in the ROC space corresponds to a convex in $\allPerformances$. Therefore, a convex combination of performances cannot be better than the best of the combined performances and cannot be worse than the worst of the combined performances.

\mysection{On the diversity.}
The performance orderings induced by the scores $\canonicalRankingScore$ are all different. For two different ranking scores, one has different pencil vertices (red points), leading to different iso-performance lines, sets of equivalent performances, and performance orderings. Hence, there is no redundancy in the canonical ranking scores.

\section{The \tile for Two-Class Classification}
\label{sec:tile}


The \tile, which is depicted in \cref{fig:tile}, is defined as follows.
\begin{definition}
     The \tile is the mapping $[0,1]^2\rightarrow\allScores : (a,b)\mapsto\canonicalRankingScore$, where $\allScores$ denotes the set of all scores.
\end{definition}
Although attempts to organize a selection of two-class classification scores in the 2D plane are common, to our knowledge, this is the first time that it is done mathematically, quantitatively, and automatically, without the intervention of a human expert. The \tile is not limited to the spatial organization of scores, however.
\begin{figure}
\newcommand{\xlabel}{$a$}
\newcommand{\ylabel}{$b$}
\newcommand{\rankingScoreA}{\scoreNPV} 
\newcommand{\rankingScoreB}{\scoreTPR=\scoreFBeta[\infty]} 
\newcommand{\rankingScoreC}{\scoreTNR} 
\newcommand{\rankingScoreD}{\scorePPV=\scoreFBeta[0]} 
\newcommand{\rankingScoreE}{\scoreAccuracy} 
\newcommand{\rankingScoreF}{\scoreFOne} 
\newcommand{\rankingScoreI}{} 
\newcommand{\orderingAa}{\ordering_{\scoreNPV}} 
\newcommand{\orderingAb}{\ordering_{\scoreSNPV}^\dagger,\invertedOrdering_{\scoreNLR}^\dagger} 
\newcommand{\orderingBa}{\ordering_{\scoreTPR},\ordering_{\scoreFBeta[\infty]},} 
\newcommand{\orderingBb}{\ordering_{\scorePTP}^\dagger,\invertedOrdering_{\scorePFN}^\dagger} 
\newcommand{\orderingCa}{\ordering_{\scoreTNR}} 
\newcommand{\orderingCb}{\ordering_{\scorePTN}^\dagger,\invertedOrdering_{\scorePFP}^\dagger,} 
\newcommand{\orderingDa}{\ordering_{\scorePPV},\ordering_{\scoreFBeta[0]}} 
\newcommand{\orderingDb}{\ordering_{\scorePLR}^\dagger,\ordering_{\scoreSPPV}^\dagger} 
\newcommand{\orderingE}{\ordering_{\scoreAccuracy}} 
\newcommand{\orderingF}{\ordering_{\scoreFOne},\ordering_{\scoreJaccardPos}} 
\newcommand{\orderingG}{\ordering_{\scoreCohenKappa}^\dagger} 
\newcommand{\orderingH}{\ordering_{\scoreBalancedAccuracy}^\dagger,\ordering_{\scoreYoudenJ}^\dagger,\ordering_{\scoreConfusionMatrixDeterminant}^\dagger} 
\newcommand{\orderingI}{\ordering_{\scoreJaccardNeg}} 
\newcommand{\orderingJ}{\ordering_{\scoreWeightedAccuracy}^\dagger}
\newcommand{\titleOrderings}{Performance Orderings}
\newcommand{\titleRankingScores}{Canonical Ranking Scores}

\begin{centering}
\resizebox{1\linewidth}{!}{
\input{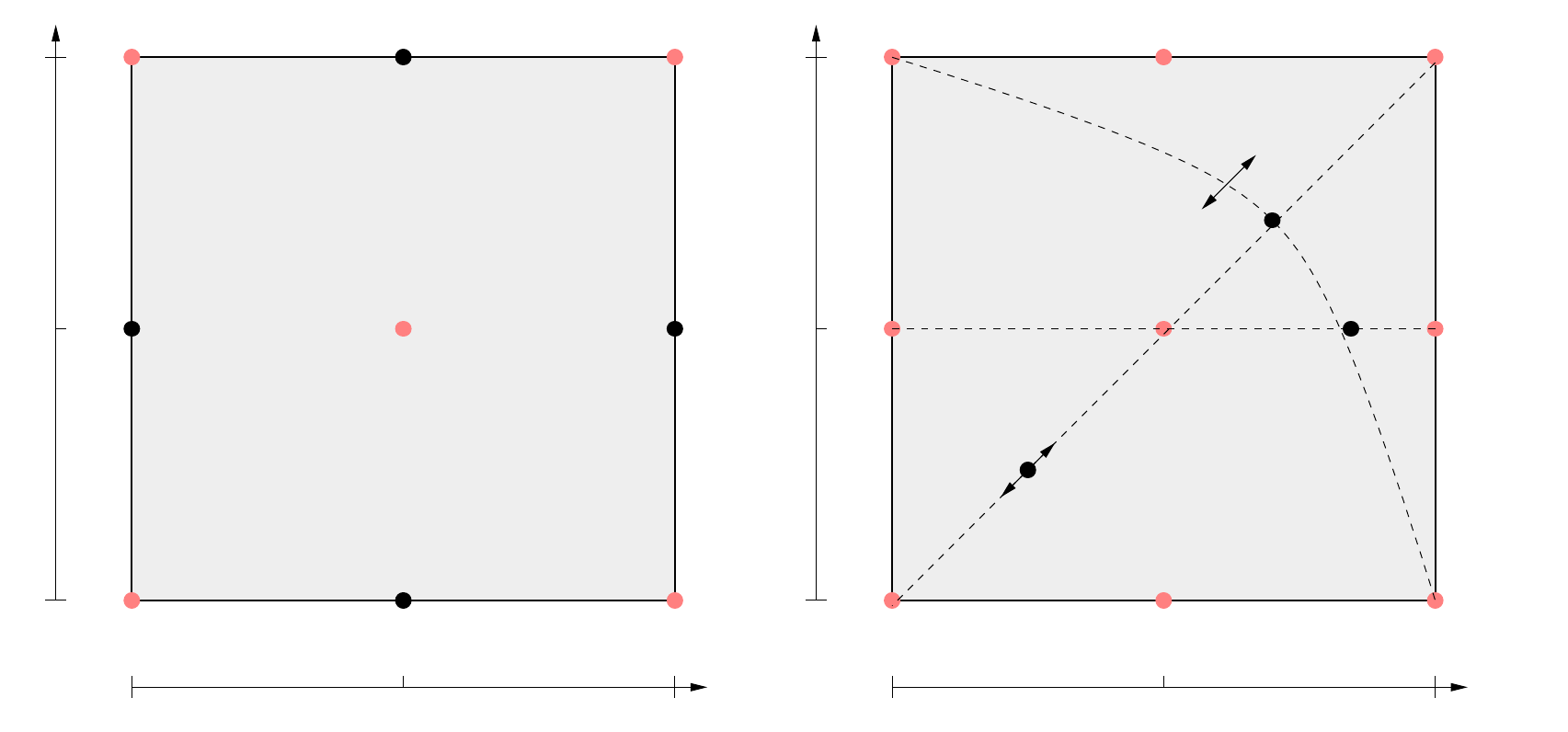tex_t}
}
\par\end{centering}
\caption{Placement of the canonical ranking scores (left) and of some performance orderings (right) on the \tile. The symbol $\dagger$ indicates the orderings that are specific for given priors. For the orderings whose locations are prior-dependent, we arbitrarily chose a negative prior of $0.7$. Double arrows $\leftrightarrow$ indicate the direction in which $\ordering_{\scoreWeightedAccuracy}$ moves when the weights are tuned and how the curve on which $\ordering_{\scoreBalancedAccuracy}$ and $\ordering_{\scoreCohenKappa}$ moves when the priors are tuned. The colored points correspond to probabilistic scores.\label{fig:tile}}
\end{figure}

\subsection{Canonical Ranking Scores on the \tile}
\label{sec:canonical-ranking-scores-on-tile}

The left-hand side of \cref{fig:tile} shows the layout of the \tile with some canonical ranking scores on it. Two opposite corners correspond to $\scoreNPV$ and $\scorePPV$, and the other corners to $\scoreTNR$ and $\scoreTPR$. The accuracy $\scoreAccuracy$ is in the center. These are the only $5$ canonical ranking scores that are also probabilistic scores. $\scoreFBeta$ scores are between $\scoreTPR$ and $\scorePPV$ ($\beta=\sqrt{\nicefrac{b}{1-b}}$), $\scoreFOne$ being in the middle of the right side.

\mysection{Interpolations.} 
By construction, the canonical scores can be interpolated, in the \tile, as follows: 
    \begin{itemize}
        \item horizontally: with the $f$-mean such that $f:x\mapsto x^{-1}$; 
        \item vertically: with the $f$-mean such that $f:x\mapsto (1-x)^{-1}$.
    \end{itemize}
\noindent The fact that $\scoreFOne$ (the harmonic mean between $\scoreTPR$ and $\scorePPV$) appears at the mid-position between $\scoreTPR$ and $\scorePPV$ is a consequence of this property.

\mysection{Indistinguishable samples.}
The scores that can be calculated when the two unsatisfying outcomes ($\sampleFP$ and $\sampleFN$) are grouped together (\ie, $\sampleSpace=\{\sampleTN,\sampleTP,incorrect\}$) are those for which $\randVarImportance(\sampleFP)=\randVarImportance(\sampleFN)$. They are located on the median horizontal, passing through $\scoreAccuracy$. 
Likewise, the scores that can be calculated when two satisfying outcomes ($\sampleTN$ and $\sampleTP$) are grouped together (\ie, $\sampleSpace=\{\sampleFP,\sampleFN,correct\}$) are those for which $\randVarImportance(\sampleTN)=\randVarImportance(\sampleTP)$. They are located on the median vertical, passing through $\scoreAccuracy$.

\mysection{Operations on performances.}
Some remarkable geometric properties of the \tile are related to the operations that can be applied on performances. They are given hereafter.
\begin{itemize}
    \item Changing either the predicted or ground-truth class amounts to, respectively, flipping the \tile around the raising diagonal ($\scoreTNR$-$\scoreTPR$ axis) or falling diagonal ($\scoreNPV$-$\scorePPV$ axis), and complementing the scores to 1.
    \item Swapping the predicted and ground-truth classes is equivalent to vertical mirroring.
    \item Swapping the positive and negative classes is equivalent to applying a central symmetry.
    \item The target/prior shift operation~\cite{Sipka2022TheHitchhikerGuide} moves the performance orderings on the \tile. We found it very useful, when priors are fixed, to represent the displacement that would have occurred if we had started with uniform priors and applied the target/prior shift. 
\end{itemize}

\subsection{Performance Orderings on the \tile}
\label{sec:performance-orderings-on-tile}

\begin{figure}
\hfill{}
\begin{minipage}[t]{0.3\linewidth}%
\begin{center}
{\scriptsize{}$\scoreNPV$}
\par\end{center}%
\end{minipage}
\hfill{}\hfill{}
\begin{minipage}[t]{0.3\linewidth}%
\begin{center}
{\scriptsize{}$\conditionalProbabilisticScore{\{\sampleTN,\sampleTP\}}{\{\sampleTN,\sampleFN,\sampleTP\}}$}
\par\end{center}%
\end{minipage}
\hfill{}\hfill{}
\begin{minipage}[t]{0.3\linewidth}%
\begin{center}
{\scriptsize{}$\scoreTPR$}
\par\end{center}%
\end{minipage}
\hfill{}

\hfill{}
\includegraphics[width=0.3\linewidth]{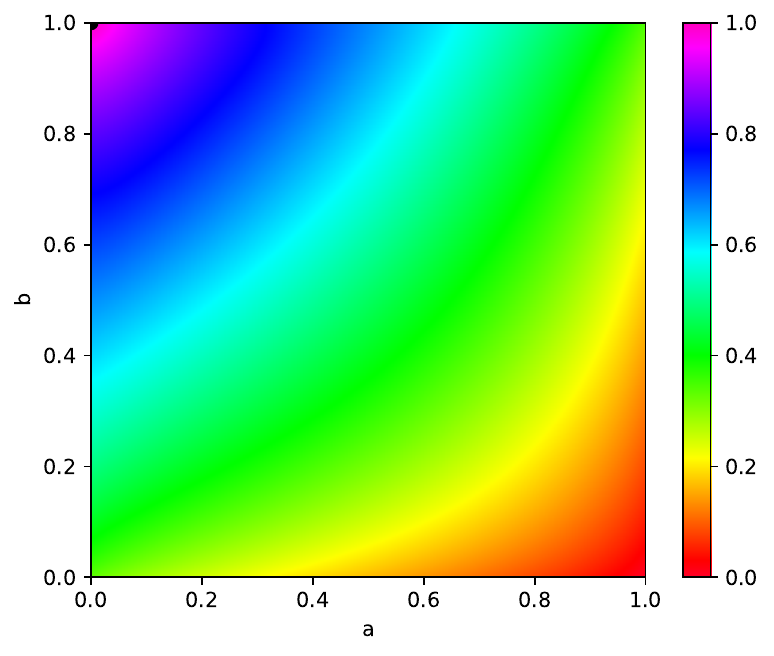}
\hfill{}\hfill{}
\includegraphics[width=0.3\linewidth]{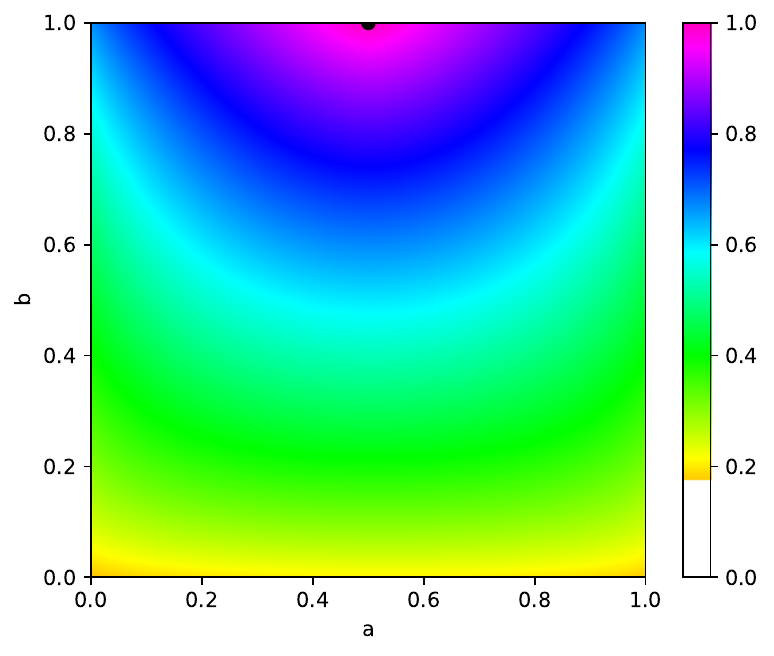}
\hfill{}\hfill{}
\includegraphics[width=0.3\linewidth]{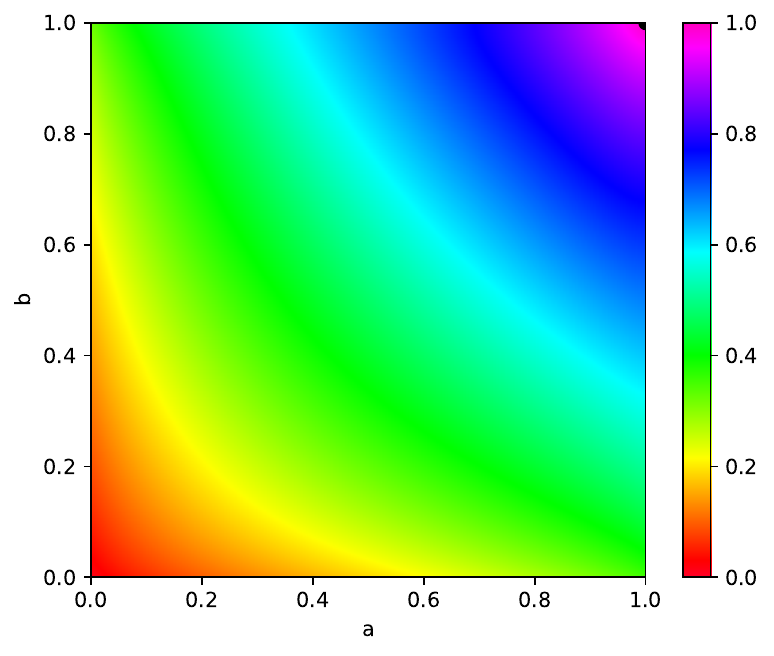}
\hfill{}

\hfill{}
\begin{minipage}[t]{0.3\linewidth}%
\begin{center}
{\scriptsize{}$\scoreJaccardNeg$}
\par\end{center}%
\end{minipage}
\hfill{}\hfill{}
\begin{minipage}[t]{0.3\linewidth}%
\begin{center}
{\scriptsize{}$\scoreAccuracy$}
\par\end{center}%
\end{minipage}
\hfill{}\hfill{}
\begin{minipage}[t]{0.3\linewidth}%
\begin{center}
{\scriptsize{}$\scoreJaccardPos$}
\par\end{center}%
\end{minipage}
\hfill{}

\hfill{}
\includegraphics[width=0.3\linewidth]{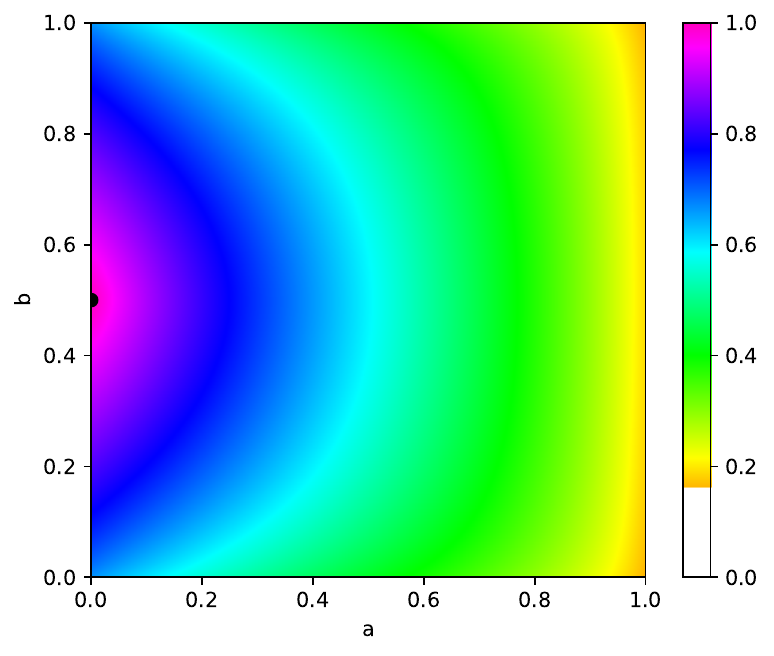}
\hfill{}\hfill{}
\includegraphics[width=0.3\linewidth]{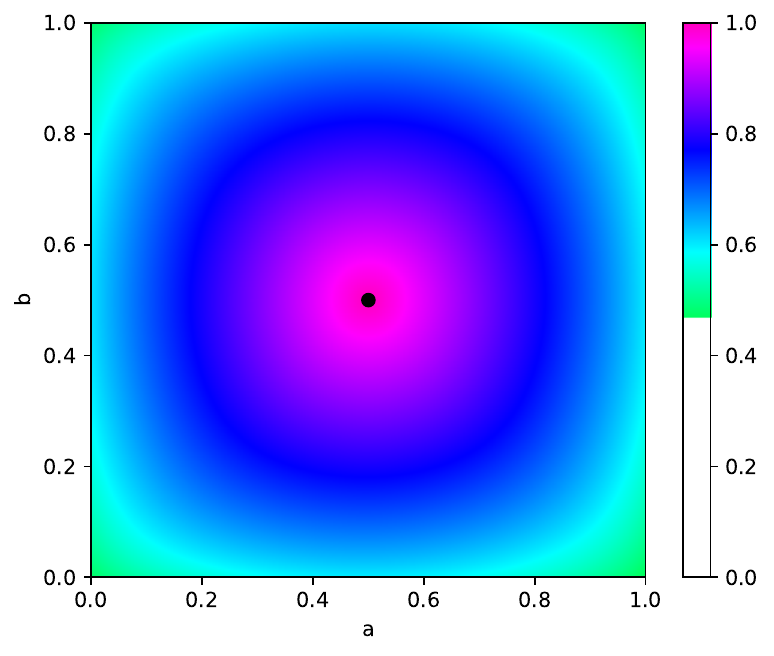}
\hfill{}\hfill{}
\includegraphics[width=0.3\linewidth]{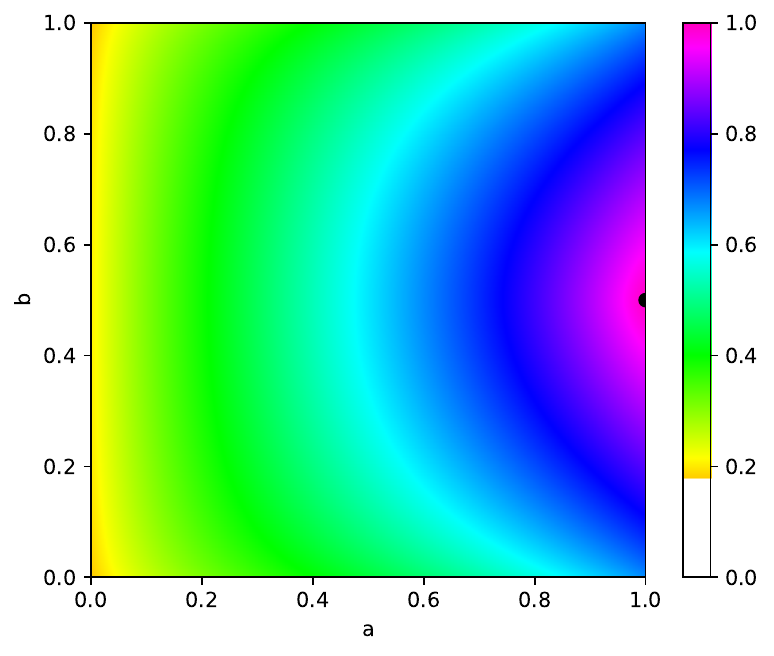}
\hfill{}

\hfill{}
\begin{minipage}[t]{0.3\linewidth}%
\begin{center}
{\scriptsize{}$\scoreTNR$}
\par\end{center}%
\end{minipage}
\hfill{}\hfill{}
\begin{minipage}[t]{0.3\linewidth}%
\begin{center}
{\scriptsize{}$\conditionalProbabilisticScore{\{\sampleTN,\sampleTP\}}{\{\sampleTN,\sampleFP,\sampleTP\}}$}
\par\end{center}%
\end{minipage}
\hfill{}\hfill{}
\begin{minipage}[t]{0.3\linewidth}%
\begin{center}
{\scriptsize{}$\scorePPV$}
\par\end{center}%
\end{minipage}
\hfill{}

\hfill{}
\includegraphics[width=0.3\linewidth]{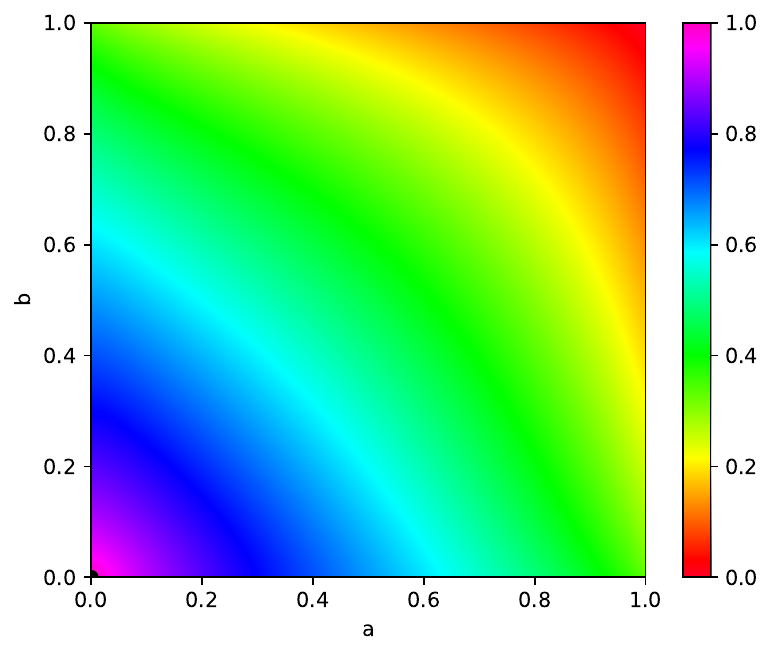}
\hfill{}\hfill{}
\includegraphics[width=0.3\linewidth]{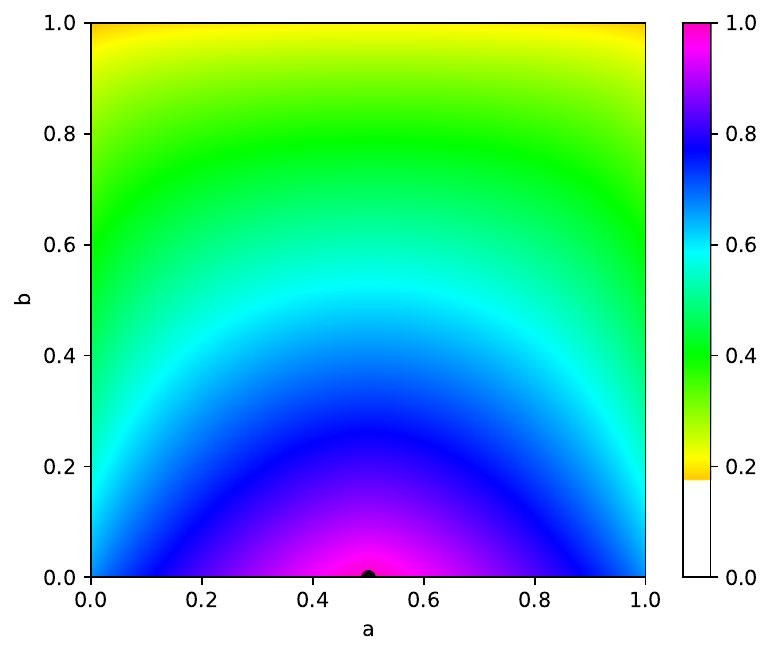}
\hfill{}\hfill{}
\includegraphics[width=0.3\linewidth]{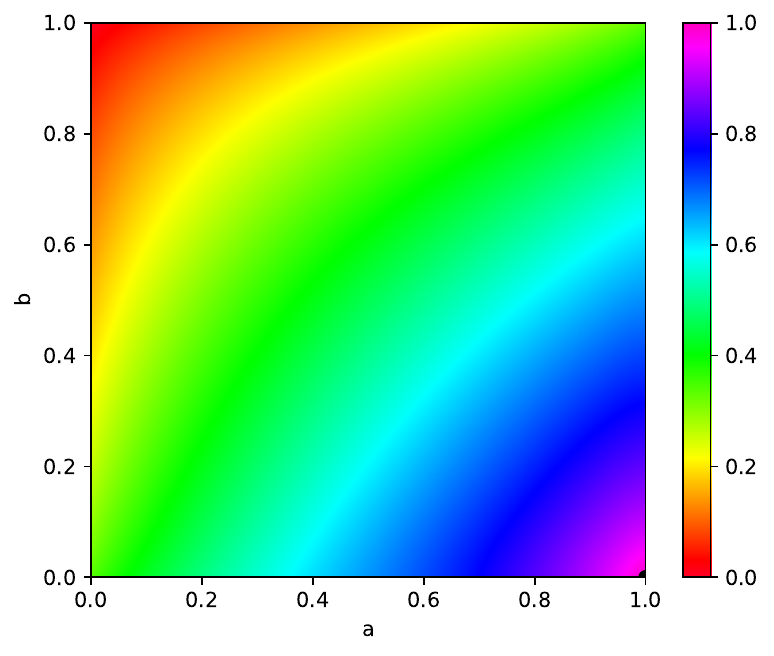}
\hfill{}

\caption{\tiles showing the rank correlations (Kendall $\tau$) between $9$ probabilistic scores (those that belong to the ranking scores, as given in \cref{example:probabilistic-ranking-scores}), and all ranking scores, for a uniform distribution of performances. The correlation values have been estimated based on $10{,}000$ performances drawn at random.\label{fig:correlation-with-probabilistic-ranking-scores}}

\end{figure}

We have put some performance orderings on the right-hand side of \cref{fig:tile} on the \tile. For a position $(a,b)$, we have mentioned the orderings induced by the canonical ranking score $\canonicalRankingScore$, and all the other orderings that are equal to it, either on $\allPerformances$ or on a subset of it (fixed priors).

\global\long\def\aRankingScore{\frac{\randVarImportance(\sampleTP)}{\randVarImportance(\sampleTN)+\randVarImportance(\sampleTP)}}
\global\long\def\bRankingScore{\frac{\randVarImportance(\sampleFP)}{\randVarImportance(\sampleFN)+\randVarImportance(\sampleFP)}}

\mysection{With all performances.} 
The ordering $\ordering_{\rankingScore}$ corresponds to location $(a,b)=(\aRankingScore,\bRankingScore)$. Those induced by the $9$ probabilistic ranking scores given in \cref{example:probabilistic-ranking-scores} are depicted by colored points on \cref{fig:tile}. While it was not possible to place $\scoreJaccardNeg$ and $\scoreJaccardPos$ on the \tile of canonical ranking scores, the corresponding orderings $\ordering_{\scoreJaccardNeg}$ and $\ordering_{\scoreJaccardPos}$ are on the \tile of performance orderings. The well-known fact that $\scoreFOne$ and $\scoreJaccardPos$ lead to the same ordering~\cite{Flach2003TheGeometry} can be easily visualized on the \tile since $\ordering_{\scoreFOne}$ and $\ordering_{\scoreJaccardPos}$ are at the same place. All orderings induced by the similarity coefficients of the $T_\theta$ and $S_\theta$ families, defined in~\cite{Gower1986Metric}, are equal to $\ordering_{\scoreFOne}$ and $\ordering_{\scoreAccuracy}$, respectively. The orderings induced by the family of similarity coefficients defined in~\cite{Batyrshin2016Visualization} are the ones on the line segment between $\ordering_{\scoreAccuracy}$ and $\ordering_{\scoreFOne}$.

\global\long\def\formulaWeightedAccuracy{\lambda_-\scoreTNR+\lambda_+\scoreTPR}
\global\long\def\aWeightedAccuracy{\frac{\lambda_+\priorneg}{\lambda_+\priorneg+\lambda_-\priorpos}}
\global\long\def\bWeightedAccuracy{\aWeightedAccuracy}
\global\long\def\aBalancedAccuracy{\priorneg}
\global\long\def\bBalancedAccuracy{\aBalancedAccuracy}
\global\long\def\aCohen{\frac{\priorneg^2}{\priorneg^2+\priorpos^2}}
\global\long\def\bCohen{\frac12}

\mysection{With fixed priors.}
It is also possible to place, on the \tile, the orderings that are equal to $\ordering_{\canonicalRankingScore}$ for a given subset of performances only. An important practical case is when the priors are fixed. In this case, the orderings induced by the unconditional probabilistic scores $\scorePTN$, $\scorePFP$ (dual order), $\scorePFN$ (dual order), and $\scorePTP$ can be placed on the \tile. 
Also, the standardized negative predictive value $\scoreSNPV$, the negative likelihood ratio $\scoreNLR$ (dual order), the standardized positive predictive value $\scoreSPPV$, the positive likelihood ratio $\scorePLR$, the weighted accuracy $\scoreWeightedAccuracy=\formulaWeightedAccuracy$, the balanced accuracy $\scoreBalancedAccuracy$, the score $\scoreConfusionMatrixDeterminant$, Youden's index $\scoreYoudenJ$, and Cohen's $\scoreCohenKappa$ can be placed on the \tile. 
Some of them have a fixed position, while for others, the position depends on the priors: $\ordering_{\scoreWeightedAccuracy}$, $\ordering_{\scoreBalancedAccuracy}$ and $\ordering_{\scoreYoudenJ}$ sweep the ascending diagonal, while $\ordering_\scoreCohenKappa$ sweeps the median horizontal. The performance ordering $\ordering_{\scoreWeightedAccuracy}$ for $\scoreWeightedAccuracy$ is at $(a,b)=(\aWeightedAccuracy,\bWeightedAccuracy)$, $\ordering_{\scoreBalancedAccuracy}$ is at $(a,b)=(\aBalancedAccuracy,\bBalancedAccuracy)$, and $\ordering_\scoreCohenKappa$ is at $(a,b)=(\aCohen,\bCohen)$.

\mysection{Characterizing scores.}
The \tile can be used to characterize any score, showing the rank correlations between that score and all canonical ranking scores, for a given performance distribution. For example, \cref{fig:correlation-with-probabilistic-ranking-scores} shows the results obtained with the $9$ probabilistic scores that belong to the ranking scores (see \cref{example:probabilistic-ranking-scores}), for a uniform distribution of performances (\ie, a symmetric Dirichlet distribution with all concentration parameters set to one). For this distribution, these rank correlations are either null ($\scoreNPV$ with $\scorePPV$, $\scoreTNR$ with $\scoreTPR$) 
or positive. Such an analysis can be easily performed with any score; the \tile turns out to be a very practical visualization tool to gain intuition about the behavior of the plethora of scores that exist.

\mysection{Visualizing the robustness.}
The importance values $\randVarImportance(\aSample)$ are design choices for competitions. Several recent papers~\cite{MaierHein2018WhyRankings,Nguyen2023HowTrustworthy} have  alerted the scientific community about the necessary robustness: the performance-based rankings should not vary much when parameters are slightly perturbed. \Cref{fig:correlation-with-probabilistic-ranking-scores} makes clear that, for a uniform distribution of performances, the performance orderings do not vary much when the parameters $a$ and $b$ are slightly perturbed.

\begin{figure}
\definecolor{colorEntityA}{rgb}{0.10588235294117647, 0.6196078431372549, 0.4666666666666667}
\definecolor{colorEntityB}{rgb}{0.4588235294117647, 0.4392156862745098, 0.7019607843137254}
\definecolor{colorEntityC}{rgb}{0.9019607843137255, 0.6705882352941176, 0.00784313725490196}
\definecolor{colorEntityD}{rgb}{0.4, 0.4, 0.4}
\newcommand{\legendEntityA}{\textcolor{colorEntityA}{$\CIRCLE$}}
\newcommand{\legendEntityB}{\textcolor{colorEntityB}{$\CIRCLE$}}
\newcommand{\legendEntityC}{\textcolor{colorEntityC}{$\CIRCLE$}}
\newcommand{\legendEntityD}{\textcolor{colorEntityD}{$\CIRCLE$}}
\newcommand{\perfEntityA}{\aPerformance_{-}}
\newcommand{\perfEntityB}{\aPerformance_{1}}
\newcommand{\perfEntityC}{\aPerformance_{2}}
\newcommand{\perfEntityD}{\aPerformance_{+}}
\newcommand{\tableToyA}{
    \begin{minipage}[b][0.3849\linewidth][c]{0.45\linewidth}%
    \begin{center}
    \resizebox{\linewidth}{!}{%
    \begin{tabular}[b]{|c|c|c|c|c|}
    \cline{2-5} \cline{3-5} \cline{4-5} \cline{5-5} 
    \multicolumn{1}{c|}{} & \multicolumn{4}{c|}{performances}\tabularnewline
    \cline{2-5} \cline{3-5} \cline{4-5} \cline{5-5} 
    \multicolumn{1}{c|}{} & $\perfEntityA$ \legendEntityA & $\perfEntityB$ \legendEntityB & $\perfEntityC$ \legendEntityC & $\perfEntityD$ \legendEntityD\tabularnewline
    \hline 
    $\scorePTN$ & 0.80 & 0.56 & 0.40 & 0.00\tabularnewline
    \hline 
    $\scorePFP$ & 0.00 & 0.24 & 0.40 & 0.80\tabularnewline
    \hline 
    $\scorePFN$ & 0.20 & 0.06 & 0.04 & 0.00\tabularnewline
    \hline 
    $\scorePTP$ & 0.00 & 0.14 & 0.16 & 0.20\tabularnewline
    \hline 
    \end{tabular}}
    \par\end{center}%
    \end{minipage}
}
\newcommand{\tableToyB}{
    \begin{minipage}[b][0.3849\linewidth][c]{0.45\linewidth}%
    \begin{center}
    \resizebox{\linewidth}{!}{%
    \begin{tabular}[b]{|c|c|c|c|c|}
    \cline{2-5} \cline{3-5} \cline{4-5} \cline{5-5} 
    \multicolumn{1}{c|}{} & \multicolumn{4}{c|}{performances}\tabularnewline
    \cline{2-5} \cline{3-5} \cline{4-5} \cline{5-5} 
    \multicolumn{1}{c|}{} & $\perfEntityA$ \legendEntityA & $\perfEntityB$ \legendEntityB & $\perfEntityC$ \legendEntityC & $\perfEntityD$ \legendEntityD\tabularnewline
    \hline 
    $\scorePTN$ & 0.50 & 0.35 & 0.25 & 0.00\tabularnewline
    \hline 
    $\scorePFP$ & 0.00 & 0.15 & 0.25 & 0.50\tabularnewline
    \hline 
    $\scorePFN$ & 0.50 & 0.15 & 0.10 & 0.00\tabularnewline
    \hline 
    $\scorePTP$ & 0.00 & 0.35 & 0.40 & 0.50\tabularnewline
    \hline 
    \end{tabular}}
    \par\end{center}%
    \end{minipage}
}
\newcommand{\tableToyC}{
    \begin{minipage}[b][0.3849\linewidth][c]{0.45\linewidth}%
    \begin{center}
    \resizebox{\linewidth}{!}{%
    \begin{tabular}[b]{|c|c|c|c|c|}
    \cline{2-5} \cline{3-5} \cline{4-5} \cline{5-5} 
    \multicolumn{1}{c|}{} & \multicolumn{4}{c|}{performances}\tabularnewline
    \cline{2-5} \cline{3-5} \cline{4-5} \cline{5-5} 
    \multicolumn{1}{c|}{} & $\perfEntityA$ \legendEntityA & $\perfEntityB$ \legendEntityB & $\perfEntityC$ \legendEntityC & $\perfEntityD$ \legendEntityD\tabularnewline
    \hline 
    $\scorePTN$ & 0.20 & 0.14 & 0.10 & 0.00\tabularnewline
    \hline 
    $\scorePFP$ & 0.00 & 0.06 & 0.10 & 0.20\tabularnewline
    \hline 
    $\scorePFN$ & 0.80 & 0.24 & 0.16 & 0.00\tabularnewline
    \hline 
    $\scorePTP$ & 0.00 & 0.56 & 0.64 & 0.80\tabularnewline
    \hline 
    \end{tabular}}
    \par\end{center}%
    \end{minipage}
}
\begin{centering}
\hfill
\includegraphics[width=0.45\linewidth]{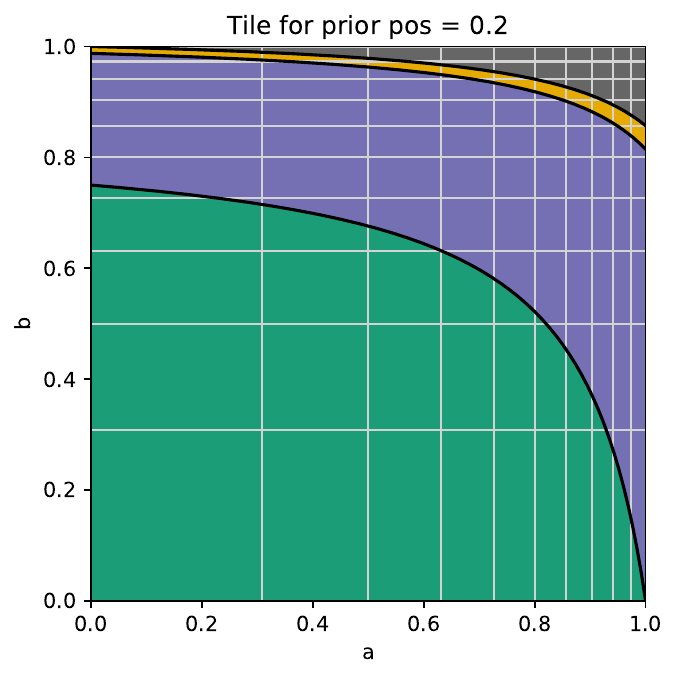}%
\hfill\hfill
\tableToyA
\hfill\end{centering}

\begin{centering}
\hfill
\includegraphics[width=0.45\linewidth]{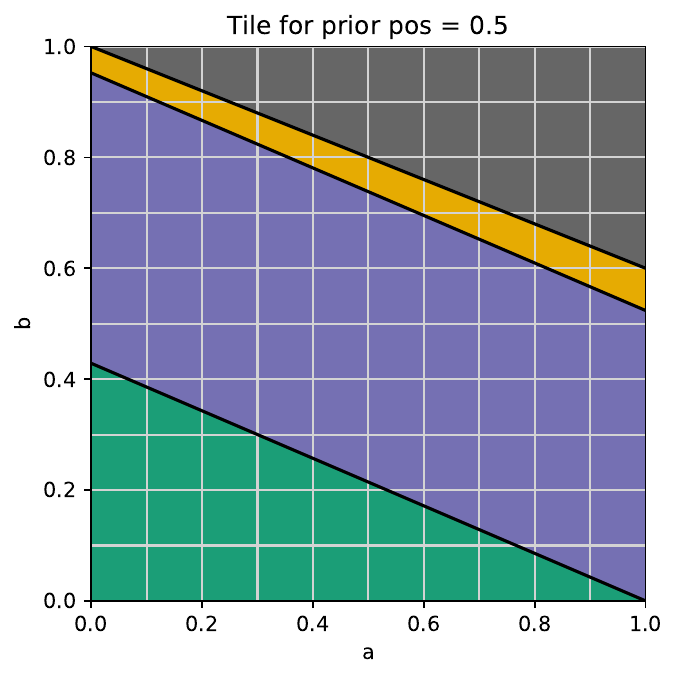}%
\hfill\hfill
\tableToyB
\hfill\end{centering}

\begin{centering}
\hfill
\includegraphics[width=0.45\linewidth]{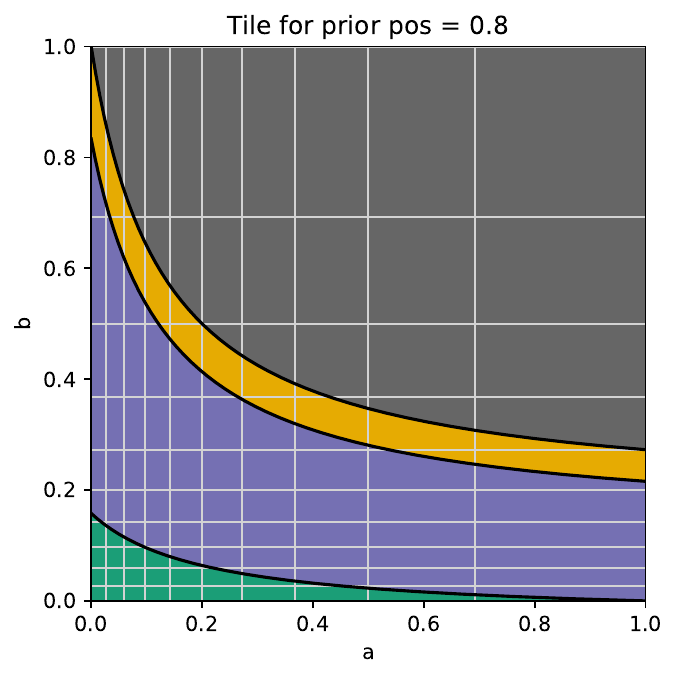}%
\hfill\hfill
\tableToyC
\hfill\end{centering}

\caption{Toy examples showing on the \tile which of the $4$ performances $\perfEntityA$, $\perfEntityB$, $\perfEntityC$, and $\perfEntityD$ is the best. The $3$ examples differ in the class priors (either $0.2$, $0.5$, or $0.8$ for the positive class). In all examples, $\perfEntityA$ (\legendEntityA) is the performance of classifiers predicting always the negative class, $\perfEntityB$ (\legendEntityB) is such that $\scoreTNR(\perfEntityB)=0.7$ and $\scoreTPR(\perfEntityB)=0.7$, $\perfEntityC$ (\legendEntityC) is such that $\scoreTNR(\perfEntityC)=0.5$ and $\scoreTPR(\perfEntityC)=0.8$, and $\perfEntityD$ (\legendEntityD) is the performance of classifiers predicting always the positive class.\label{fig:toy-example}\label{fig:prior-shift}}

\end{figure}

\subsection{Rankings on the \tile}
\label{sec:rankings-on-tile}

For a given set of classifiers to rank, one can use the \tile to show which classifier is ranked first, second, third, \etc, according to $\ordering_{\canonicalRankingScore}$, in $(a,b)$. This is shown in \cref{fig:toy-example} with a toy example. When $\priorneg=\priorpos$, the regions where the different classifiers are ranked first are convex polygons. When $\priorneg\ne\priorpos$, the borders between these regions are curved.

\subsection{More About No-Skill Performances}
\label{sec:no-skill-performances}

\mysection{How can we rank no-skill performances ex aequo?}
The ranking scores allow ranking all no-skill performances (\ie, those for which the groundtruth and predicted classes are independent) ex aequo when some constraints on the compared performances are added. The more common constraint is undoubtedly that the priors are fixed. In this case, the canonical ranking scores that put the no-skill performances on the same footing are located on the curve
$\tileCurvePriors : \priorpos^2 \, a \, b = \priorneg^2 \, (1-a) \, (1-b)$. 
Another interesting constraint is that the rates of predictions are fixed. By symmetry, the canonical ranking scores that put the no-skill performances on the same footing are located on the curve
$\tileCurveRates : \ratepos^2 \, a \, (1-b) = \rateneg^2 \, (1-a) \, b$. 
The $\tileCurvePriors$ and $\tileCurveRates$ curves are depicted in \cref{fig:curves-no-skill}.
\begin{figure}
\hfill{}
\includegraphics[width=0.45\linewidth]{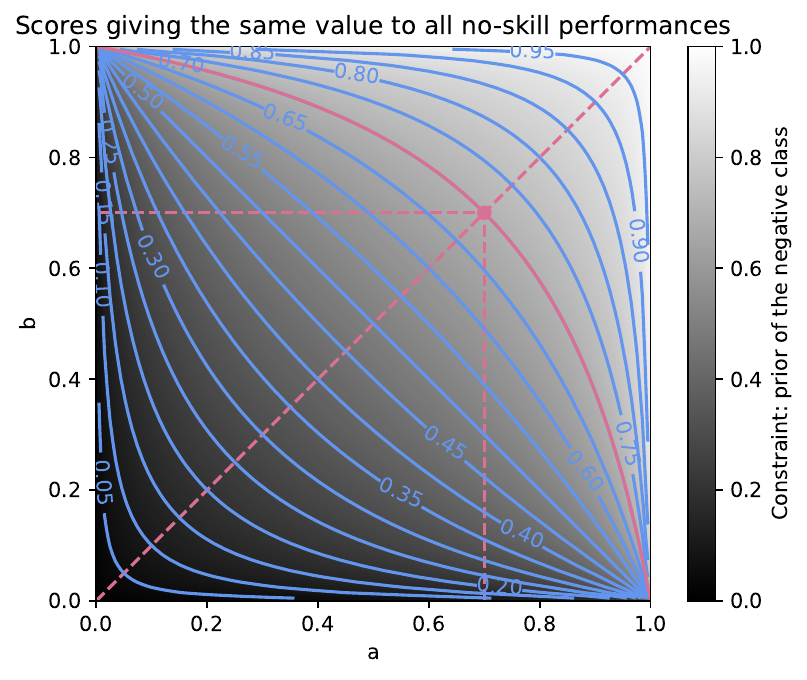}
\hfill{}\hfill{}
\includegraphics[width=0.45\linewidth]{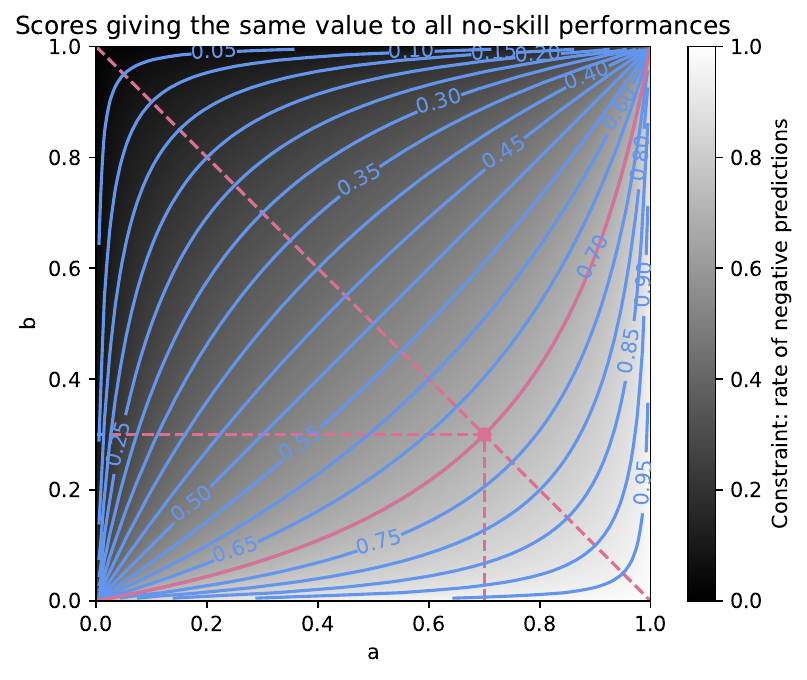}
\hfill{}

\caption{In the \tile, the ranking scores that put all no-skill performances on an equal footing are along a curve $\tileCurvePriors$ between $\scoreNPV$ (upper-left corner) and $\scorePPV$ (lower-right corner) when the class priors $\aPerformance(\randVarGroundtruthClass)$ are fixed, and along a curve $\tileCurveRates$ between $\scoreTNR$ (lower-left corner) and $\scoreTPR$ (upper-right corner) when the rates of predictions $\aPerformance(\randVarPredictedClass)$ are fixed. The pink curves correspond to the constraints $\priorneg=0.7$ (on the left) and $\rateneg=0.7$ (on the right).\label{fig:curves-no-skill}}

\end{figure}

\mysection{A new look at the balanced accuracy $\scoreBalancedAccuracy$ and Cohen's kappa $\scoreCohenKappa$.}
\Cref{fig:correlation-with-other-scores} shows the rank correlations for both $\scoreBalancedAccuracy$ (on the left) and $\scoreCohenKappa$ (on the right). We can see that $\scoreBalancedAccuracy$ is perfectly correlated 
with the ranking scores at the intersection between the curve $\tileCurvePriors$ and the rising diagonal, which is at $(\priorneg,\priorneg)$, and that $\scoreCohenKappa$ is perfectly correlated with the ranking scores at the intersection between the curve $\tileCurvePriors$ and the median horizontal, which is at $(\frac{\priorneg^2}{\priorneg^2+\priorpos^2},\frac{1}{2})$. 
\begin{figure}
\hfill{}
\includegraphics[width=0.45\linewidth]{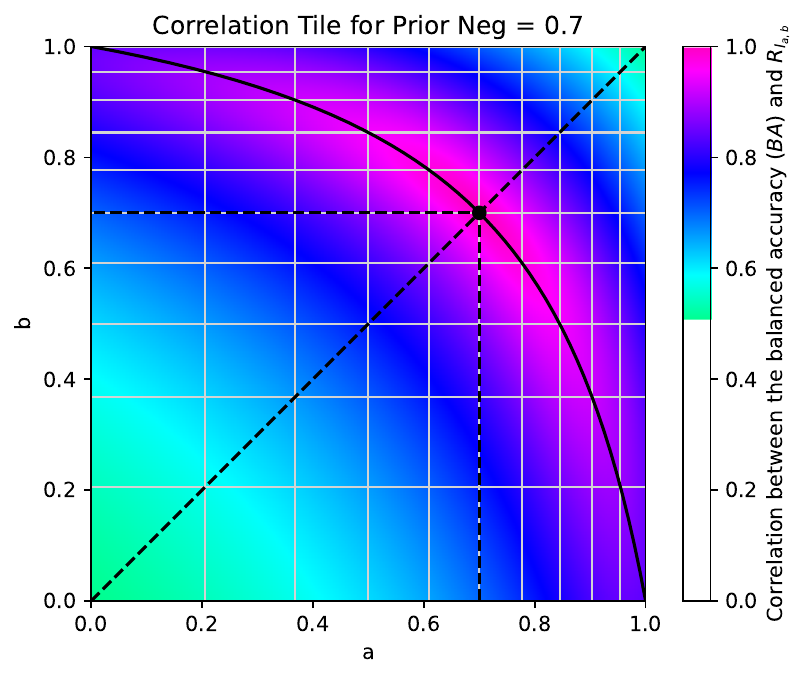}
\hfill{}\hfill{}
\includegraphics[width=0.45\linewidth]{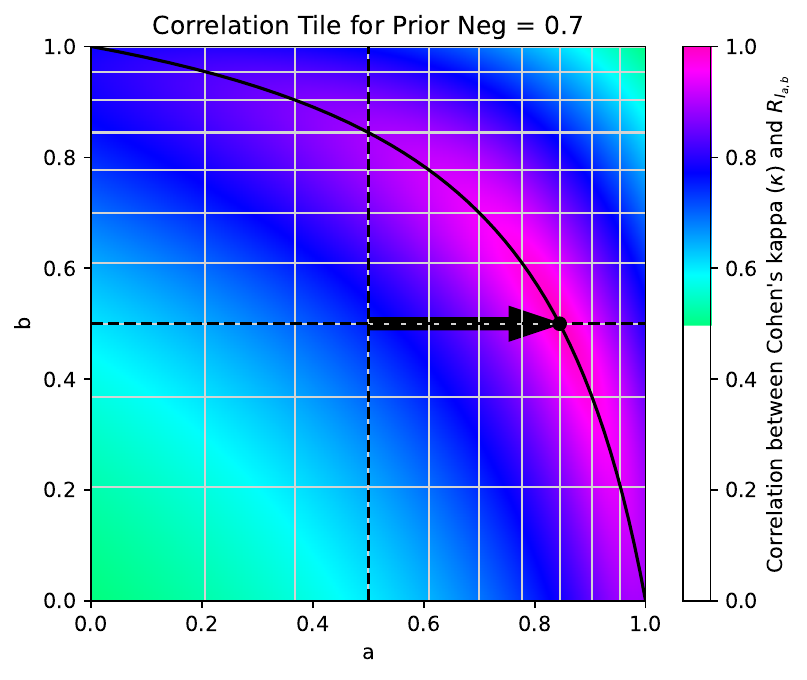}
\hfill{}

\caption{\tiles showing the Kendall rank correlation coefficient $\tau$ for $\scoreBalancedAccuracy$ (left) and $\scoreCohenKappa$ (right), for a uniform distribution of performances $\aPerformance$ such that $\priorneg=0.7$. The correlation values have been estimated based on $10,000$ performances drawn at random.\label{fig:correlation-with-other-scores}}

\end{figure}

\global\long\def\randVarCanonicalImportanceBis{\randVarImportance_{a',b'}}
\global\long\def\canonicalRankingScoreBis{\rankingScore[\randVarCanonicalImportanceBis]}

\mysection{Correction for chance.}
The idea of correcting a score for what can be achieved by chance is common in the literature. Scott~\cite{Scott1955Reliability} and Fleiss~\cite{Fleiss1971Measuring} proposed a correction for accuracy. Cohen~\cite{Cohen1960ACoefficient} proposed another correction for it, that differs in what is considered to be achievable by chance. We noticed that Scott's $\scoreScottPi$ and Fleiss's $\scoreFleissKappa$ do not satisfy the axioms of ranking, even in the case of fixed priors. Cohen's correction, on the other hand, allows nothing more than what we can do with ranking scores, with fixed priors: correcting $\canonicalRankingScore$ in the same way as Cohen~\cite{Cohen1960ACoefficient} did with $\scoreAccuracy$, that is
$
    \frac{
        \canonicalRankingScore-\canonicalRankingScore\circ\opNoSkill
    }{
        1-\canonicalRankingScore\circ\opNoSkill
    }
$ 
leads to a score that is perfectly rank-correlated with $\canonicalRankingScoreBis$, where $a'=\frac{\priorneg^2(1-b)}{\priorneg^2(1-b)+\priorpos^2b}$ and $b'=b$. 
Geometrically, an entire horizontal line of the \tile is crushed into a single point (the intersection it has with the $\tileCurvePriors$ curve). There is thus an enormous loss of diversity 
when applying Cohen's correction to our scores.

\section{Conclusion}
\label{sec:conclusion}

In this paper, we presented the \emph{\tile}, which organizes ranking scores for two-class classification task, according to the random variable called \emph{importance} capable of considering application-specific preferences. 
The scores organized on this \tile are called the \emph{canonical ranking scores} and include well-known scores, such as the accuracy or $\scoreFBeta$ scores. These canonical ranking scores lead to performance orderings that are different in each point. 
The \tile is a visual tool that can be used in different ways. It can be used to establish correspondences of the \tile with the ROC space and to show how to read the values taken by the canonical ranking scores in any point of the ROC space.
We also showed how to use the \tile to (1)~study the behavior of any score, by depicting rank correlations between a score and all ranking scores, (2)~rank classifiers, using a toy example, (3)~study the influence of priors on the ranking scores, (4)~study properties of no-skill performances, and (5)~clarify what the balanced accuracy and Cohen's kappa are.
In summary, the \tile{} offers a comprehensive visual framework for ranking two-class classifiers, empowering researchers and practitioners to make informed, application-specific decisions, and ultimately driving advancements in the field of machine learning.

\mysection{Acknowledments.}
The work by S. Pi{\'e}rard and A. Halin was supported by the Walloon Region (Service Public de Wallonie Recherche, Belgium) under grant n°2010235 (ARIAC by \href{https://www.digitalwallonia.be/en/}{DIGITALWALLONIA4.AI}). 
A. Deli{\`e}ge is funded by the \href{https://www.frs-fnrs.be}{F.R.S.-FNRS} under project grant T$.0065.22$. A. Cioppa is 
funded by the \href{https://www.frs-fnrs.be}{F.R.S.-FNRS}.

{
    \small

}

\newpage
\onecolumn
\appendix
\section{Supplementary Material}
\label{sec:supplementary}

\section*{Contents}
\startcontents[mytoc]
\printcontents[mytoc]{}{0}{}

\clearpage
\subsection{List of symbols}

\subsubsection{Mathematical Symbols}
\begin{itemize}
    \item $\indicatorSymbol_U$: the 0-1 indicator function of subset $U$
    \item $\realNumbers$: the real numbers
    \item $\aRelation$: a relation
    \item $\allConvexCombinations$: the set of convex combinations
    \item $\vee$: the \emph{inclusive disjunction} (\ie, logical or)
    \item $\wedge$: the \emph{conjunction} (\ie, logical and)
    \item $\circ$: the composition of functions, \ie $(g\circ f)(x)=g(f(x))$
    \item $\expectedValueSymbol$: the mathematical expectation
\end{itemize}

\subsubsection{Symbols related to our mathematical framework of \paperA}

We organize these symbols according to the 6 pillars depicted in the graphical abstract of \paperA.

\paragraph{Symbols related to the 1\textsuperscript{st} pillar}
\begin{itemize}
    \item $\sampleSpace$: the sample space (universe)
    \item $\aSample$: a sample (\ie, an element of $\sampleSpace$)
    \item $\eventSpace$: the event space (a $\sigma$-algebra on $\sampleSpace$, \eg $2^\sampleSpace$)
    \item $\anEvent$: an event (\ie, an element of $\eventSpace$)
    \item $\measurableSpace$: the measurable space
    \item $\allPerformances$: all performances on $\measurableSpace$
    \item $\aSetOfPerformances$: a set of performances ($\aSetOfPerformances\subseteq\allPerformances$
    \item $\aPerformance$: a performance (\ie, an element of $\allPerformances$)
\end{itemize}

\paragraph{Symbols related to the 2\textsuperscript{nd} pillar}
\begin{itemize}
    \item $\relWorseOrEquivalent$:  binary relation \emph{worse or equivalent} on $\allPerformances$
    \item $\relBetterOrEquivalent$: binary relation \emph{better or equivalent} on $\allPerformances$
    \item $\relEquivalent$:  binary relation \emph{equivalent} on $\allPerformances$
    \item $\relBetter$:  binary relation \emph{better} on $\allPerformances$
    \item $\relWorse$:  binary relation \emph{worse} on $\allPerformances$
    \item $\relIncomparable$: binary relation \emph{incomparable} on $\allPerformances$
\end{itemize}

\paragraph{Symbols related to the 3\textsuperscript{rd} pillar}
\begin{itemize}
    \item $\randVarSatisfaction$: the random variable \emph{Satisfaction}
\end{itemize}

\paragraph{Symbols related to the 4\textsuperscript{th} pillar}
\begin{itemize}
    \item $\entitiesToRank$: the set of entities to rank
    \item $\anEntity$: an entity, \ie an element of $\entitiesToRank$
    \item $\evaluation$: the performance \emph{evaluation} function
    \item $\achievableByCombinations$: some performances that are for sure achievable
\end{itemize}

\paragraph{Symbols related to the 5\textsuperscript{th} pillar}
\begin{itemize}
    \item $\allScores$: all scores on $\measurableSpace$
    \item $\aScore$: a score
    \item $\domainOfScore$: the domain of the score $\aScore$
    \item $\unconditionalProbabilisticScore{\anEvent}$: the \emph{unconditional probabilistic score} parameterized by the event $\anEvent$
    \item $\conditionalProbabilisticScore{\anEvent_1}{\anEvent_2}$: the \emph{conditional probabilistic score} parameterized by the events $\anEvent_1$ and $\anEvent_2$
\end{itemize}

\paragraph{Symbols related to the 6\textsuperscript{th} pillar}
\begin{itemize}
    \item $\randVarImportance$: the random variable \emph{Importance}
\end{itemize}

\subsubsection{Symbols used for operations on performances}
\begin{itemize}
    \item $\opFilter$: the \emph{filtering} operation, parameterized by a random variable $\randVarImportance$, as defined in \paperA
    \item $\opNoSkill$: the operation that transforms a performance $\aPerformance$ into $\aPerformance'$ such that $\aPerformance'(\randVarGroundtruthClass,\randVarPredictedClass)=\aPerformance(\randVarGroundtruthClass)\aPerformance(\randVarPredictedClass)$
    \item $\opPriorShift$: the \emph{prior/target shift} operation~\cite{Sipka2022TheHitchhikerGuide}
    \item $\opChangePredictedClass$: the operation that changes the predicted class $\randVarPredictedClass$
    \item $\opChangeGroundtruthClass$: the operation that changes the ground-truth class $\randVarGroundtruthClass$
    \item $\opSwapGroundtruthAndPredictedClasses$: the operation that swaps the predicted ($\randVarPredictedClass$)
and ground-truth ($\randVarGroundtruthClass$) classes
    \item $\opSwapClasses$: the operation that swaps the classes $\classNeg$ and $\classPos$.
\end{itemize}

\subsubsection{Symbols used in the performance ordering and performance-based ranking theory}
\begin{itemize}
    \item $\rank$: the \emph{ranking} function, \wrt the set of entities $\entitiesToRank$
    \item $\ordering_{\aScore}$: the ordering induced by the score $\aScore$
    \item $\invertedOrdering_{\aScore}$: the dual (inverted) ordering induced by the score $\aScore$
    \item $\rankingScore$: the \emph{ranking score} parameterized by the importance $\randVarImportance$
    \item $\canonicalRankingScore$: the \emph{canonical ranking score} parameterized by the parameters $a$ and $b$
    \item $a$: the parameter specifying the relative importance given to the incorrect outcomes ($\randVarSatisfaction=0$),\\it corresponds to the horizontal axis of the \tile
    \item $b$: the parameter specifying the relative importance given to the correct outcomes ($\randVarSatisfaction=1$),\\it corresponds to the vertical axis of the \tile
    \item $\tau$: the rank correlation coefficient of Kendall
\end{itemize}

\subsubsection{Symbols used for the particular case of two-class crisp classifications}

\paragraph{Particularization of the mathematical framework}
\begin{itemize}
    \item $\sampleTN$: the sample \emph{true negative}
    \item $\sampleFP$: the sample \emph{false positive}, \aka type I error
    \item $\sampleFN$: the sample \emph{false negative}, \aka type II error
    \item $\sampleTP$: the sample \emph{true positive}
\end{itemize}

\paragraph{Extensions to the mathematical framework}
\begin{itemize}
    \item ROC: the \emph{Receiver Operating Characteristic} space, \ie $\scoreFPR\times\scoreTPR$
    \item PR: the \emph{Precision-Recall} space, \ie $\scoreTPR\times\scorePPV$
    \item $\randVarGroundtruthClass$: the random variable for the ground truth
    \item $\randVarPredictedClass$: the random variable for the prediction
    \item $\allClasses$: the set of classes
    \item $\aClass$: a class (\ie, an element of $\allClasses$)
    \item $\classNeg$: the negative class
    \item $\classPos$: the positive class
    \item $\tileCurvePriors$: the locus, on the \tile, of all the canonical ranking scores that put the no-skill performances on the same footing, for fixed class priors
    \item $\tileCurveRates$: the locus, on the \tile, of all the canonical ranking scores that put the no-skill performances on the same footing, for fixed rates of predictions
\end{itemize}

\paragraph{Some unconditional probabilistic scores}
\begin{itemize}
    \item $\scorePTN$: the probability of the elementary event \emph{true negative}, \aka \emph{rejection rate}
    \item $\scorePFP$: the probability of the elementary event \emph{false positive}
    \item $\scorePFN$: the probability of the elementary event \emph{false negative}
    \item $\scorePTP$: the probability of the elementary event \emph{true positive}, \aka \emph{detection rate}
    \item $\priorneg$: the \emph{prior of the negative class}
    \item $\priorpos$: the \emph{prior of the positive class}, \aka \emph{prevalence}
    \item $\rateneg$: the \emph{rate of negative predictions}
    \item $\ratepos$: the \emph{rate of positive predictions}
    \item $\scoreAccuracy$: the \emph{accuracy}, \aka \emph{matching coefficient}
\end{itemize}

\paragraph{Some conditional probabilistic scores}
\begin{itemize}
    \item $\scoreTNR$: the \emph{True Negative Rate}, \aka \emph{specificity}, \emph{selectivity}, \emph{inverse recall}
    \item $\scoreFPR$: the \emph{False Positive Rate}
    \item $\scoreTPR$: the \emph{True Positive Rate}, \aka \emph{sensitivity}, \emph{recall}
    \item $\scoreFNR$: the \emph{False Negative Rate}
    \item $\scoreNPV$: the \emph{Negative Predictive Value}, \aka \emph{inverse precision}
    \item $\scoreFOR$: the \emph{False Omission Rate}
    \item $\scoreNPV$: the \emph{Positive Predictive Value}, \aka \emph{precision}
    \item $\scoreFDR$: the \emph{False Discovery Rate}
    \item $\scoreJaccardNeg$: Jaccard index for the negative class
    \item $\scoreJaccardPos$: Jaccard index for the positive class, \aka \emph{Tanimoto coefficient}, \emph{similarity}, \emph{intersection over union}, \emph{critical success index}~\cite{Hogan2010Equitability}, \emph{G-measure}~\cite{Flach2003TheGeometry}
\end{itemize}

\paragraph{Some other scores}
\begin{itemize}
    \item $\scoreBennettS$: Bennett, Alpert and Goldstein’s $S$
    \item $\scoreFBeta$: the F-scores
    \item $\scoreFBeta[1]$: the F-one score, \aka \emph{Dice-S{\o}rensen coefficient}
    \item $\scoreSNPV$: the score \emph{Standardized Negative Predictive Value}~\cite{Heston2011Standardizing}
    \item $\scoreSPPV$: the score \emph{Standardized Positive Predictive Value}~\cite{Heston2011Standardizing}
    \item $\scoreNLR$: the score \emph{Negative Likelihood Ratio}
    \item $\scorePLR$: the score \emph{Positive Likelihood Ratio}
    \item $\scoreCohenKappa$: Cohen's kappa statistic, \aka \emph{Heidke Skill Score}~\cite{Wilks2020Statistical}
    \item $\scoreScottPi$: Scott's pi statistic
    \item $\scoreFleissKappa$: Fleiss's kappa statistic
    \item $\scoreBiasIndex$: the \emph{Bias Index}, as defined in~\cite{Byrt1993Bias}
    \item $\scoreWeightedAccuracy$: the \emph{Weighted Accuracy}
    \item $\scoreBalancedAccuracy$: the \emph{Balanced Accuracy}
    \item $\scoreYoudenJ$: Youden's index~\cite{Youden1950Index}, \aka Youden's J statistic, \emph{informedness}, \emph{Peirce Skill Score}~\cite{Wilks2020Statistical}
    \item $\scoreConfusionMatrixDeterminant$: the determinant of the (normalized) confusion matrix or contingency matrix
    \item $\scoreACP$: the \emph{Average Conditional Probability}, \ie the arithmetic mean of the \tile's four corners~\cite{Burset1996Evaluation}
    \item $\scorePFour$: the harmonic mean of the \tile's four corners~\cite{Sitarz2023Extending}
    \item $\scoreVUT$: the score \emph{Volume Under \tile}, \ie the arithmetic mean of all canonical scores (see \cref{sec:score-under-tile})
\end{itemize}

\clearpage
\subsection{Supplementary material about \cref{sec:canonical-ranking-scores-on-tile}}

\subsubsection{Operations on performances}
\label{sec:operations-on-performances}

We present here the proofs for the geometric properties of the \tile that are related to some operations that can be applied to performances. A summary is provided in \cref{tbl:summary-operations}.

\global\long\def\aOld{a_{\mathrm{origin}}}%
\global\long\def\aNew{a_{\mathrm{adapted}}}%
\global\long\def\bOld{b_{\mathrm{origin}}}%
\global\long\def\bNew{b_{\mathrm{adapted}}}%

\begin{table}
\caption{Summary of the effects on the \tile of 5 operations
on performances.\label{tbl:summary-operations}}

\centering{}%
\begin{tabular}{|cc|c|c|c|}
\hline 
\multicolumn{2}{|c|}{operation} & \multicolumn{2}{c|}{the ordering that was at $(\aOld,\bOld)$} & \multirow{2}{*}{note}\tabularnewline
\multicolumn{2}{|c|}{on performances} & \multicolumn{2}{c|}{is moved at $(\aNew,\bNew)$} & \tabularnewline
\hline 
\hline 
$\opChangePredictedClass$ & (see \cref{lemma:opChangePredictedClass}) & $\aNew=\bOld$ & $\bNew=\aOld$ & the preorder is inverted (dual)\tabularnewline
\hline 
$\opChangeGroundtruthClass$ & (see \cref{lemma:opChangeGroundtruthClass}) & $\aNew=1-\bOld$ & $\bNew=1-\aOld$ & the preorder is inverted (dual)\tabularnewline
\hline 
$\opSwapGroundtruthAndPredictedClasses$ & (see \cref{lemma:opSwapGroundtruthAndPredictedClasses}) & $\aNew=\aOld$ & $\bNew=1-\bOld$ & the preorder is unchanged\tabularnewline
\hline 
$\opSwapClasses$ & (see \cref{lemma:opSwapClasses}) & $\aNew=1-\aOld$ & $\bNew=1-\bOld$ & the preorder is unchanged\tabularnewline
\hline 
$\opPriorShift$ & (see \cref{lemma:opPriorShift}) & $\aNew=f^{-1}(\aOld)$ & $\bNew=f^{-1}(\bOld)$ & the preorder is unchanged\tabularnewline
\hline 
\end{tabular}
\end{table}

\begin{lemma}
\label{lemma:opChangePredictedClass}
Let $\opChangePredictedClass:\allPerformances\rightarrow\allPerformances$
be the operation that changes the predicted class $\randVarPredictedClass$.
We have $\canonicalRankingScore\circ\opChangePredictedClass=1-\rankingScore[\randVarImportance_{b,a}]$.
\end{lemma}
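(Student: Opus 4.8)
The plan is to determine how $\opChangePredictedClass$ transforms the four elementary probabilities, substitute the result into the definition of $\canonicalRankingScore$, and recognise the outcome as $1-\rankingScore[\randVarImportance_{b,a}]$.

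First I would spell out the action of $\opChangePredictedClass$ on a performance. Flipping the predicted class sends a true negative to a false positive, a false positive to a true negative, a false negative to a true positive, and a true positive to a false negative. Hence, writing $\aPerformance'=\opChangePredictedClass(\aPerformance)$, one has $\aPerformance'(\eventTN)=\aPerformance(\eventFP)$, $\aPerformance'(\eventFP)=\aPerformance(\eventTN)$, $\aPerformance'(\eventFN)=\aPerformance(\eventTP)$, and $\aPerformance'(\eventTP)=\aPerformance(\eventFN)$. Because this merely permutes the atoms of $\sampleSpace$, $\aPerformance'$ is again a probability measure, and the equality of domains will follow once we observe that the denominators match.

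Next I would substitute, using $\randVarCanonicalImportance(\sampleTN)=1-a$, $\randVarCanonicalImportance(\sampleFP)=1-b$, $\randVarCanonicalImportance(\sampleFN)=b$, $\randVarCanonicalImportance(\sampleTP)=a$, which gives
\[
\canonicalRankingScore(\aPerformance')=\frac{(1-a)\,\aPerformance(\eventFP)+a\,\aPerformance(\eventFN)}{(1-b)\,\aPerformance(\eventTN)+(1-a)\,\aPerformance(\eventFP)+a\,\aPerformance(\eventFN)+b\,\aPerformance(\eventTP)}.
\]
Finally I would compare this with $\rankingScore[\randVarImportance_{b,a}]$, whose importance is $\randVarImportance_{b,a}(\sampleTN)=1-b$, $\randVarImportance_{b,a}(\sampleFP)=1-a$, $\randVarImportance_{b,a}(\sampleFN)=a$, $\randVarImportance_{b,a}(\sampleTP)=b$: its denominator is exactly the denominator displayed above, while its numerator is $(1-b)\,\aPerformance(\eventTN)+b\,\aPerformance(\eventTP)$, i.e.\ that denominator minus the numerator displayed above. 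Therefore $\canonicalRankingScore(\aPerformance')=1-\rankingScore[\randVarImportance_{b,a}](\aPerformance)$, and both sides are defined on the same set of performances (the common denominator vanishes for the same $\aPerformance$), which proves the claim.

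I expect the only point needing care to be the bookkeeping of the atom permutation — in particular, tracking which importance weight ends up multiplying which atom, so that the reindexed score is the canonical one at $(b,a)$ rather than at $(a,b)$ or some mixture; the remaining manipulations amount to one algebraic rearrangement together with the trivial identity $1-\tfrac{N}{D}=\tfrac{D-N}{D}$.
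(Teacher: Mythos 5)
Your proposal is correct and follows essentially the same route as the paper's proof: permute the four atoms according to the flipped prediction, substitute into the definition of $\canonicalRankingScore$, and observe that the resulting denominator equals that of $\rankingScore[\randVarImportance_{b,a}]$ while the numerator is its complement, giving $1-\nicefrac{N}{D}$. The atom bookkeeping and the identification of the importance $\randVarImportance_{b,a}$ are both done correctly, and your remark on the coincidence of domains matches the paper's treatment.
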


\begin{proof}
Let $\aPerformance$ be a two-class classification performance and
$\aPerformance'=\opChangePredictedClass(\aPerformance)$. If $\aPerformance'\in\domainOfScore[{\canonicalRankingScore}]$,
then $\aPerformance\in\domainOfScore[{\rankingScore[\randVarImportance_{b,a}]}]$
and 
\begin{align*}
\canonicalRankingScore(\aPerformance') & =\frac{(1-a)\aPerformance'(\eventTN)+a\aPerformance'(\eventTP)}{(1-a)\aPerformance'(\eventTN)+(1-b)\aPerformance'(\eventFP)+b\aPerformance'(\eventFN)+a\aPerformance'(\eventTP)}\\
 & =\frac{(1-a)\aPerformance(\eventFP)+a\aPerformance(\eventFN)}{(1-a)\aPerformance(\eventFP)+(1-b)\aPerformance(\eventTN)+b\aPerformance(\eventTP)+a\aPerformance(\eventFN)}\\
 & =1-\frac{(1-b)\aPerformance(\eventTN)+b\aPerformance(\eventTP)}{(1-b)\aPerformance(\eventTN)+(1-a)\aPerformance(\eventFP)+a\aPerformance(\eventFN)+b\aPerformance(\eventTP)}\\
 & =1-\rankingScore[\randVarImportance_{b,a}](\aPerformance)
\end{align*}
\end{proof}

\begin{lemma}
\label{lemma:opChangeGroundtruthClass}
Let $\opChangeGroundtruthClass:\allPerformances\rightarrow\allPerformances$
be the operation that changes the ground-truth class $\randVarGroundtruthClass$.
We have $\canonicalRankingScore\circ\opChangeGroundtruthClass=1-\rankingScore[\randVarImportance_{1-b,1-a}]$.
\end{lemma}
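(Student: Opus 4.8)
The plan is to follow verbatim the template of the proof of \cref{lemma:opChangePredictedClass}: determine how $\opChangeGroundtruthClass$ permutes the four elementary outcomes, substitute into the definition of $\canonicalRankingScore$, and recognise the result as $1-\rankingScore[\randVarImportance_{1-b,1-a}]$. First I would fix the combinatorics. Because the labels $\sampleTN,\sampleFP,\sampleFN,\sampleTP$ are determined by the pair $(\randVarGroundtruthClass,\randVarPredictedClass)$, replacing $\randVarGroundtruthClass$ by its complement while leaving $\randVarPredictedClass$ unchanged turns a true negative $(\classNeg,\classNeg)$ into a false negative $(\classPos,\classNeg)$ and a false positive $(\classNeg,\classPos)$ into a true positive $(\classPos,\classPos)$; this relabelling is an involution, so for $\aPerformance'=\opChangeGroundtruthClass(\aPerformance)$ we get $\aPerformance'(\eventTN)=\aPerformance(\eventFN)$, $\aPerformance'(\eventFN)=\aPerformance(\eventTN)$, $\aPerformance'(\eventFP)=\aPerformance(\eventTP)$, and $\aPerformance'(\eventTP)=\aPerformance(\eventFP)$. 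As in the previous lemma, $\aPerformance'\in\domainOfScore[{\canonicalRankingScore}]$ forces the common denominator below to be non-zero, so $\aPerformance\in\domainOfScore[{\rankingScore[\randVarImportance_{1-b,1-a}]}]$ and the asserted identity is well posed.

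Second, I would plug these four equalities into $\canonicalRankingScore(\aPerformance')$. Its numerator $(1-a)\aPerformance'(\eventTN)+a\aPerformance'(\eventTP)$ becomes $(1-a)\aPerformance(\eventFN)+a\aPerformance(\eventFP)$, and its denominator becomes $(1-a)\aPerformance(\eventFN)+(1-b)\aPerformance(\eventTP)+b\aPerformance(\eventTN)+a\aPerformance(\eventFP)$.

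Third, I would expand the target. Since $\randVarImportance_{1-b,1-a}$ assigns $b$ to $\sampleTN$, $a$ to $\sampleFP$, $1-a$ to $\sampleFN$, and $1-b$ to $\sampleTP$, we have $\rankingScore[\randVarImportance_{1-b,1-a}](\aPerformance)=\frac{b\,\aPerformance(\eventTN)+(1-b)\aPerformance(\eventTP)}{b\,\aPerformance(\eventTN)+a\,\aPerformance(\eventFP)+(1-a)\aPerformance(\eventFN)+(1-b)\aPerformance(\eventTP)}$, and subtracting this from $1$ leaves $\frac{a\,\aPerformance(\eventFP)+(1-a)\aPerformance(\eventFN)}{b\,\aPerformance(\eventTN)+a\,\aPerformance(\eventFP)+(1-a)\aPerformance(\eventFN)+(1-b)\aPerformance(\eventTP)}$, which is exactly the fraction found in the second step. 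Hence $\canonicalRankingScore\circ\opChangeGroundtruthClass=1-\rankingScore[\randVarImportance_{1-b,1-a}]$, as claimed; the computation is literally the chain of equalities in the proof of \cref{lemma:opChangePredictedClass}, with the pairs $\{\sampleTN,\sampleFN\}$ and $\{\sampleFP,\sampleTP\}$ interchanged in place of $\{\sampleTN,\sampleFP\}$ and $\{\sampleFN,\sampleTP\}$.

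The one place that demands care, and the likeliest source of an index error, is reading off the outcome permutation induced by $\opChangeGroundtruthClass$; everything after that is the same mechanical rearrangement as before. A computation-free sanity check is to observe $\opChangeGroundtruthClass=\opChangePredictedClass\circ\opSwapClasses$ and compose the known effects: \cref{lemma:opSwapClasses} sends the ordering at $(a,b)$ to $(1-a,1-b)$ with the preorder preserved, and \cref{lemma:opChangePredictedClass} then sends $(1-a,1-b)$ to $(1-b,1-a)$ with the preorder inverted, matching the statement; since \cref{lemma:opSwapClasses} is proved later, I would keep the direct computation above as the primary argument.
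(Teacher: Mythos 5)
Your proof is correct and follows essentially the same route as the paper: the same outcome permutation ($\aPerformance'(\eventTN)=\aPerformance(\eventFN)$, $\aPerformance'(\eventFP)=\aPerformance(\eventTP)$, etc.), the same substitution into $\canonicalRankingScore$, and the same identification of the resulting fraction as $1-\rankingScore[\randVarImportance_{1-b,1-a}]$, with the importance values of $\randVarImportance_{1-b,1-a}$ read off correctly. The composition sanity check via $\opChangePredictedClass\circ\opSwapClasses$ is a nice addition not present in the paper, and you are right to keep it secondary since \cref{lemma:opSwapClasses} appears later.
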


\begin{proof}
Let $\aPerformance$ be a two-class classification performance and
$\aPerformance'=\opChangeGroundtruthClass(\aPerformance)$. If $\aPerformance'\in\domainOfScore[{\canonicalRankingScore}]$,
then $\aPerformance\in\domainOfScore[{\rankingScore[\randVarImportance_{1-b,1-a}]}]$
and 
\begin{align*}
\canonicalRankingScore(\aPerformance') & =\frac{(1-a)\aPerformance'(\eventTN)+a\aPerformance'(\eventTP)}{(1-a)\aPerformance'(\eventTN)+(1-b)\aPerformance'(\eventFP)+b\aPerformance'(\eventFN)+a\aPerformance'(\eventTP)}\\
 & =\frac{(1-a)\aPerformance(\eventFN)+a\aPerformance(\eventFP)}{(1-a)\aPerformance(\eventFN)+(1-b)\aPerformance(\eventTP)+b\aPerformance(\eventTN)+a\aPerformance(\eventFP)}\\
 & =1-\frac{b\aPerformance(\eventTN)+(1-b)\aPerformance(\eventTP)}{b\aPerformance(\eventTN)+a\aPerformance(\eventFP)+(1-a)\aPerformance(\eventFN)+(1-b)\aPerformance(\eventTP)}\\
 & =1-\rankingScore[\randVarImportance_{1-b,1-a}](\aPerformance)
\end{align*}
\end{proof}

\begin{lemma}
\label{lemma:opSwapGroundtruthAndPredictedClasses}
Let $\opSwapGroundtruthAndPredictedClasses:\allPerformances\rightarrow\allPerformances$
be the operation that swaps the predicted ($\randVarPredictedClass$)
and ground-truth ($\randVarGroundtruthClass$) classes. We have $\canonicalRankingScore\circ\opSwapGroundtruthAndPredictedClasses=\rankingScore[\randVarImportance_{a,1-b}]$.
\end{lemma}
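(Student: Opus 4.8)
The plan is to mirror the direct computations carried out for \cref{lemma:opChangePredictedClass} and \cref{lemma:opChangeGroundtruthClass}: first describe how $\opSwapGroundtruthAndPredictedClasses$ permutes the four elementary outcomes, then substitute into the definition of $\canonicalRankingScore$, and finally recognize the resulting expression as a ranking score with a relabelled importance.

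First I would work out the action of $\opSwapGroundtruthAndPredictedClasses$ on $\sampleSpace$. Since the pair $(\randVarGroundtruthClass,\randVarPredictedClass)$ equals $(\classNeg,\classNeg)$ on $\sampleTN$ and $(\classPos,\classPos)$ on $\sampleTP$, both of these samples are fixed by the swap; whereas $(\randVarGroundtruthClass,\randVarPredictedClass)$ equals $(\classNeg,\classPos)$ on $\sampleFP$ and $(\classPos,\classNeg)$ on $\sampleFN$, so the swap acts as the single transposition $\sampleFP\leftrightarrow\sampleFN$. Hence, writing $\aPerformance'=\opSwapGroundtruthAndPredictedClasses(\aPerformance)$, we have $\aPerformance'(\eventTN)=\aPerformance(\eventTN)$, $\aPerformance'(\eventTP)=\aPerformance(\eventTP)$, $\aPerformance'(\eventFP)=\aPerformance(\eventFN)$, and $\aPerformance'(\eventFN)=\aPerformance(\eventFP)$.

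Next I would substitute these four equalities into $\canonicalRankingScore(\aPerformance')$. The numerator $(1-a)\aPerformance'(\eventTN)+a\aPerformance'(\eventTP)$ stays unchanged, and the denominator becomes $(1-a)\aPerformance(\eventTN)+b\,\aPerformance(\eventFP)+(1-b)\aPerformance(\eventFN)+a\aPerformance(\eventTP)$ — that is, the importances of $\sampleTN$ and $\sampleTP$ are untouched while those of $\sampleFP$ and $\sampleFN$ are exchanged. Comparing with the importance $\randVarImportance_{a,1-b}$, which assigns $1-a$ to $\sampleTN$, $1-(1-b)=b$ to $\sampleFP$, $1-b$ to $\sampleFN$, and $a$ to $\sampleTP$, this expression is exactly $\rankingScore[\randVarImportance_{a,1-b}](\aPerformance)$. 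The domain bookkeeping is as in the preceding lemmas: the two denominators coincide, so $\aPerformance'\in\domainOfScore[\canonicalRankingScore]$ implies $\aPerformance\in\domainOfScore[{\rankingScore[\randVarImportance_{a,1-b}]}]$, and the claimed equality of scores holds on that common domain.

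I do not expect a genuine obstacle here. The only points needing care are (i) getting the permutation of samples right — it is the single transposition $\sampleFP\leftrightarrow\sampleFN$ with $\sampleTN,\sampleTP$ fixed, not a larger reshuffling — and (ii) checking that the relabelled importance is $\randVarImportance_{a,1-b}$ and not, say, $\randVarImportance_{1-a,b}$, which is why I would write the denominator out term by term rather than argue by symmetry. Note also that, unlike \cref{lemma:opChangePredictedClass} and \cref{lemma:opChangeGroundtruthClass}, no complementation to $1$ appears, precisely because the swap fixes $\sampleTN$ and $\sampleTP$ and therefore leaves the numerator invariant.
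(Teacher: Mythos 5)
Your proposal is correct and follows the same route as the paper: identify the action of $\opSwapGroundtruthAndPredictedClasses$ as the transposition $\sampleFP\leftrightarrow\sampleFN$ fixing $\sampleTN$ and $\sampleTP$, substitute into $\canonicalRankingScore(\aPerformance')$, and read off the denominator as the importance $\randVarImportance_{a,1-b}$ (with the same domain bookkeeping). Your explicit check that the resulting importance is $\randVarImportance_{a,1-b}$ rather than a symmetric-looking alternative is exactly the right point of care.
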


\begin{proof}
Let $\aPerformance$ be a two-class classification performance and
$\aPerformance'=\opSwapGroundtruthAndPredictedClasses(\aPerformance)$.
If $\aPerformance'\in\domainOfScore[{\canonicalRankingScore}]$,
then $\aPerformance\in\domainOfScore[{\rankingScore[\randVarImportance_{a,1-b}]}]$
and 
\begin{align*}
\canonicalRankingScore(\aPerformance') & =\frac{(1-a)\aPerformance'(\eventTN)+a\aPerformance'(\eventTP)}{(1-a)\aPerformance'(\eventTN)+(1-b)\aPerformance'(\eventFP)+b\aPerformance'(\eventFN)+a\aPerformance'(\eventTP)}\\
 & =\frac{(1-a)\aPerformance(\eventTN)+a\aPerformance(\eventTP)}{(1-a)\aPerformance(\eventTN)+(1-b)\aPerformance(\eventFN)+b\aPerformance(\eventFP)+a\aPerformance(\eventTP)}\\
 & =\frac{(1-a)\aPerformance(\eventTN)+a\aPerformance(\eventTP)}{(1-a)\aPerformance(\eventTN)+b\aPerformance(\eventFP)+(1-b)\aPerformance(\eventFN)+a\aPerformance(\eventTP)}\\
 & =\rankingScore[\randVarImportance_{a,1-b}](\aPerformance)
\end{align*}
\end{proof}

\begin{lemma}
\label{lemma:opSwapClasses}
Let $\opSwapClasses:\allPerformances\rightarrow\allPerformances$
be the operation that swaps the classes $\classNeg$ and $\classPos$.
We have $\canonicalRankingScore\circ\opSwapClasses=\rankingScore[\randVarImportance_{1-a,1-b}]$.
\end{lemma}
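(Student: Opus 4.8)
The plan is to follow verbatim the template of the proofs of Lemmas~\ref{lemma:opChangePredictedClass}--\ref{lemma:opSwapGroundtruthAndPredictedClasses}: first determine the effect of $\opSwapClasses$ on the four elementary events, then substitute into the definition of the canonical ranking score, and finally recognise the resulting ratio as the canonical ranking score with parameters $1-a$ and $1-b$. Since swapping $\classNeg$ and $\classPos$ relabels simultaneously the ground-truth and the predicted class, a true negative becomes a true positive and a false positive becomes a false negative (and conversely); equivalently, $\opSwapClasses$ is the composition $\opChangePredictedClass$ followed by $\opChangeGroundtruthClass$. Hence, writing $\aPerformance'=\opSwapClasses(\aPerformance)$, I would record $\aPerformance'(\eventTN)=\aPerformance(\eventTP)$, $\aPerformance'(\eventTP)=\aPerformance(\eventTN)$, $\aPerformance'(\eventFP)=\aPerformance(\eventFN)$, and $\aPerformance'(\eventFN)=\aPerformance(\eventFP)$.

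Next I would plug these four identities into $\canonicalRankingScore(\aPerformance')$. The numerator $(1-a)\aPerformance'(\eventTN)+a\aPerformance'(\eventTP)$ becomes $a\aPerformance(\eventTN)+(1-a)\aPerformance(\eventTP)$, and the denominator $(1-a)\aPerformance'(\eventTN)+(1-b)\aPerformance'(\eventFP)+b\aPerformance'(\eventFN)+a\aPerformance'(\eventTP)$ becomes $a\aPerformance(\eventTN)+b\aPerformance(\eventFP)+(1-b)\aPerformance(\eventFN)+(1-a)\aPerformance(\eventTP)$. Since the canonical importance $\randVarImportance_{1-a,1-b}$ assigns the weights $1-(1-a)=a$, $1-(1-b)=b$, $1-b$, and $1-a$ to $\sampleTN$, $\sampleFP$, $\sampleFN$, and $\sampleTP$ respectively, this ratio is exactly $\rankingScore[\randVarImportance_{1-a,1-b}](\aPerformance)$, which is the claim. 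In contrast with Lemmas~\ref{lemma:opChangePredictedClass} and~\ref{lemma:opChangeGroundtruthClass}, no ``$1-(\cdot)$'' rewriting of the fraction is needed here, so the computation is just a two- or three-line chain of equalities.

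The only point that requires a little care, and the closest thing to an obstacle, is the domain bookkeeping, handled exactly as in the earlier lemmas: one checks that if $\aPerformance'\in\domainOfScore[{\canonicalRankingScore}]$ then $\aPerformance\in\domainOfScore[{\rankingScore[\randVarImportance_{1-a,1-b}]}]$, which holds because, after the substitution, the denominator of $\canonicalRankingScore(\aPerformance')$ is literally the denominator of $\rankingScore[\randVarImportance_{1-a,1-b}](\aPerformance)$, so one vanishes exactly when the other does. I would also note in passing that $\randVarImportance_{1-a,1-b}$ is a legitimate importance, since $\randVarImportance_{1-a,1-b}(\sampleTN)+\randVarImportance_{1-a,1-b}(\sampleTP)=a+(1-a)=1\neq0$ and $\randVarImportance_{1-a,1-b}(\sampleFP)+\randVarImportance_{1-a,1-b}(\sampleFN)=b+(1-b)=1\neq0$. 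With these remarks in place the argument is entirely mechanical.
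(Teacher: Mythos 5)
Your proposal is correct and follows exactly the same route as the paper's proof: substitute $\aPerformance'(\eventTN)=\aPerformance(\eventTP)$, $\aPerformance'(\eventTP)=\aPerformance(\eventTN)$, $\aPerformance'(\eventFP)=\aPerformance(\eventFN)$, $\aPerformance'(\eventFN)=\aPerformance(\eventFP)$ into the definition of $\canonicalRankingScore$, reorder the terms, and identify the result with $\rankingScore[\randVarImportance_{1-a,1-b}](\aPerformance)$, with the same domain remark. Your observations about the composition $\opChangeGroundtruthClass\circ\opChangePredictedClass$ and the legitimacy of $\randVarImportance_{1-a,1-b}$ are accurate additions but do not change the argument.
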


\begin{proof}
Let $\aPerformance$ be a two-class classification performance and
$\aPerformance'=\opSwapClasses(\aPerformance)$. If $\aPerformance'\in\domainOfScore[{\canonicalRankingScore}]$,
then $\aPerformance\in\domainOfScore[{\rankingScore[\randVarImportance_{1-a,1-b}]}]$
and 
\begin{align*}
\canonicalRankingScore(\aPerformance') & =\frac{(1-a)\aPerformance'(\eventTN)+a\aPerformance'(\eventTP)}{(1-a)\aPerformance'(\eventTN)+(1-b)\aPerformance'(\eventFP)+b\aPerformance'(\eventFN)+a\aPerformance'(\eventTP)}\\
 & =\frac{(1-a)\aPerformance(\eventTP)+a\aPerformance(\eventTN)}{(1-a)\aPerformance(\eventTP)+(1-b)\aPerformance(\eventFN)+b\aPerformance(\eventFP)+a\aPerformance(\eventTN)}\\
 & =\frac{a\aPerformance(\eventTN)+(1-a)\aPerformance(\eventTP)}{a\aPerformance(\eventTN)+b\aPerformance(\eventFP)+(1-b)\aPerformance(\eventFN)+(1-a)\aPerformance(\eventTP)}\\
 & =\rankingScore[\randVarImportance_{1-a,1-b}](\aPerformance)
\end{align*}
\end{proof}

\begin{lemma}
\label{lemma:opPriorShift}
Let $\opPriorShift$ be the operation that applies a prior/target shift on the distribution $\aPerformance(\randVarGroundtruthClass)$,
transforming the priors $(\priorneg,\priorpos)$ into $(\priorneg',\priorpos')$. The
ordering induced by $\canonicalRankingScore\circ\opPriorShift$
is the same as the ordering induced by $\rankingScore[\randVarImportance_{f(a),f(b)}]$,
where $f$ is the function
\[
f:x\mapsto f(x)=\frac{x\frac{\priorpos'}{\priorpos}}{(1-x)\frac{\priorneg'}{\priorneg}+x\frac{\priorpos'}{\priorpos}}\,.
\]
\end{lemma}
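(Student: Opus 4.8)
The plan is to chain the two properties already available rather than expand $\canonicalRankingScore\circ\opPriorShift$ into a ratio. A brute-force expansion would not recognise the result as a canonical ranking score, because a prior shift rescales the satisfying outcomes ($\sampleTN,\sampleTP$) and the unsatisfying outcomes ($\sampleFP,\sampleFN$) by \emph{different} strictly positive factors; hence the composition is a canonical ranking score only \emph{up to} the per-satisfaction renormalisation allowed by \cref{prop:scale-invariance-per-satisfaction}, which is exactly why the claim has to be (and can only be) stated at the level of induced orderings.

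First I would apply the property established just above --- that composing a ranking score with $\opPriorShift$ again yields a ranking score --- with $\randVarImportance=\randVarCanonicalImportance$. This gives $\canonicalRankingScore\circ\opPriorShift=\rankingScore[\randVarImportance']$ where $\randVarImportance'(\sampleTN)=(1-a)\frac{\priorneg'}{\priorneg}$, $\randVarImportance'(\sampleFP)=(1-b)\frac{\priorneg'}{\priorneg}$, $\randVarImportance'(\sampleFN)=b\frac{\priorpos'}{\priorpos}$, and $\randVarImportance'(\sampleTP)=a\frac{\priorpos'}{\priorpos}$. I would check in passing that $\randVarImportance'$ is a legitimate importance: it is non-negative since $\priorneg,\priorpos,\priorneg',\priorpos'>0$, and $\randVarImportance'(\sampleTN)+\randVarImportance'(\sampleTP)=(1-a)\frac{\priorneg'}{\priorneg}+a\frac{\priorpos'}{\priorpos}>0$ because $1-a$ and $a$ do not both vanish, and likewise $\randVarImportance'(\sampleFP)+\randVarImportance'(\sampleFN)>0$.

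Second, I would invoke \cref{prop:scale-invariance-per-satisfaction} twice: rescale $\randVarImportance'(\sampleTN),\randVarImportance'(\sampleTP)$ by $\lambda_1=\bigl((1-a)\frac{\priorneg'}{\priorneg}+a\frac{\priorpos'}{\priorpos}\bigr)^{-1}>0$ and rescale $\randVarImportance'(\sampleFP),\randVarImportance'(\sampleFN)$ by $\lambda_2=\bigl((1-b)\frac{\priorneg'}{\priorneg}+b\frac{\priorpos'}{\priorpos}\bigr)^{-1}>0$; by that property the resulting importance $\randVarImportance''$ induces the same performance ordering as $\randVarImportance'$. A one-line computation then gives $\randVarImportance''(\sampleTP)=\lambda_1 a\frac{\priorpos'}{\priorpos}=f(a)$, $\randVarImportance''(\sampleTN)=\lambda_1(1-a)\frac{\priorneg'}{\priorneg}=1-f(a)$, and symmetrically $\randVarImportance''(\sampleFN)=f(b)$ and $\randVarImportance''(\sampleFP)=1-f(b)$, that is, $\randVarImportance''=\randVarImportance_{f(a),f(b)}$ on the nose. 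Chaining the two steps shows that $\canonicalRankingScore\circ\opPriorShift$ and $\rankingScore[\randVarImportance_{f(a),f(b)}]$ induce the same ordering. I would close with one sentence on domains: $\opPriorShift$ multiplies each of the four elementary-event probabilities by a strictly positive constant and so preserves their supports, and $\lambda_1,\lambda_2>0$, so the denominators appearing in $\canonicalRankingScore\circ\opPriorShift$ and in $\rankingScore[\randVarImportance_{f(a),f(b)}]$ vanish on exactly the same performances, hence the two orderings are compared on the same set.

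There is no real obstacle here; the only point requiring care --- and the reason this lemma, unlike the four preceding ones, is stated for orderings rather than as a literal identity --- is the need for two \emph{separate} normalising constants $\lambda_1$ and $\lambda_2$ on the satisfying and the unsatisfying blocks. \cref{prop:scale-invariance-per-satisfaction} is precisely the tool that absorbs this discrepancy, so beyond that bookkeeping the argument is routine.
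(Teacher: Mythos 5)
Your proposal is correct and follows essentially the same route as the paper's proof: the paper expands $\canonicalRankingScore\circ\opPriorShift$ directly to obtain the intermediate importance $\randVarImportance'$ (which is just your step one, i.e., the prior-shift composition property, re-derived inline) and then, exactly as you do, observes that $\randVarImportance'=(\indicatorSymbol_{\randVarSatisfaction=0}g(b)+\indicatorSymbol_{\randVarSatisfaction=1}g(a))\randVarImportance_{f(a),f(b)}$ with $g(x)=(1-x)\frac{\priorneg'}{\priorneg}+x\frac{\priorpos'}{\priorpos}$ and invokes Property~\ref{prop:scale-invariance-per-satisfaction} to pass to equality of orderings. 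Your identification of the two block-wise normalising constants as the reason the statement is about orderings rather than a literal identity matches the paper's reasoning exactly.
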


\begin{proof}
Let $\aPerformance$ be a two-class classification performance and
$\aPerformance'=\opPriorShift(\aPerformance)$. We have:
\begin{align*}
\canonicalRankingScore(\aPerformance') & =\frac{(1-a)\aPerformance'(\eventTN)+a\aPerformance'(\eventTP)}{(1-a)\aPerformance'(\eventTN)+(1-b)\aPerformance'(\eventFP)+b\aPerformance'(\eventFN)+a\aPerformance'(\eventTP)}\\
 & =\frac{(1-a)\aPerformance(\eventTN)\frac{\priorneg'}{\priorneg}+a\aPerformance(\eventTP)\frac{\priorpos'}{\priorpos}}{(1-a)\aPerformance(\eventTN)\frac{\priorneg'}{\priorneg}+(1-b)\aPerformance(\eventFP)\frac{\priorneg'}{\priorneg}+b\aPerformance(\eventFN)\frac{\priorpos'}{\priorpos}+a\aPerformance(\eventTP)\frac{\priorpos'}{\priorpos}}\\
 & =\frac{\randVarImportance'(\sampleTN)\aPerformance(\eventTN)+\randVarImportance'(tp)\aPerformance(\eventTP)}{\randVarImportance'(\sampleTN)\aPerformance(\eventTN)+\randVarImportance'(\sampleFP)\aPerformance(\eventFP)+\randVarImportance'(\sampleFN)\aPerformance(\eventFN)+\randVarImportance'(tp)\aPerformance(\eventTP)}\\
 & =\rankingScore[I'](\aPerformance)
\end{align*}
with
\[
\begin{cases}
\randVarImportance'(\sampleTN)=g(a)\left(1-f(a)\right)\\
\randVarImportance'(\sampleFP)=g(b)\left(1-f(b)\right)\\
\randVarImportance'(\sampleFN)=g(b)f(b)\\
\randVarImportance'(tp)=g(a)f(a)
\end{cases}
\]
and
\[
g:x\mapsto g(x)=(1-x)\frac{\priorneg'}{\priorneg}+x\frac{\priorpos'}{\priorpos}\,.
\]
The random variable $\randVarImportance'$ can be rewritten as $\randVarImportance'=(\indicatorSymbol_{\randVarSatisfaction=0}g(b)+\indicatorSymbol_{\randVarSatisfaction=1}g(a))\randVarImportance_{f(a),f(b)}$.
Using Property~\ref{prop:scale-invariance-per-satisfaction}, we
conclude that 
\begin{align*}
 & \frac{\partial\rankingScore[I']}{\partial\rankingScore[\randVarImportance_{f(a),f(b)}]}>0\\
\Rightarrow & \frac{\partial\left(\canonicalRankingScore\circ\opPriorShift\right)}{\partial\rankingScore[\randVarImportance_{f(a),f(b)}]}>0\\
\Rightarrow & \ordering_{\canonicalRankingScore\circ\opPriorShift}=\ordering_{\rankingScore[\randVarImportance_{f(a),f(b)}]}\,.
\end{align*}
\end{proof}

\subsubsection{When the priors are fixed: superimposing a grid on the \tile}

We found it very useful, when priors are fixed, to represent the displacement that would have occurred if we had started with uniform priors and applied the target/prior shift. This can be done thanks to \cref{lemma:opPriorShift}. In practice, we do this by superimposing a grid on the \tile, as done in \cref{fig:toy-example} and \cref{fig:correlation-with-other-scores}. We provide more information on the subject in \cref{fig:tile-prior-grid}. Vertical lines are drawn at
\begin{equation}
    a=f^{-1}(x)=\frac{x\frac{0.5}{\priorpos}}{(1-x)\frac{0.5}{\priorneg}+x\frac{0.5}{\priorpos}} \textrm{ for } x=0.0,0.1,0.2,\ldots,0.9,1.0
    \,,
\end{equation}
and horizontal lines are drawn at
\begin{equation}
    b=f^{-1}(y)=\frac{y\frac{0.5}{\priorpos}}{(1-y)\frac{0.5}{\priorneg}+y\frac{0.5}{\priorpos}} \textrm{ for } y=0.0,0.1,0.2,\ldots,0.9,1.0
    \,.
\end{equation}

\begin{figure}
\begin{centering}
\subfloat[Shift from $(\priorneg,\priorpos)=(0.5,0.5)$ to $(0.9,0.1)$.]{\begin{centering}
\includegraphics[width=0.17\linewidth]{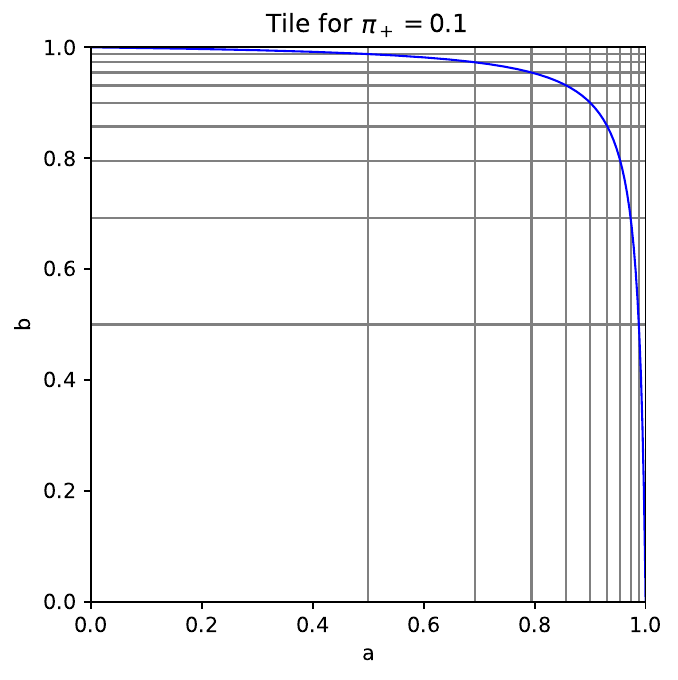}
\par\end{centering}
}
\hfill{}
\subfloat[Shift from $(\priorneg,\priorpos)=(0.5,0.5)$ to $(0.7,0.3)$.]{\begin{centering}
\includegraphics[width=0.17\linewidth]{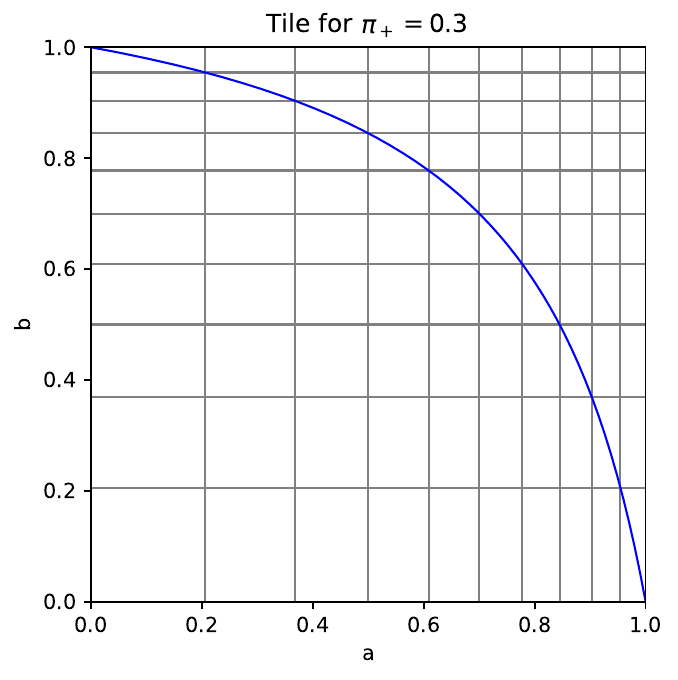}
\par\end{centering}
}
\hfill{}
\subfloat[Reference.]{\begin{centering}
\includegraphics[width=0.17\linewidth]{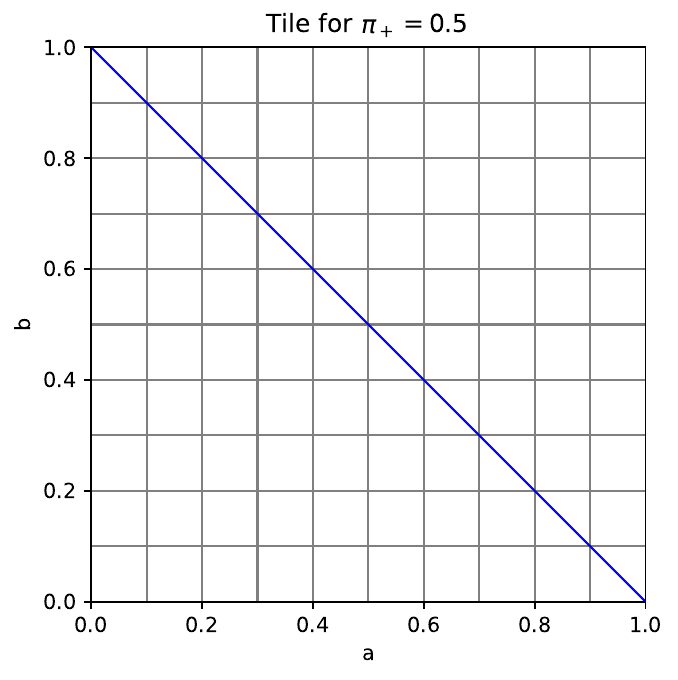}
\par\end{centering}
}
\hfill{}
\subfloat[Shift from $(\priorneg,\priorpos)=(0.5,0.5)$ to $(0.3,0.7)$.]{\begin{centering}
\includegraphics[width=0.17\linewidth]{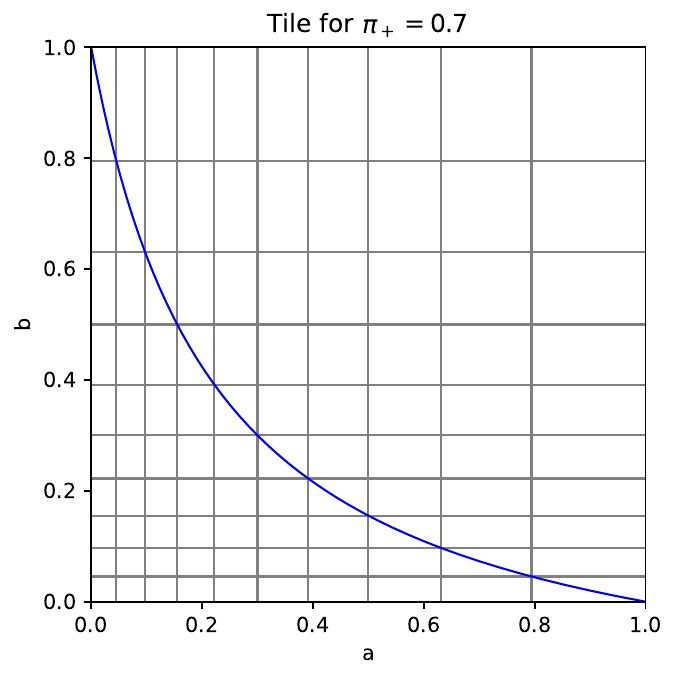}
\par\end{centering}
}
\hfill{}
\subfloat[Shift from $(\priorneg,\priorpos)=(0.5,0.5)$ to $(0.1,0.9)$.]{\begin{centering}
\includegraphics[width=0.17\linewidth]{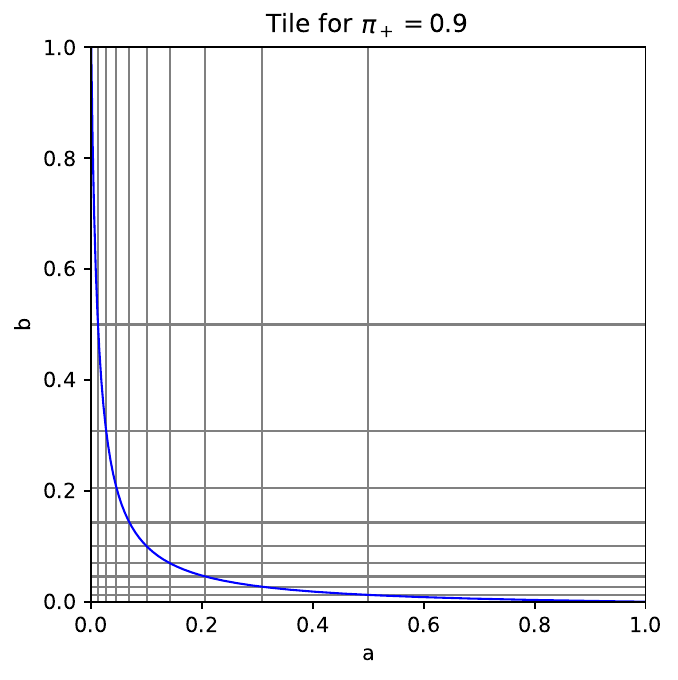}
\par\end{centering}
}
\par\end{centering}
\caption{Visualization of how the \tile deforms with a shift for the distribution $\aPerformance(\randVarGroundtruthClass)$. The central figure shows an initial \tile, in which we have arbitrarily drawn a regular grid. The other figures show the result of \cref{lemma:opPriorShift}, representing the way the \tile deforms. We've also drawn the descending diagonal on our reference \tile. Its deformation gives birth to the family of curves denoted by $\tileCurvePriors$ in our paper.\label{fig:tile-prior-grid}}

\end{figure}

\clearpage
\subsection{Supplementary material about \cref{sec:performance-orderings-on-tile}}

\subsubsection{Placing performance orderings induced by probabilistic scores on the \tile}

The following lemma explains that some probabilistic scores are ranking scores, and specifies the importance values for them. Based on this, placing the performance orderings induced by probabilistic scores is straightforward, as the performance ordering induced by the ranking score $\rankingScore$ is located at $(a,b)=(\frac{\randVarImportance(\sampleTP)}{\randVarImportance(\sampleTN)+\randVarImportance(\sampleTP)},\frac{\randVarImportance(\sampleFN)}{\randVarImportance(\sampleFP)+\randVarImportance(\sampleFN)})$.

\begin{lemma}\label{lemma:proba-scores-ranking-scores}
Let $\randVarSatisfaction$ be a binary-valued satisfaction. A probabilistic
score $\conditionalProbabilisticScore{\anEvent_{1}}{\anEvent_{2}}$, with $\ensuremath{\emptyset\subsetneq\anEvent_{1}\subsetneq\anEvent_{2}\subseteq\sampleSpace}$,
is a ranking score $\rankingScore$ if $\randVarImportance=k\indicatorSymbol_{\anEvent_{2}}$,
$k>0$, and $\anEvent_{1}=\anEvent_{2}\cap\anEvent_{\randVarSatisfaction}$,
with $\anEvent_{\randVarSatisfaction}=\left\{ \aSample\in\sampleSpace:\randVarSatisfaction(\aSample)=1\right\} $.
\end{lemma}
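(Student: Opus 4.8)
The plan is to substitute the claimed importance $\randVarImportance = k\,\indicatorSymbol_{\anEvent_{2}}$ into the definition of a ranking score and verify that the resulting expression collapses to the conditional probability $\aPerformance(\anEvent_{1}\mid\anEvent_{2})$. First I would recall from \paperA the general form of a ranking score for a binary-valued satisfaction $\randVarSatisfaction$, namely $\rankingScore(\aPerformance)=\frac{\sum_{\aSample\in\anEvent_{\randVarSatisfaction}}\randVarImportance(\aSample)\aPerformance(\{\aSample\})}{\sum_{\aSample\in\sampleSpace}\randVarImportance(\aSample)\aPerformance(\{\aSample\})}$ with $\anEvent_{\randVarSatisfaction}=\{\aSample\in\sampleSpace:\randVarSatisfaction(\aSample)=1\}$, together with the admissibility constraints on $\randVarImportance\colon\sampleSpace\to\realNumbers_{\ge0}$: that $\sum_{\aSample\in\anEvent_{\randVarSatisfaction}}\randVarImportance(\aSample)\ne 0$ and $\sum_{\aSample\in\sampleSpace\setminus\anEvent_{\randVarSatisfaction}}\randVarImportance(\aSample)\ne 0$ (the particularization of the latter to two-class crisp classification is the one stated earlier in the paper). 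The domain of $\rankingScore$ is then $\{\aPerformance\in\allPerformances:\sum_{\aSample\in\sampleSpace}\randVarImportance(\aSample)\aPerformance(\{\aSample\})\ne 0\}$.

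Next I would check that $\randVarImportance=k\,\indicatorSymbol_{\anEvent_{2}}$, $k>0$, is an admissible importance. Non-negativity is clear. Since $\anEvent_{1}=\anEvent_{2}\cap\anEvent_{\randVarSatisfaction}$, we get $\sum_{\aSample\in\anEvent_{\randVarSatisfaction}}k\,\indicatorSymbol_{\anEvent_{2}}(\aSample)=k\,|\anEvent_{2}\cap\anEvent_{\randVarSatisfaction}|=k\,|\anEvent_{1}|$, which is nonzero because $\emptyset\subsetneq\anEvent_{1}$. For the other constraint, observe that $\anEvent_{2}\setminus\anEvent_{\randVarSatisfaction}=\anEvent_{2}\setminus(\anEvent_{2}\cap\anEvent_{\randVarSatisfaction})=\anEvent_{2}\setminus\anEvent_{1}$, which is nonempty because $\anEvent_{1}\subsetneq\anEvent_{2}$; hence $\sum_{\aSample\in\sampleSpace\setminus\anEvent_{\randVarSatisfaction}}k\,\indicatorSymbol_{\anEvent_{2}}(\aSample)=k\,|\anEvent_{2}\setminus\anEvent_{1}|\ne 0$. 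The same computation shows $\sum_{\aSample\in\sampleSpace}k\,\indicatorSymbol_{\anEvent_{2}}(\aSample)\aPerformance(\{\aSample\})=k\,\aPerformance(\anEvent_{2})$, so the domain of $\rankingScore$ equals $\{\aPerformance:\aPerformance(\anEvent_{2})\ne 0\}=\domainOfScore[\conditionalProbabilisticScore{\anEvent_{1}}{\anEvent_{2}}]$; the two scores are therefore defined on the same set.

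Finally I would carry out the substitution on that common domain: the numerator of $\rankingScore(\aPerformance)$ becomes $k\sum_{\aSample\in\anEvent_{\randVarSatisfaction}\cap\anEvent_{2}}\aPerformance(\{\aSample\})=k\,\aPerformance(\anEvent_{1})$ (again by $\anEvent_{1}=\anEvent_{2}\cap\anEvent_{\randVarSatisfaction}$), the denominator becomes $k\,\aPerformance(\anEvent_{2})$, and after cancelling $k$ and using $\anEvent_{1}\subseteq\anEvent_{2}$ to write $\aPerformance(\anEvent_{1})=\aPerformance(\anEvent_{1}\cap\anEvent_{2})$, this equals $\aPerformance(\anEvent_{1})/\aPerformance(\anEvent_{2})=\aPerformance(\anEvent_{1}\mid\anEvent_{2})=\conditionalProbabilisticScore{\anEvent_{1}}{\anEvent_{2}}(\aPerformance)$. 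I do not expect any genuine obstacle here: the computation is short once the general form of a ranking score is unfolded. The only steps requiring a little care are recalling that general form and its admissibility conditions from \paperA, and noticing that the second non-degeneracy condition is exactly what the strict inclusion $\anEvent_{1}\subsetneq\anEvent_{2}$ guarantees — without it the claimed importance would not be admissible.
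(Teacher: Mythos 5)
Your proposal is correct and follows essentially the same route as the paper's proof: establish that both scores have the same domain (your explicit sum $k\,\aPerformance(\anEvent_2)$ is exactly the paper's ``fundamental bridge'' $\expectedValueSymbol_{\aPerformance}[\randVarImportance]=k\,\aPerformance(\anEvent_2)$), then substitute the importance to see the values coincide. The only addition is your explicit check that $k\indicatorSymbol_{\anEvent_2}$ satisfies the two non-degeneracy constraints on an importance, which the paper leaves implicit; that is a harmless (and arguably welcome) extra verification, not a different argument.
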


\begin{proof}
Let us first check the equality of the domains. We have 
\[
\domainOfScore[{\rankingScore}]=\left\{ \aPerformance\in\allPerformances:\expectedValueSymbol_{\aPerformance}[\randVarImportance]\ne0\right\} 
\]
 and 
\[
\domainOfScore[\conditionalProbabilisticScore{\anEvent_{1}}{\anEvent_{2}}]=\left\{ \aPerformance\in\allPerformances:\aPerformance(\anEvent_{2})\ne0\right\} \,.
\]
Using the ``fundamental bridge'', we have 
\[
\aPerformance(\anEvent_{2})\ne0\Leftrightarrow\expectedValueSymbol_{\aPerformance}[\indicatorSymbol_{\anEvent_{2}}]\ne0\Leftrightarrow\expectedValueSymbol_{\aPerformance}[\randVarImportance]\ne0\,,
\]
and thus
\[
\domainOfScore[{\rankingScore}]=\domainOfScore[\conditionalProbabilisticScore{\anEvent_{1}}{\anEvent_{2}}]\,.
\]
We now check the equality of the values taken by both scores. We have,
for all $\aPerformance\in\domainOfScore[\conditionalProbabilisticScore{\anEvent_{1}}{\anEvent_{2}}]$,
\[
\conditionalProbabilisticScore{\anEvent_{1}}{\anEvent_{2}}(\aPerformance)=\aPerformance(\anEvent_{1}\vert\anEvent_{2})=\frac{\aPerformance(\anEvent_{1}\cap\anEvent_{2})}{\aPerformance(\anEvent_{2})}=\frac{\aPerformance(\anEvent_{1})}{\aPerformance(\anEvent_{2})}\,.
\]
Using again the ``fundamental bridge'', 
\begin{equation*}
\conditionalProbabilisticScore{\anEvent_{1}}{\anEvent_{2}}(\aPerformance)=\frac{\expectedValueSymbol_{\aPerformance}[\indicatorSymbol_{\anEvent_{1}}]}{\expectedValueSymbol_{\aPerformance}[\indicatorSymbol_{\anEvent_{2}}]}=\frac{\expectedValueSymbol_{\aPerformance}[\indicatorSymbol_{\anEvent_{2}\cap\anEvent_{\randVarSatisfaction}}]}{\expectedValueSymbol_{\aPerformance}[\indicatorSymbol_{\anEvent_{2}}]}=\frac{\expectedValueSymbol_{\aPerformance}[\indicatorSymbol_{\anEvent_{2}}\randVarSatisfaction]}{\expectedValueSymbol_{\aPerformance}[\indicatorSymbol_{\anEvent_{2}}]}
=\frac{\expectedValueSymbol_{\aPerformance}[\frac{\randVarImportance}{k}S]}{\expectedValueSymbol_{\aPerformance}[\frac{\randVarImportance}{k}]}=\frac{\expectedValueSymbol_{\aPerformance}[\randVarImportance S]}{\expectedValueSymbol_{\aPerformance}[\randVarImportance]}=\rankingScore(\aPerformance)\,.
\end{equation*}
\end{proof}

Thanks to Lemma~\ref{lemma:proba-scores-ranking-scores}, we can place on the \tile the probabilistic scores that are ranking scores. In the particular case of two-class classification, there exist 9
pairs $(\anEvent_{1},\anEvent_{2})$ such that $\ensuremath{\emptyset\subsetneq\anEvent_{1}\subsetneq\anEvent_{2}\subseteq\sampleSpace}$
and $\anEvent_{1}=\anEvent_{2}\cap\anEvent_{\randVarSatisfaction}$.
There are thus 9 ranking scores that are also probabilistic scores. Among them, 5 are canonical and can be placed directly on the \tile. For the 4 others, the induced performance ordering can be placed on the \tile. A summary is provided in Table~\ref{tbl:some-ranking-scores}.

\begin{table}
\centering
\caption{The nine ranking scores for two-class classifications that belong to the family of probabilistic scores.\label{tbl:some-ranking-scores}}

\resizebox{1\linewidth}{!}{
\begin{tabular}{|l|l|c|c|c|c|c|c|}
\hline
Ranking score $\rankingScore$ & Probabilistic writing & $\randVarImportance(\sampleTN)$ & $\randVarImportance(\sampleFP)$ & $\randVarImportance(\sampleFN)$ & $\randVarImportance(\sampleTP)$ & Canonical? & location of $\ordering_{\rankingScore}$ on the \tile \tabularnewline
\hline
\hline
Negative Predictive Value $\scoreNPV$ & $\aPerformance(\randVarSatisfaction=1\mid\randVarPredictedClass=\classNeg)$ & $1$ & $0$ & $1$ & $0$ & yes & $(a,b)=(0,1)$ \tabularnewline
\hline 
$\conditionalProbabilisticScore{\{\sampleTN,\sampleTP\}}{\{\sampleTN,\sampleFN,\sampleTP\}}$ & $\aPerformance(\randVarSatisfaction=1\mid\randVarGroundtruthClass=\classPos\vee\randVarPredictedClass=\classNeg)$ & $1$ & $0$ & $1$ & $1$ & no & $(a,b)=(\frac12,1)$ \tabularnewline
\hline 
True Positive Rate $\scoreTPR$ & $\aPerformance(\randVarSatisfaction=1\mid\randVarGroundtruthClass=\classPos)$ & $0$ & $0$ & $1$ & $1$ & yes & $(a,b)=(1,1)$ \tabularnewline
\hline 
Jaccard's index for the negative class $\scoreJaccardNeg$ & $\aPerformance(\randVarSatisfaction=1\mid\randVarGroundtruthClass=\classNeg\vee\randVarPredictedClass=\classNeg)$ & $1$ & $1$ & $1$ & $0$ & no & $(a,b)=(0,\frac12)$ \tabularnewline
\hline 
Accuracy $\scoreAccuracy$ & $\aPerformance(\randVarSatisfaction=1)$ & $\nicefrac12$ & $\nicefrac12$ & $\nicefrac12$ & $\nicefrac12$ & yes & $(a,b)=(\frac12,\frac12)$ \tabularnewline
\hline 
Jaccard's index for the positive class $\scoreJaccardPos$ & $\aPerformance(\randVarSatisfaction=1\mid\randVarGroundtruthClass=\classPos\vee\randVarPredictedClass=\classPos)$ & $0$ & $1$ & $1$ & $1$ & no & $(a,b)=(1,\frac12)$ \tabularnewline
\hline 
True Negative Rate $\scoreTNR$ & $\aPerformance(\randVarSatisfaction=1\mid\randVarGroundtruthClass=\classNeg)$ & $1$ & $1$ & $0$ & $0$ & yes & $(a,b)=(0,0)$ \tabularnewline
\hline 
$\conditionalProbabilisticScore{\{\sampleTN,\sampleTP\}}{\{\sampleTN,\sampleFP,\sampleTP\}}$ & $\aPerformance(\randVarSatisfaction=1\mid\randVarGroundtruthClass=\classNeg\vee\randVarPredictedClass=\classPos)$ & $1$ & $1$ & $0$ & $1$ & no & $(a,b)=(\frac12,0)$ \tabularnewline
\hline 
Positive Predictive Value $\scorePPV$ & $\aPerformance(\randVarSatisfaction=1\mid\randVarPredictedClass=\classPos)$ & $0$ & $1$ & $0$ & $1$ & yes & $(a,b)=(1,0)$ \tabularnewline
\hline 
\end{tabular}
}
\end{table}


\subsubsection{Placing performance orderings induced by the scores $\scoreFBeta$ on the \tile}

\begin{lemma}[F-scores]
In two-class classification, all F-scores are canonical
ranking scores: $\scoreFBeta=\rankingScore$ with $\randVarImportance(\sampleTN)=0$,
$\randVarImportance(\sampleFP)=\frac{1}{1+\beta^{2}}$, $\randVarImportance(\sampleFN)=\frac{\beta^{2}}{1+\beta^{2}}$,
and $\randVarImportance(\sampleTP)=1$.
\end{lemma}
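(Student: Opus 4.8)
The plan is to derive the claimed equality by a direct substitution into the definition of $\rankingScore$, and then to check that the proposed importance is admissible and already in canonical form. First I would put $\scoreFBeta$ into a convenient algebraic form: starting from its usual definition as the weighted harmonic mean of precision $\scorePPV$ and recall $\scoreTPR$, expressing $\scorePPV$ and $\scoreTPR$ via the ``fundamental bridge'' in terms of the elementary-event probabilities, and clearing denominators, one obtains
\[
  \scoreFBeta(\aPerformance)
  = \frac{(1+\beta^{2})\,\aPerformance(\eventTP)}
         {(1+\beta^{2})\,\aPerformance(\eventTP) + \beta^{2}\,\aPerformance(\eventFN) + \aPerformance(\eventFP)}\,,
\]
a fraction that is defined precisely when $\aPerformance(\eventTP)$, $\aPerformance(\eventFN)$, $\aPerformance(\eventFP)$ are not all zero.

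Next I would substitute $\randVarImportance(\sampleTN)=0$, $\randVarImportance(\sampleFP)=\frac{1}{1+\beta^{2}}$, $\randVarImportance(\sampleFN)=\frac{\beta^{2}}{1+\beta^{2}}$, $\randVarImportance(\sampleTP)=1$ into the definition of $\rankingScore$. Since $\randVarImportance(\sampleTN)=0$, the two $\sampleTN$ terms drop out, and multiplying numerator and denominator by $1+\beta^{2}$ turns $\rankingScore[\randVarImportance](\aPerformance)$ into exactly the right-hand side of the display above. The two domains coincide as well, because $\expectedValueSymbol_{\aPerformance}[\randVarImportance]=0$ if and only if $\aPerformance(\eventFP)=\aPerformance(\eventFN)=\aPerformance(\eventTP)=0$, which is exactly the configuration excluded for $\scoreFBeta$; hence $\scoreFBeta=\rankingScore[\randVarImportance]$.

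Finally I would observe that $\randVarImportance\ge0$, that $\randVarImportance(\sampleTN)+\randVarImportance(\sampleTP)=1\ne0$ and $\randVarImportance(\sampleFP)+\randVarImportance(\sampleFN)=1\ne0$, so $\randVarImportance$ is a legitimate importance, and that it already has the canonical shape $\bigl(1-a,\,1-b,\,b,\,a\bigr)$ with $a=1$ and $b=\frac{\beta^{2}}{1+\beta^{2}}$, both in $[0,1]$; thus $\scoreFBeta=\canonicalRankingScore$ for this $(a,b)$, which in particular recovers the announced placements $\scoreFBeta[0]=\scorePPV$ at $(1,0)$ and, in the limit $\beta\to\infty$, $\scoreTPR$ at $(1,1)$. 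I do not expect any real obstacle: every step is a one-line manipulation, and the only point demanding a little care is to fix, among the several equivalent literature definitions of $\scoreFBeta$, the single confusion-matrix fraction above as the one for which the stated domain equality holds --- and to note that no use of Property~\ref{prop:scale-invariance-per-satisfaction} is required here, since the importance in the statement is already literally canonical.
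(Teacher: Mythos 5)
Your proposal is correct and follows essentially the same route as the paper's proof: write $\scoreFBeta$ in its confusion-matrix form, verify the domain equality via $\expectedValueSymbol_{\aPerformance}[\randVarImportance]\ne0 \Leftrightarrow (1+\beta^{2})\aPerformance(\eventTP)+\beta^{2}\aPerformance(\eventFN)+\aPerformance(\eventFP)\ne0$, match the values by rescaling numerator and denominator by $1+\beta^{2}$, and conclude canonicity from $\randVarImportance(\sampleTN)+\randVarImportance(\sampleTP)=\randVarImportance(\sampleFP)+\randVarImportance(\sampleFN)=1$. Your explicit identification of $(a,b)=(1,\tfrac{\beta^{2}}{1+\beta^{2}})$ is a slightly more detailed finish than the paper's, but the argument is the same.
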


\global\long\def\probaTN{TN}%
\global\long\def\probaFP{FP}%
\global\long\def\probaFN{FN}%
\global\long\def\probaTP{TP}%

\begin{proof}
For the sake of concision, let us pose $\probaTN=\aPerformance(\eventTN)$,
$\probaFP=\aPerformance(\eventFP)$, $\probaFN=\aPerformance(\eventFN)$,
and $\probaTP=\aPerformance(\eventTP)$.

We first check that $\scoreFBeta$ and $\rankingScore$ have the same
domain.
\begin{align*}
\domainOfScore[{\scoreFBeta}] & =\left\{ \aPerformance\in\allPerformances:(1+\beta^{2})\probaTP+\beta^{2}\probaFN+\probaFP\ne0\right\} \\
 & =\left\{ \aPerformance\in\allPerformances:(1+\beta^{2})\expectedValueSymbol_{\aPerformance}[\randVarImportance]\ne0\right\} \\
 & =\left\{ \aPerformance\in\allPerformances:\expectedValueSymbol_{\aPerformance}[\randVarImportance]\ne0\right\} \\
 & =\domainOfScore[{\rankingScore}]
\end{align*}

Then, we check the equality of the values taken by both scores. By
definition of $\scoreFBeta$, we have, for all $\aPerformance\in\domainOfScore[{\scoreFBeta}]$,
\begin{align*}
\scoreFBeta(\aPerformance) & =\frac{(1+\beta^{2})\probaTP}{(1+\beta^{2})\probaTP+\beta^{2}\probaFN+\probaFP}\\
 & =\frac{0\probaTN+1\probaTP}{0\probaTN+\frac{1}{1+\beta^{2}}\probaFP+\frac{\beta^{2}}{1+\beta^{2}}\probaFN+1\probaTP}\\
 & =\frac{\randVarImportance(\sampleTN)\probaTN+\randVarImportance(\sampleTP)\probaTP}{\randVarImportance(\sampleTN)\probaTN+\randVarImportance(\sampleFP)\probaFP+\randVarImportance(\sampleFN)\probaFN+\randVarImportance(\sampleTP)\probaTP}\\
 & =\rankingScore(\aPerformance)\,.
\end{align*}
In conclusion, $\scoreFBeta=\rankingScore$. As Moreover, as $\randVarImportance(\sampleTN)+\randVarImportance(\sampleTP)=\randVarImportance(\sampleFP)+\randVarImportance(\sampleFN)$,
$\rankingScore$ is canonical.
\end{proof}


\subsubsection{Placing performance orderings induced by the score $\scoreCohenKappa$ on the \tile}

\begin{lemma}[Cohen's $\scoreCohenKappa$ statistic]
Cohen's $\scoreCohenKappa$ statistic increases linearly with a canonical
ranking score when the priors are fixed. In two-class classification,
let the priors of the negative and positive classes be fixed and denoted,
respectively, by $\priorneg$ and $\priorpos$. In this case, $(\priorneg^{2}+\priorpos^{2})\scoreCohenKappa+2\priorneg\priorpos=\rankingScore$
with $\randVarImportance(\sampleTN)=\frac{\priorpos^{2}}{\priorneg^{2}+\priorpos^{2}}$,
$\randVarImportance(\sampleFP)=\frac{1}{2}$, $\randVarImportance(\sampleFN)=\frac{1}{2}$,
and $\randVarImportance(\sampleTP)=\frac{\priorneg^{2}}{\priorneg^{2}+\priorpos^{2}}$.
\end{lemma}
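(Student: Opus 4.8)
The plan is to expand Cohen's $\scoreCohenKappa=\frac{\scoreAccuracy-\scoreAccuracy\circ\opNoSkill}{1-\scoreAccuracy\circ\opNoSkill}$ as a rational function of the four cell probabilities, use the hypothesis that the priors are fixed to eliminate two of those cells, and recognise a ranking score in what remains. First I would record the elementary identities $\scoreAccuracy=\aPerformance(\eventTN)+\aPerformance(\eventTP)$, $\priorneg=\aPerformance(\eventTN)+\aPerformance(\eventFP)$, $\priorpos=\aPerformance(\eventFN)+\aPerformance(\eventTP)$, $\rateneg=\aPerformance(\eventTN)+\aPerformance(\eventFN)$, $\ratepos=\aPerformance(\eventFP)+\aPerformance(\eventTP)$, together with $\scoreAccuracy\circ\opNoSkill=\priorneg\rateneg+\priorpos\ratepos$, which holds because $\opNoSkill$ replaces the joint law of $(\randVarGroundtruthClass,\randVarPredictedClass)$ by the product of its two marginals.

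The step that does the real work is to use the fixed priors to substitute $\aPerformance(\eventFP)=\priorneg-\aPerformance(\eventTN)$ and $\aPerformance(\eventFN)=\priorpos-\aPerformance(\eventTP)$ everywhere: this is what collapses $\scoreCohenKappa$, which a priori involves all four cells, into something whose numerator depends only on $\aPerformance(\eventTN)$ and $\aPerformance(\eventTP)$. A direct computation — expanding and repeatedly using $\priorneg+\priorpos=1$, equivalently $(\priorneg+\priorpos)^2=1$ and $1-2\priorneg\priorpos=\priorneg^2+\priorpos^2$ — gives the two identities
\begin{align*}
1-\scoreAccuracy\circ\opNoSkill &= (\priorpos-\priorneg)\left(\aPerformance(\eventTN)-\aPerformance(\eventTP)\right)+\priorneg^2+\priorpos^2\comma\\
(\priorneg^2+\priorpos^2)(\scoreAccuracy-\scoreAccuracy\circ\opNoSkill)+2\priorneg\priorpos(1-\scoreAccuracy\circ\opNoSkill) &= 2\priorpos^2\,\aPerformance(\eventTN)+2\priorneg^2\,\aPerformance(\eventTP)\point
\end{align*}
For the second one I would observe that its left-hand side equals $(\priorneg^2+\priorpos^2)\scoreAccuracy+2\priorneg\priorpos-\scoreAccuracy\circ\opNoSkill$ (since $\priorneg^2+\priorpos^2+2\priorneg\priorpos=1$), then substitute and collect the coefficients of $\aPerformance(\eventTN)$ and $\aPerformance(\eventTP)$, which simplify to $2\priorpos^2$ and $2\priorneg^2$.

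Dividing both sides of the second identity by $1-\scoreAccuracy\circ\opNoSkill$ and using the first to rewrite the denominator gives
\[
(\priorneg^2+\priorpos^2)\,\scoreCohenKappa+2\priorneg\priorpos=\frac{2\priorpos^2\,\aPerformance(\eventTN)+2\priorneg^2\,\aPerformance(\eventTP)}{(\priorpos-\priorneg)\left(\aPerformance(\eventTN)-\aPerformance(\eventTP)\right)+\priorneg^2+\priorpos^2}\point
\]
I would then rewrite the denominator, using $\aPerformance(\eventFP)+\aPerformance(\eventFN)=1-\aPerformance(\eventTN)-\aPerformance(\eventTP)$ (again a consequence of the fixed priors), as $2\priorpos^2\aPerformance(\eventTN)+(\priorneg^2+\priorpos^2)\aPerformance(\eventFP)+(\priorneg^2+\priorpos^2)\aPerformance(\eventFN)+2\priorneg^2\aPerformance(\eventTP)$; dividing numerator and denominator by $2(\priorneg^2+\priorpos^2)$ then exhibits precisely $\rankingScore$ with $\randVarImportance(\sampleTN)=\frac{\priorpos^2}{\priorneg^2+\priorpos^2}$, $\randVarImportance(\sampleFP)=\randVarImportance(\sampleFN)=\frac12$, and $\randVarImportance(\sampleTP)=\frac{\priorneg^2}{\priorneg^2+\priorpos^2}$. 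To close the proof I would check that the two sides share the same domain: for fixed priors with $0<\priorneg<1$ the denominator above is bounded below by $\min(\priorneg,\priorpos)>0$, so both $\scoreCohenKappa$ and $\rankingScore$ are defined on every performance with those priors.

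I expect the only real difficulty to be the bookkeeping in the second step: one has to track the cancellations that kill every cross term and leave a quantity proportional to $\priorpos^2\aPerformance(\eventTN)+\priorneg^2\aPerformance(\eventTP)$. Conceptually there is no obstacle — once $\aPerformance(\eventFP)$ and $\aPerformance(\eventFN)$ have been eliminated, everything is driven by the single identity $(\priorneg+\priorpos)^2=1$, which is what turns $(\priorneg^2+\priorpos^2)+2\priorneg\priorpos$ into $1$ and produces the clean coefficients $2\priorpos^2$ and $2\priorneg^2$.
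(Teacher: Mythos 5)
Your proposal is correct and follows essentially the same route as the paper's proof: both expand $\scoreCohenKappa$ via the no-skill (expected) accuracy, form the affine combination $(\priorneg^{2}+\priorpos^{2})\scoreCohenKappa+2\priorneg\priorpos$, use the fixed priors to eliminate $\aPerformance(\eventFP)$ and $\aPerformance(\eventFN)$, arrive at the same intermediate expression $\frac{2\priorpos^{2}\aPerformance(\eventTN)+2\priorneg^{2}\aPerformance(\eventTP)}{(\priorpos-\priorneg)(\aPerformance(\eventTN)-\aPerformance(\eventTP))+\priorneg^{2}+\priorpos^{2}}$, and re-expand the denominator over all four cells to exhibit $\rankingScore$. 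The only differences are cosmetic (you reorganize the algebra into two clean identities, and your domain check is lighter than the paper's three-case analysis, though sufficient for priors strictly in $(0,1)$).
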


\global\long\def\scoreExpectedAccuracy{EA}%
\global\long\def\scoreRateOfNegativePredictions{\tau_{-}}%
\global\long\def\scoreRateOfPositivePredictions{\tau_{+}}%
\global\long\def\coefTN{\lambda_{tn}}%
\global\long\def\coefFP{\lambda_{fp}}%
\global\long\def\coefFN{\lambda_{fn}}%
\global\long\def\coefTP{\lambda_{tp}}%
\global\long\def\performancesWithGivenPriors{\mathbb{P}^{*}}%

\begin{proof}
Let us begin by introducing the set of performances with fixed and
strictly positive priors:
\[
\performancesWithGivenPriors=\left\{ \aPerformance\in\allPerformances:\aPerformance(\{\sampleFN,\sampleTP\})=\priorpos\right\} \qquad\textrm{ with }\priorpos\in\left(0,1\right)\,.
\]
For the sake of concision, let us pose $\probaTN=\aPerformance(\eventTN)$,
$\probaFP=\aPerformance(\eventFP)$, $\probaFN=\aPerformance(\eventFN)$,
$\probaTP=\aPerformance(\eventTP)$, $\scoreRateOfNegativePredictions=\aPerformance(\{\sampleFN,\sampleTN\})$,
and $\scoreRateOfPositivePredictions=\aPerformance(\{\sampleFP,\sampleTP\})$.

Let us first observe that $\domainOfScore[\scoreCohenKappa]\cap\performancesWithGivenPriors=\domainOfScore[{\rankingScore}]\cap\performancesWithGivenPriors$.
By definition of $\rankingScore$,
\[
\domainOfScore[{\rankingScore}]=\left\{ \aPerformance\in\allPerformances:\expectedValueSymbol_{\aPerformance}[\randVarImportance]\ne0\right\} \,,
\]
and by definition of $\scoreCohenKappa$,
\[
\domainOfScore[\scoreCohenKappa]=\left\{ \aPerformance\in\allPerformances:\priorneg\scoreRateOfNegativePredictions+\priorpos\scoreRateOfPositivePredictions\ne1\right\} \,.
\]
We discuss three cases.
\begin{enumerate}
\item When $\priorpos=0$, which implies that $\probaFN=0\wedge\probaTP=0$,
we have
\begin{align*}
\domainOfScore[\scoreCohenKappa]\cap\performancesWithGivenPriors & =\left\{ \aPerformance\in\allPerformances:\priorneg\scoreRateOfNegativePredictions+\priorpos\scoreRateOfPositivePredictions\ne1\right\} \cap\performancesWithGivenPriors\\
 & =\left\{ \aPerformance\in\allPerformances:\scoreRateOfNegativePredictions\ne1\right\} \cap\performancesWithGivenPriors\\
 & =\left\{ \aPerformance\in\allPerformances:\probaFN+\probaTN\ne1\right\} \cap\performancesWithGivenPriors\\
 & =\left\{ \aPerformance\in\allPerformances:\probaTN\ne1\right\} \cap\performancesWithGivenPriors\\
 & =\left\{ \aPerformance\in\allPerformances:\expectedValueSymbol_{\aPerformance}[\randVarImportance]\ne0\right\} \cap\performancesWithGivenPriors\\
 & =\domainOfScore[{\rankingScore}]\cap\performancesWithGivenPriors\,.
\end{align*}
\item When $\priorpos\in\left]0,1\right[$, we have
\begin{align*}
\domainOfScore[\scoreCohenKappa]\cap\performancesWithGivenPriors & =\left\{ \aPerformance\in\allPerformances:\priorneg\scoreRateOfNegativePredictions+\priorpos\scoreRateOfPositivePredictions\ne1\right\} \cap\performancesWithGivenPriors\\
 & =\left\{ \aPerformance\in\allPerformances\right\} \cap\performancesWithGivenPriors\\
 & =\domainOfScore[{\rankingScore}]\cap\performancesWithGivenPriors\,.
\end{align*}
\item When $\priorpos=1$, which implies that $\probaTN=0\wedge\probaFP=0$,
we have
\begin{align*}
\domainOfScore[\scoreCohenKappa]\cap\performancesWithGivenPriors & =\left\{ \aPerformance\in\allPerformances:\priorneg\scoreRateOfNegativePredictions+\priorpos\scoreRateOfPositivePredictions\ne1\right\} \cap\performancesWithGivenPriors\\
 & =\left\{ \aPerformance\in\allPerformances:\scoreRateOfPositivePredictions\ne1\right\} \cap\performancesWithGivenPriors\\
 & =\left\{ \aPerformance\in\allPerformances:\probaFP+\probaTP\ne1\right\} \cap\performancesWithGivenPriors\\
 & =\left\{ \aPerformance\in\allPerformances:\probaTP\ne1\right\} \cap\performancesWithGivenPriors\\
 & =\left\{ \aPerformance\in\allPerformances:\expectedValueSymbol_{\aPerformance}[\randVarImportance]\ne0\right\} \cap\performancesWithGivenPriors\\
 & =\domainOfScore[{\rankingScore}]\cap\performancesWithGivenPriors\,.
\end{align*}
\end{enumerate}
The restricted domains are thus equal in all cases:

\[
\domainOfScore[\scoreCohenKappa]\cap\performancesWithGivenPriors=\domainOfScore[{\rankingScore}]\cap\performancesWithGivenPriors\,.
\]

We now check the equality of the values taken by both scores. By definition,
for all $\aPerformance\in\domainOfScore[\scoreCohenKappa]$,
\begin{align*}
\scoreCohenKappa(\aPerformance) & =\frac{\scoreAccuracy-\scoreExpectedAccuracy}{1-\scoreExpectedAccuracy}\textrm{ avec }\begin{cases}
\scoreAccuracy=\aPerformance(\{\sampleTN,\sampleTP\}) & \textrm{(the accuracy)}\\
\scoreExpectedAccuracy=\priorneg\scoreRateOfNegativePredictions+\priorpos\scoreRateOfPositivePredictions & \textrm{(the expected accuracy)}
\end{cases}\\
 & =\frac{\hphantom{+\probaFP+\probaFN}(\probaTN+\probaTP)-(\priorneg\probaTN+\priorpos\probaFP+\priorneg\probaFN+\priorpos\probaTP)}{(\probaTN+\probaFP+\probaFN+\probaTP)-(\priorneg\probaTN+\priorpos\probaFP+\priorneg\probaFN+\priorpos\probaTP)}\\
 & =\frac{(1-\priorneg)\probaTN-\priorpos\probaFP-\priorneg\probaFN+(1-\priorpos)\probaTP}{(1-\priorneg)\probaTN+(1-\priorpos)\probaFP+(1-\priorneg)\probaFN+(1-\priorpos)\probaTP}\\
 & =\frac{\priorpos\probaTN-\priorpos\probaFP-\priorneg\probaFN+\priorneg\probaTP}{\priorpos\probaTN+\priorneg\probaFP+\priorpos\probaFN+\priorneg\probaTP}
\end{align*}
Thus,
\[
(\priorneg^{2}+\priorpos^{2})\scoreCohenKappa(\aPerformance)+2\priorneg\priorpos=\frac{\coefTN\probaTN+\coefFP\probaFP+\coefFN\probaFN+\coefTP\probaTP}{\priorpos\probaTN+\priorneg\probaFP+\priorpos\probaFN+\priorneg\probaTP}
\]
with
\begin{align*}
\coefTN & =\priorpos(\priorneg^{2}+\priorpos^{2})+\priorpos(2\priorneg\priorpos)\\
 & =\priorpos(\priorneg^{2}+2\priorneg\priorpos+\priorpos^{2})\\
 & =\priorpos(\priorneg+\priorpos)^{2}\\
 & =\priorpos
\end{align*}
\begin{align*}
\coefFP & =-\priorpos(\priorneg^{2}+\priorpos^{2})+\priorneg(2\priorneg\priorpos)\\
 & =\priorpos(\priorneg^{2}-\priorpos^{2})\\
 & =\priorpos(\priorneg-\priorpos)(\priorneg+\priorpos)\\
 & =\priorpos(\priorneg-\priorpos)
\end{align*}
\begin{align*}
\coefFN & =-\priorneg(\priorneg^{2}+\priorpos^{2})+\priorpos(2\priorneg\priorpos)\\
 & =\priorneg(\priorpos^{2}-\priorneg^{2})\\
 & =\priorneg(\priorpos-\priorneg)(\priorpos+\priorneg)\\
 & =\priorneg(\priorpos-\priorneg)
\end{align*}
\begin{align*}
\coefTP & =\priorneg(\priorneg^{2}+\priorpos^{2})+\priorneg(2\priorneg\priorpos)\\
 & =\priorneg(\priorneg^{2}+2\priorneg\priorpos+\priorpos^{2})\\
 & =\priorneg(\priorneg+\priorpos)^{2}\\
 & =\priorneg
\end{align*}
We continue by eliminating $\probaFP$ and $\probaFN$ from the equation.
For all $\aPerformance\in\domainOfScore[\scoreCohenKappa]\cap\performancesWithGivenPriors$,
\begin{align*}
(\priorneg^{2}+\priorpos^{2})\scoreCohenKappa(\aPerformance)+2\priorneg\priorpos & =\frac{\coefTN\probaTN+\coefFP(\priorneg-\probaTN)+\coefFN(\priorpos-\probaTP)+\coefTP\probaTP}{\priorpos\probaTN+\priorneg(\priorneg-\probaTN)+\priorpos(\priorpos-\probaTP)+\priorneg\probaTP}\\
 & =\frac{(\coefTN-\coefFP)\probaTN+(\coefFP\priorneg+\coefFN\priorpos)+(\coefTP-\coefFN)\probaTP}{(\priorpos-\priorneg)\probaTN+(\priorneg^{2}+\priorpos^{2})+(\priorneg-\priorpos)\probaTP}
\end{align*}
We have
\begin{align*}
\coefTN-\coefFP & =\priorpos-\priorpos(\priorneg-\priorpos)\\
 & =\priorpos(\priorneg+\priorpos)-\priorpos(\priorneg-\priorpos)\\
 & =\priorpos\priorneg+\priorpos^{2}-\priorpos\priorneg+\priorpos^{2}\\
 & =2\priorpos^{2}
\end{align*}
\begin{align*}
\coefFP\priorneg+\coefFN\priorpos & =\priorpos(\priorneg-\priorpos)\priorneg+\priorneg(\priorpos-\priorneg)\priorpos\\
 & =\priorneg\priorpos(\priorneg-\priorpos+\priorpos-\priorneg)\\
 & =0
\end{align*}
\begin{align*}
\coefTP-\coefFN & =\priorneg-\priorneg(\priorpos-\priorneg)\\
 & =\priorneg(\priorpos+\priorneg)-\priorneg(\priorpos-\priorneg)\\
 & =\priorneg\priorpos+\priorneg^{2}-\priorneg\priorpos+\priorneg^{2}\\
 & =2\priorneg^{2}
\end{align*}
So, for all $\aPerformance\in\domainOfScore[\scoreCohenKappa]\cap\performancesWithGivenPriors$,
\[
(\priorneg^{2}+\priorpos^{2})\scoreCohenKappa(\aPerformance)+2\priorneg\priorpos=\frac{2\priorpos^{2}\probaTN+2\priorneg^{2}\probaTP}{(\priorpos-\priorneg)\probaTN+(\priorneg^{2}+\priorpos^{2})+(\priorneg-\priorpos)\probaTP}
\]
Let us now rework the denominator.
\begin{align*}
 & (\priorpos-\priorneg)\probaTN+(\priorneg^{2}+\priorpos^{2})+(\priorneg-\priorpos)\probaTP\\
= & (\priorpos-\priorneg)(\priorpos+\priorneg)\probaTN+(\priorneg^{2}+\priorpos^{2})+(\priorneg-\priorpos)(\priorneg+\priorpos)\probaTP\\
= & (\priorpos^{2}-\priorneg^{2})\probaTN+(\priorneg^{2}+\priorpos^{2})+(\priorneg^{2}-\priorpos^{2})\probaTP\\
= & (\priorpos^{2}-\priorneg^{2})\probaTN+(\priorneg^{2}+\priorpos^{2})(\priorneg+\priorpos)+(\priorneg^{2}-\priorpos^{2})\probaTP\\
= & (\priorpos^{2}-\priorneg^{2})\probaTN+(\priorneg^{2}+\priorpos^{2})\priorneg+(\priorneg^{2}+\priorpos^{2})\priorpos+(\priorneg^{2}-\priorpos^{2})\probaTP\\
= & 2\priorpos^{2}\probaTN+(\priorneg^{2}+\priorpos^{2})(\priorneg-\probaTN)+(\priorneg^{2}+\priorpos^{2})(\priorpos-\probaTP)+2\priorneg^{2}\probaTP\\
= & 2\priorpos^{2}\probaTN+(\priorneg^{2}+\priorpos^{2})\probaFP+(\priorneg^{2}+\priorpos^{2})\probaFN+2\priorneg^{2}\probaTP
\end{align*}
And thus, for all $\aPerformance\in\domainOfScore[\scoreCohenKappa]\cap\performancesWithGivenPriors$,
\begin{align*}
(\priorneg^{2}+\priorpos^{2})\scoreCohenKappa(\aPerformance)+2\priorneg\priorpos= & \frac{2\priorpos^{2}\probaTN+2\priorneg^{2}\probaTP}{2\priorpos^{2}\probaTN+(\priorneg^{2}+\priorpos^{2})\probaFP+(\priorneg^{2}+\priorpos^{2})\probaFN+2\priorneg^{2}\probaTP}\\
= & \frac{\frac{\priorpos^{2}}{\priorneg^{2}+\priorpos^{2}}\probaTN+\frac{\priorneg^{2}}{\priorneg^{2}+\priorpos^{2}}\probaTP}{\frac{\priorpos^{2}}{\priorneg^{2}+\priorpos^{2}}\probaTN+\frac{1}{2}\probaFP+\frac{1}{2}\probaFN+\frac{\priorneg^{2}}{\priorneg^{2}+\priorpos^{2}}\probaTP}\\
= & \frac{\randVarImportance(\sampleTN)\probaTN+\randVarImportance(\sampleTP)\probaTP}{\randVarImportance(\sampleTN)\probaTN+\randVarImportance(\sampleFP)\probaFP+\randVarImportance(\sampleFN)\probaFN+\randVarImportance(\sampleTP)\probaTP}\\
= & \rankingScore(\aPerformance)
\end{align*}
In conclusion, $(\priorneg^{2}+\priorpos^{2})\scoreCohenKappa+2\priorneg\priorpos=\rankingScore$
on $\domainOfScore[\scoreCohenKappa]\cap\performancesWithGivenPriors=\domainOfScore[{\rankingScore}]\cap\performancesWithGivenPriors$.
Moreover, as $\randVarImportance(\sampleTN)+\randVarImportance(\sampleTP)=\randVarImportance(\sampleFP)+\randVarImportance(\sampleFN)$,
$\rankingScore$ is canonical.
\end{proof}


\subsubsection{Placing performance orderings induced by the score $\scoreWeightedAccuracy$ on the \tile}

\begin{lemma}[The weighted accuracies] In two-class classification, let the priors of the negative
and positive classes be fixed, strictly positive, and denoted, respectively,
by $\priorneg$ and $\priorpos$. In this case, all weighted accuracies
$\scoreWeightedAccuracy=(1-\alpha)\scoreTNR+\alpha\scoreTPR$, $\alpha\in[0,1]$,
are canonical ranking scores: $\scoreWeightedAccuracy=\rankingScore$
with
\[
\begin{cases}
\randVarImportance(\sampleTN)=\randVarImportance(\sampleFP)=\frac{\frac{1-\alpha}{\priorneg}}{\frac{1-\alpha}{\priorneg}+\frac{\alpha}{\priorpos}}\\
\randVarImportance(\sampleFN)=\randVarImportance(\sampleTP)=\frac{\frac{\alpha}{\priorpos}}{\frac{1-\alpha}{\priorneg}+\frac{\alpha}{\priorpos}} & .
\end{cases}
\]
\end{lemma}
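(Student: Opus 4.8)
The plan is to reuse the three-step template that worked for the F-scores and for Cohen's $\scoreCohenKappa$: (i) show that $\scoreWeightedAccuracy$ and $\rankingScore$ have the same domain once we restrict to the set $\performancesWithGivenPriors$ of performances with the prescribed priors; (ii) check that the two scores take equal values on that set; and (iii) verify the canonicality condition $\randVarImportance(\sampleTN)+\randVarImportance(\sampleTP)=\randVarImportance(\sampleFP)+\randVarImportance(\sampleFN)$. Throughout, the only structural fact I would exploit is that on $\performancesWithGivenPriors$ one has $\aPerformance(\{\sampleTN,\sampleFP\})=\priorneg$ and $\aPerformance(\{\sampleFN,\sampleTP\})=\priorpos$, so that $\scoreTNR(\aPerformance)=\aPerformance(\eventTN)/\priorneg$ and $\scoreTPR(\aPerformance)=\aPerformance(\eventTP)/\priorpos$, hence $\scoreWeightedAccuracy(\aPerformance)=\frac{1-\alpha}{\priorneg}\aPerformance(\eventTN)+\frac{\alpha}{\priorpos}\aPerformance(\eventTP)$.

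For step (ii), write $D=\frac{1-\alpha}{\priorneg}+\frac{\alpha}{\priorpos}$, which is strictly positive because $\priorneg,\priorpos>0$ and $\alpha,1-\alpha$ cannot both vanish; this is exactly where the strict-positivity hypothesis on the priors is needed. With the proposed importance $\randVarImportance(\sampleTN)=\randVarImportance(\sampleFP)=\frac{1-\alpha}{\priorneg D}$ and $\randVarImportance(\sampleFN)=\randVarImportance(\sampleTP)=\frac{\alpha}{\priorpos D}$ (a legitimate non-negative importance with $\randVarImportance(\sampleTN)+\randVarImportance(\sampleTP)=\randVarImportance(\sampleFP)+\randVarImportance(\sampleFN)=1\ne0$), the denominator of $\rankingScore(\aPerformance)$ simplifies, on $\performancesWithGivenPriors$, to $\randVarImportance(\sampleTN)\,\aPerformance(\{\sampleTN,\sampleFP\})+\randVarImportance(\sampleTP)\,\aPerformance(\{\sampleFN,\sampleTP\})=\frac{1-\alpha}{D}+\frac{\alpha}{D}=\frac1D$, a constant independent of $\aPerformance$, while the numerator equals $\frac1D\bigl(\frac{1-\alpha}{\priorneg}\aPerformance(\eventTN)+\frac{\alpha}{\priorpos}\aPerformance(\eventTP)\bigr)$; dividing returns exactly $\scoreWeightedAccuracy(\aPerformance)$.

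For step (i), on $\performancesWithGivenPriors$ the score $\scoreWeightedAccuracy$ is everywhere defined since $\scoreTNR$ requires $\aPerformance(\{\sampleTN,\sampleFP\})=\priorneg\ne0$ and $\scoreTPR$ requires $\aPerformance(\{\sampleFN,\sampleTP\})=\priorpos\ne0$, both guaranteed; and $\domainOfScore[\rankingScore]=\{\aPerformance:\expectedValueSymbol_{\aPerformance}[\randVarImportance]\ne0\}$ also contains all of $\performancesWithGivenPriors$, because the computation in step (ii) shows $\expectedValueSymbol_{\aPerformance}[\randVarImportance]=\frac1D\ne0$ there. Hence $\domainOfScore[\scoreWeightedAccuracy]\cap\performancesWithGivenPriors=\domainOfScore[\rankingScore]\cap\performancesWithGivenPriors=\performancesWithGivenPriors$, and step (iii) is the one-line identity $\randVarImportance(\sampleTN)+\randVarImportance(\sampleTP)=\frac{1-\alpha}{\priorneg D}+\frac{\alpha}{\priorpos D}=1=\randVarImportance(\sampleFP)+\randVarImportance(\sampleFN)$.

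I do not expect any genuine obstacle: unlike the $\opPriorShift$ lemma, here the denominator of $\rankingScore$ is not merely proportional to a constant but literally constant on $\performancesWithGivenPriors$, so the equality is exact and no appeal to Property~\ref{prop:scale-invariance-per-satisfaction} is required. The only thing to be careful about is bookkeeping the domain restriction to $\performancesWithGivenPriors$ consistently, and keeping track of the case $\alpha\in\{0,1\}$ (where one importance component vanishes but the required sums are still nonzero).
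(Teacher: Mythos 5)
Your proposal is correct and follows essentially the same route as the paper's proof: restrict to the set of performances with the prescribed priors, verify that both restricted domains equal that whole set, and check equality of values by observing that the denominator of $\rankingScore$ collapses to a constant because $\randVarImportance$ is constant on each ground-truth class. The only cosmetic difference is that you factor out the normalization constant $D$ explicitly, whereas the paper carries the unnormalized weights through the computation; the content is identical.
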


\begin{proof}
Let us begin by introducing the set of performances with fixed and
strictly positive priors:
\[
\performancesWithGivenPriors=\left\{ \aPerformance\in\allPerformances:\aPerformance(\{\sampleFN,\sampleTP\})=\priorpos\right\} \qquad\textrm{ with }\priorpos\in\left(0,1\right)\,.
\]
For the sake of concision, we pose $\probaTN=\aPerformance(\eventTN)$,
$\probaFP=\aPerformance(\eventFP)$, $\probaFN=\aPerformance(\eventFN)$,
and $\probaTP=\aPerformance(\eventTP)$.

We first check the equality of the restricted domains. The domain
of $\scoreWeightedAccuracy$ is $\domainOfScore[\scoreTNR]\cap\domainOfScore[\scoreTPR]$,
that is 
\[
\domainOfScore[\scoreWeightedAccuracy]=\left\{ \aPerformance\in\allPerformances:\aPerformance(\{\sampleFN,\sampleTP\})\notin\{0,1\}\right\} \,,
\]
 and thus the restricted domain of the weighted accuracy $\scoreWeightedAccuracy$
is
\[
\domainOfScore[\scoreWeightedAccuracy]\cap\performancesWithGivenPriors=\performancesWithGivenPriors\,.
\]
For the domain of $\rankingScore$, we have to take into account the
fact that $\randVarImportance$ should be well defined ($\priorneg\ne0$
and $\priorpos\ne0$) and that the mathematical expectation of the
importance should be non-zero, which is always the case: 
\begin{align*}
\expectedValueSymbol_{\aPerformance}[\randVarImportance]\ne0 & \Leftrightarrow\frac{\frac{1-\alpha}{\priorneg}}{\frac{1-\alpha}{\priorneg}+\frac{\alpha}{\priorpos}}(\probaTN+\probaFP)+\frac{\frac{\alpha}{\priorpos}}{\frac{1-\alpha}{\priorneg}+\frac{\alpha}{\priorpos}}(\probaFN+\probaTP)\ne0\\
 & \Leftrightarrow\frac{1-\alpha}{\priorneg}(\probaTN+\probaFP)+\frac{\alpha}{\priorpos}(\probaFN+\probaTP)\ne0\\
 & \Leftrightarrow\frac{1-\alpha}{\priorneg}\priorneg+\frac{\alpha}{\priorpos}\priorpos\ne0\\
 & \Leftrightarrow1\ne0
\end{align*}
Thus, the restricted domain of the ranking score $\rankingScore$
is
\[
\domainOfScore[{\rankingScore}]\cap\performancesWithGivenPriors=\performancesWithGivenPriors\,.
\]
\[
\domainOfScore[{\rankingScore}]=\left\{ \aPerformance\in\allPerformances:\expectedValueSymbol_{\aPerformance}[\randVarImportance]\ne0\right\} 
\]
The restricted domains are thus equal:

\[
\domainOfScore[\scoreWeightedAccuracy]\cap\performancesWithGivenPriors=\domainOfScore[{\rankingScore}]\cap\performancesWithGivenPriors\,.
\]

We now check the equality of the values taken by both scores. For
all $\aPerformance\in\domainOfScore[\scoreWeightedAccuracy]\cap\performancesWithGivenPriors$,
we have:
\begin{align*}
\scoreWeightedAccuracy(\aPerformance) & =(1-\alpha)\scoreTNR(\aPerformance)+\alpha\scoreTPR(\aPerformance)\\
 & =(1-\alpha)\frac{\probaTN}{\priorneg}+\alpha\frac{\probaTP}{\priorpos}\\
 & =\frac{1-\alpha}{\priorneg}\probaTN+\frac{\alpha}{\priorpos}\probaTP\\
 & =\frac{\frac{1-\alpha}{\priorneg}\probaTN+\frac{\alpha}{\priorpos}\probaTP}{\frac{1-\alpha}{\priorneg}\priorneg+\frac{\alpha}{\priorpos}\priorpos}\\
 & =\frac{\frac{1-\alpha}{\priorneg}\probaTN+\frac{\alpha}{\priorpos}\probaTP}{\frac{1-\alpha}{\priorneg}(\probaTN+\probaFP)+\frac{\alpha}{\priorpos}(\probaFN+\probaTP)}\\
 & =\frac{\frac{1-\alpha}{\priorneg}\probaTN+\frac{\alpha}{\priorpos}\probaTP}{\frac{1-\alpha}{\priorneg}\probaTN+\frac{1-\alpha}{\priorneg}\probaFP+\frac{\alpha}{\priorpos}\probaFN+\frac{\alpha}{\priorpos}\probaTP}\\
 & =\frac{\randVarImportance(\sampleTN)\probaTN+\randVarImportance(\sampleTP)\probaTP}{\randVarImportance(\sampleTN)\probaTN+\randVarImportance(\sampleFP)\probaFP+\randVarImportance(\sampleFN)\probaFN+\randVarImportance(\sampleTP)\probaTP}\\
 & =\rankingScore(\aPerformance)
\end{align*}
In conclusion, $\scoreWeightedAccuracy=\rankingScore$ on $\domainOfScore[\scoreWeightedAccuracy]\cap\performancesWithGivenPriors=\domainOfScore[{\rankingScore}]\cap\performancesWithGivenPriors$.
Moreover, as $\randVarImportance(\sampleTN)+\randVarImportance(\sampleTP)=\randVarImportance(\sampleFP)+\randVarImportance(\sampleFN)$,
$\rankingScore$ is canonical.
\end{proof}

The previous lemma can be particularized for the balanced accuracy
and for the accuracy. Taking $\alpha=\frac{1}{2}$, we obtain $\scoreBalancedAccuracy=\scoreWeightedAccuracy=\rankingScore$
with $\randVarImportance(\sampleTN)=\randVarImportance(\sampleFP)=\priorpos$
and $\randVarImportance(\sampleFN)=\randVarImportance(\sampleTP)=\priorneg$.
Taking $\alpha=\priorpos$, we obtain $\scoreAccuracy=\scoreWeightedAccuracy=\rankingScore$
with $\randVarImportance(\sampleTN)=\randVarImportance(\sampleFP)=\nicefrac{1}{2}$
and $\randVarImportance(\sampleFN)=\randVarImportance(\sampleTP)=\nicefrac{1}{2}$.


\subsubsection{Placing performance orderings induced by some other non-probabilistic scores on the \tile}

\begin{lemma}[Other particular scores]
Let us consider performance orderings induced by scores by the mechanism
described in the \first theorem of \paperA. In two-class
classification, Youden's index $\scoreYoudenJ$ leads to the same performance
ordering as the balanced accuracy $\scoreBalancedAccuracy$, and Jaccard's
coefficient for the positive class $\scoreJaccardPos$ leads to the
same performance ordering as $\scoreFBeta[1]$. Moreover, when the
class priors are fixed and non-zero, the standardized negative predictive
value $\scoreSNPV$ leads to the same performance ordering as the
negative predictive value $\scoreNPV$ when $\aPerformance(\{\sampleTN,\sampleFN\})\ne0$,
the negative likelihood ratio leads to the dual performance ordering
of $\scoreNPV$, the standardized positive predictive value $\scoreSPPV$
leads to the same performance ordering as the positive predictive
value $\scorePPV$ when $\aPerformance(\{\sampleTP,\sampleFP\})\ne0$,
and the positive likelihood ratio $\scorePLR$ leads also to the same
performance ordering as $\scorePPV$.
\end{lemma}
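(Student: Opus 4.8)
\mysection{Proof strategy.} The plan is to reduce every assertion to a single observation: if two scores $\aScore_1,\aScore_2$ have the same domain once both are restricted to the subset of performances under consideration, and if on that common restricted domain $\aScore_2=\phi\circ\aScore_1$ for some $\phi$ that is strictly increasing (resp.\ strictly decreasing) on the range of $\aScore_1$, then $\ordering_{\aScore_1}=\ordering_{\aScore_2}$ (resp.\ $\ordering_{\aScore_1}=\invertedOrdering_{\aScore_2}$) on that subset. Indeed, the ordering induced by a score in the sense of the \first theorem of \paperA is unchanged by a strictly monotone reparametrization, and on pairs lying outside the (common) domain both orderings declare exactly the same pairs incomparable. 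So for each pair in the statement I will (i) exhibit the explicit relation $\phi$ --- essentially all of them are already recorded among the score definitions above --- (ii) check the sign of $\phi'$ on the relevant range, and (iii) check the equality of the (restricted) domains.

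\mysection{The two claims valid on all of $\allPerformances$.} Since $\scoreYoudenJ=2\scoreBalancedAccuracy-1$ is affine and increasing and both scores have domain $\{\aPerformance\in\allPerformances:\priorneg\neq0\wedge\priorpos\neq0\}$, we get $\ordering_{\scoreYoudenJ}=\ordering_{\scoreBalancedAccuracy}$. For $\scoreFOne$ and $\scoreJaccardPos$, use $\scoreFOne=\frac{2\scoreJaccardPos}{\scoreJaccardPos+1}$ with $\phi(x)=\frac{2x}{x+1}$, $\phi'(x)=\frac{2}{(x+1)^2}>0$ on $[0,1]$; both scores have domain $\{\aPerformance:\aPerformance(\{\sampleFP,\sampleFN,\sampleTP\})\neq0\}$ --- for $\scoreFOne$ this follows from the $F$-score lemma established above, since $2\aPerformance(\eventTP)+\aPerformance(\eventFN)+\aPerformance(\eventFP)=0$ exactly when $\aPerformance(\{\sampleFP,\sampleFN,\sampleTP\})=0$ --- hence $\ordering_{\scoreFOne}=\ordering_{\scoreJaccardPos}$.

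\mysection{The claims for fixed, non-zero priors.} Restrict to $\mathbb{P}^{*}=\{\aPerformance\in\allPerformances:\aPerformance(\{\sampleFN,\sampleTP\})=\priorpos\}$ with $\priorneg,\priorpos\in(0,1)$. From $\scoreSNPV=\frac{\scoreNPV\priorpos}{\scoreNPV\priorpos+(1-\scoreNPV)\priorneg}$ one reads $\phi(x)=\frac{x\priorpos}{x\priorpos+(1-x)\priorneg}$, whose denominator is a strictly positive convex combination on $[0,1]$ and whose derivative $\phi'(x)=\frac{\priorneg\priorpos}{(x\priorpos+(1-x)\priorneg)^2}$ is $>0$; and since $\scoreSNPV=\frac{\scoreTNR}{\scoreTNR+\scoreFNR}$ is defined on $\mathbb{P}^{*}$ exactly where $\scoreTNR+\scoreFNR=\frac{\aPerformance(\eventTN)}{\priorneg}+\frac{\aPerformance(\eventFN)}{\priorpos}\neq0$, i.e.\ where $\aPerformance(\{\sampleTN,\sampleFN\})\neq0$, the restricted domain of $\scoreSNPV$ equals that of $\scoreNPV$, giving $\ordering_{\scoreSNPV}=\ordering_{\scoreNPV}$ on $\mathbb{P}^{*}$. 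Next, $\scoreNLR=\frac{1-\scoreSNPV}{\scoreSNPV}=\frac{1}{\scoreSNPV}-1$ is strictly decreasing in $\scoreSNPV$, hence strictly decreasing in $\scoreNPV$, so $\ordering_{\scoreNLR}$ is the dual of $\ordering_{\scoreNPV}$ --- with the caveat that $\scoreNLR$ additionally needs $\scoreTNR\neq0$, i.e.\ $\aPerformance(\eventTN)\neq0$, so this holds on that smaller part of $\mathbb{P}^{*}$. The positive-class statements follow by the mirror computation (or by composing with $\opSwapClasses$, see \cref{lemma:opSwapClasses}): $\scoreSPPV=\frac{\scorePPV\priorneg}{\scorePPV\priorneg+(1-\scorePPV)\priorpos}$ is strictly increasing in $\scorePPV$ and shares its restricted domain, namely $\{\aPerformance\in\mathbb{P}^{*}:\aPerformance(\{\sampleFP,\sampleTP\})\neq0\}$, while $\scorePLR=\frac{\scoreSPPV}{1-\scoreSPPV}$ with $\psi(s)=\frac{s}{1-s}$, $\psi'(s)=\frac{1}{(1-s)^2}>0$, is strictly increasing in $\scoreSPPV$ and hence in $\scorePPV$, on the part of $\mathbb{P}^{*}$ where $\scoreFPR\neq0$, i.e.\ $\aPerformance(\eventFP)\neq0$.

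\mysection{Main obstacle.} The monotonicity computations are each a one-line derivative sign and the functional identities are already available, so the only real work is the domain bookkeeping of the previous paragraph: one must separate the full domains from those restricted to fixed priors, check that this restriction makes $\domainOfScore[\scoreSNPV]$ and $\domainOfScore[\scoreSPPV]$ collapse onto $\domainOfScore[\scoreNPV]$ and $\domainOfScore[\scorePPV]$ respectively (which genuinely uses $\priorneg,\priorpos>0$), and notice that the extra division defining $\scoreNLR$ and $\scorePLR$ further shrinks their domains (to $\aPerformance(\eventTN)\neq0$ and $\aPerformance(\eventFP)\neq0$), so the ``dual ordering'' and ``same ordering'' conclusions for those two are to be read on those smaller subsets.
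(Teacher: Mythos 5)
Your proof is correct and follows essentially the same route as the paper's: for each pair of scores, exhibit the explicit strictly monotone relation, check the sign of its derivative, and verify that the (prior-restricted) domains coincide, with the same explicit caveat that $\scoreNLR$ and $\scorePLR$ are only compared on the smaller subsets where $\aPerformance(\eventTN)\ne0$ and $\aPerformance(\eventFP)\ne0$. The only cosmetic difference is that you relate $\scoreNLR$ and $\scorePLR$ to $\scoreNPV$ and $\scorePPV$ by composing through $\scoreSNPV$ and $\scoreSPPV$, whereas the paper writes the relations $\scoreNLR=\frac{1-\scoreNPV}{\scoreNPV}\frac{\priorneg}{\priorpos}$ and $\scorePLR=\frac{\scorePPV}{1-\scorePPV}\frac{\priorneg}{\priorpos}$ directly; both are equally valid.
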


\begin{proof}
For the sake of concision, let us pose $\probaTN=\aPerformance(\eventTN)$,
$\probaFP=\aPerformance(\eventFP)$, $\probaFN=\aPerformance(\eventFN)$,
and $\probaTP=\aPerformance(\eventTP)$.
\begin{itemize}
\item Youden's index is defined as $\scoreYoudenJ=\scoreTNR+\scoreTPR-1$,
and the balanced accuracy as $\scoreBalancedAccuracy=\nicefrac{1}{2}\scoreTNR+\nicefrac{1}{2}\scoreTPR$.
They have the same domain: $\domainOfScore[\scoreYoudenJ]=\domainOfScore[\scoreBalancedAccuracy]=\domainOfScore[\scoreTNR]\cap\domainOfScore[\scoreTPR]$.
Trivially, 
\[
\scoreYoudenJ=2\scoreBalancedAccuracy-1\,.
\]
Thus, 
\[
\frac{\partial\scoreYoudenJ}{\partial\scoreBalancedAccuracy}=2>0\,.
\]
As there is a strictly increasing relationship between $\scoreYoudenJ$
and $\scoreBalancedAccuracy$, these two scores lead to same performance
ordering.
\item Jaccard's coefficient for the positive class is defined as $\scoreJaccardPos=\aScore_{\{\sampleTP\}\vert\{\sampleFP,\sampleFN,\sampleTP\}}$
and the F-one score as $\scoreFBeta[1]:\aPerformance\mapsto\frac{2\probaTP}{\probaFP+\probaFN+2\probaTP}$.
Their respective domains
\begin{align*}
\domainOfScore[\scoreJaccardPos] & =\left\{ \aPerformance\in\allPerformances:\aPerformance(\{\sampleFP,\sampleFN,\sampleTP\})\ne0\right\} \\
 & =\left\{ \aPerformance\in\allPerformances:\probaFP\ne0\vee\probaFN\ne0\vee\probaTP\ne0\right\} 
\end{align*}
and
\begin{align*}
\domainOfScore[{\scoreFBeta[1]}] & =\left\{ \aPerformance\in\allPerformances:\probaFP+\probaFN+2\probaTP\ne0\right\} \\
 & =\left\{ \aPerformance\in\allPerformances:\probaFP\ne0\vee\probaFN\ne0\vee\probaTP\ne0\right\} 
\end{align*}
are equal. Trivially, 
\[
\scoreFBeta[1]=\frac{2\scoreJaccardPos}{1+\scoreJaccardPos}\,.
\]
Thus, 
\[
\frac{\partial\scoreFBeta[1]}{\partial\scoreJaccardPos}=\frac{2}{(1+\scoreJaccardPos)^{2}}>0\,.
\]
As there is a strictly increasing relationship between $\scoreFBeta[1]$
and $\scoreJaccardPos$, these two scores lead to same performance
ordering.
\item The standardized negative predictive value is defined as $\scoreSNPV=\frac{\scoreTNR}{\scoreTNR+(1-\scoreTPR)}$
and the negative predictive value as $\scoreNPV=\aScore_{\{\sampleTN\}\vert\{\sampleTN,\sampleFN\}}$.
Let us consider the performances such that $\probaTN+\probaFN\ne0$,
$\probaTN+\probaFP=\priorneg\ne0$, and $\probaFN+\probaTP=\priorpos\ne0$.
We have:
\begin{multline*}
\scoreSNPV=\frac{\scoreTNR}{\scoreTNR+(1-\scoreTPR)}=\frac{\frac{\probaTN}{\priorneg}}{\frac{\probaTN}{\priorneg}+\frac{\probaFN}{\priorpos}}\\
=\frac{\frac{\probaTN}{\probaTN+\probaFN}\frac{1}{\priorneg}}{\frac{\probaTN}{\probaTN+\probaFN}\frac{1}{\priorneg}+\frac{\probaFN}{\probaTN+\probaFN}\frac{1}{\priorpos}}=\frac{\scoreNPV\frac{1}{\priorneg}}{\scoreNPV\frac{1}{\priorneg}+(1-\scoreNPV)\frac{1}{\priorpos}}\,.
\end{multline*}
Thus,
\[
\frac{\partial\scoreSNPV}{\partial\scoreNPV}=\frac{\frac{1}{\priorneg}\frac{1}{\priorpos}}{\left(\scoreNPV\frac{1}{\priorneg}+(1-\scoreNPV)\frac{1}{\priorpos}\right)^{2}}>0\,.
\]
As there is a strictly increasing relationship between $\scoreSNPV$
and $\scoreNPV$, these two scores lead to same performance ordering.
\item The negative likelihood ratio is defined as $\scoreNLR=\frac{1-\scoreTPR}{\scoreTNR}$
and the negative predictive value as $\scoreNPV=\aScore_{\{\sampleTN\}\vert\{\sampleTN,\sampleFN\}}$.
Let us consider the performances such that $\probaTN\ne0$, $\probaTN+\probaFP=\priorneg$,
and $\probaFN+\probaTP=\priorpos\ne0$. We have:
\[
\scoreNLR=\frac{1-\scoreTPR}{\scoreTNR}=\frac{\frac{\probaFN}{\priorpos}}{\frac{\probaTN}{\priorneg}}=\frac{\probaFN}{\probaTN}\frac{\priorneg}{\priorpos}=\frac{1-\scoreNPV}{\scoreNPV}\frac{\priorneg}{\priorpos}\,.
\]
Thus,
\[
\frac{\partial\scoreNLR}{\partial\scoreNPV}=-\frac{1}{\scoreNPV^{2}}\frac{\priorneg}{\priorpos}<0\,.
\]
As there is a strictly decreasing relationship between $\scoreNLR$
and $\scoreNPV$, these two scores lead to dual performance orderings.
\item The standardized positive predictive value is defined as $\scoreSPPV=\frac{\scoreTPR}{(1-\scoreTNR)+\scoreTPR}$
and the positive predictive value as $\scorePPV=\aScore_{\{\sampleTP\}\vert\{\sampleTP,\sampleFP\}}$.
Let us consider the performances such that $\probaTP+\probaFP\ne0$,
$\probaTN+\probaFP=\priorneg\ne0$, and $\probaFN+\probaTP=\priorpos\ne0$.
We have:
\begin{multline*}
\scoreSPPV=\frac{\scoreTPR}{(1-\scoreTNR)+\scoreTPR}=\frac{\frac{\probaTP}{\priorpos}}{\frac{\probaFP}{\priorneg}+\frac{\probaTP}{\priorpos}}\\
=\frac{\frac{\probaTP}{\probaTP+\probaFP}\frac{1}{\priorpos}}{\frac{\probaFP}{\probaTP+\probaFP}\frac{1}{\priorneg}+\frac{\probaTP}{\probaTP+\probaFP}\frac{1}{\priorpos}}=\frac{\scorePPV\frac{1}{\priorpos}}{(1-\scorePPV)\frac{1}{\priorneg}+\scorePPV\frac{1}{\priorpos}}
\end{multline*}
Thus,
\[
\frac{\partial\scoreSPPV}{\partial\scorePPV}=\frac{\frac{1}{\priorneg}\frac{1}{\priorpos}}{\left((1-\scorePPV)\frac{1}{\priorneg}+\scorePPV\frac{1}{\priorpos}\right)^{2}}>0\,.
\]
As there is a strictly increasing relationship between $\scoreSPPV$
and $\scorePPV$, these two scores lead to same performance ordering.
\item The positive likelihood ratio is defined as $\scorePLR=\frac{\scoreTPR}{1-\scoreTNR}$
and the positive predictive value as $\scorePPV=\aScore_{\{\sampleTP\}\vert\{\sampleTP,\sampleFP\}}$.
Let us consider the performances such that $\probaFP\ne0$, $\probaTN+\probaFP=\priorneg$,
and $\probaFN+\probaTP=\priorpos\ne0$. We have:
\[
\scorePLR=\frac{\scoreTPR}{1-\scoreTNR}=\frac{\frac{\probaTP}{\priorpos}}{\frac{\probaFP}{\priorneg}}=\frac{\probaTP}{\probaFP}\frac{\priorneg}{\priorpos}=\frac{\scorePPV}{1-\scorePPV}\frac{\priorneg}{\priorpos}\,.
\]
Thus,
\[
\frac{\partial\scorePLR}{\partial\scorePPV}=\frac{1}{\left(1-\scorePPV\right)^{2}}\frac{\priorneg}{\priorpos}>0\,.
\]
As there is a strictly increasing relationship between $\scorePLR$
and $\scorePPV$, these two scores lead to same performance ordering.
\end{itemize}
\end{proof}

\clearpage
\subsection{Supplementary material about \cref{sec:rankings-on-tile}}

\subsubsection{Algorithmic contributions}

It is easy to calculate, for a given importance $\randVarImportance$, which entity $\anEntity$ is the best among a set $\entitiesToRank$, based on their performances: it is the entity that has the highest score value. By performing this calculation at numerous points on the \tile, it is possible to get a good idea of the set of importances for which a given entity is the best, occupies the $r$-th place, or is ranked last. However, it is very interesting to be able to give an explicit representation of this set, in an exact way, by computing it efficiently. We now present some algorithmic contributions that make this possible. We present them for the particular case in which all compared performances have the same class priors.

\paragraph{When $\priorneg=\priorpos$.}
Let us first consider the case in which two performances with balanced priors, $\aPerformance_1,\aPerformance_2$, are compared. In our toy example (\cref{fig:toy-example}), we made the observation that the regions where the different classifiers are ranked first are convex polygons when $\priorneg=\priorpos$. In fact, one can show that the canonical importances $\randVarCanonicalImportance$ for which $\aPerformance_1$ is better or equivalent than $\aPerformance_2$ are given by a linear inequality in the variables $a$ and $b$:
\begin{equation}
    \aPerformance_1 \relBetterOrEquivalent \aPerformance_2
    \Leftrightarrow
    \lambda_a(\aPerformance_1,\aPerformance_2) a + \lambda_b(\aPerformance_1,\aPerformance_2) b + \lambda_0(\aPerformance_1,\aPerformance_2) \ge 0
\end{equation}
with
\begin{equation}
    \begin{cases}
        \lambda_a(\aPerformance_1,\aPerformance_2)=\scoreFPR(\aPerformance_1) \scoreFNR(\aPerformance_2)-\scoreFNR(\aPerformance_1) \scoreFPR(\aPerformance_2)\\
        \lambda_b(\aPerformance_1,\aPerformance_2)=\scoreTPR(\aPerformance_1) \scoreTNR(\aPerformance_2) - \scoreTNR(\aPerformance_1) \scoreTPR(\aPerformance_2)\\
        \lambda_0(\aPerformance_1,\aPerformance_2)=\scoreTNR(\aPerformance_1) - \scoreTNR(\aPerformance_2)
    \end{cases}\,.
\end{equation}
Given that the \tile is bounded by four other linear inequalities in the variables $a$ and $b$,
\begin{equation}
\begin{cases}
(1)a+(0)b+(0)\ge0\,.\\
(-1)a+(0)b+(1)\ge0\,.\\
(0)a+(1)b+(0)\ge0\,.\\
(0)a+(-1)b+(1)\ge0\,.
\end{cases}
\end{equation}
the zone in the \tile where $\aPerformance_1$ is better or equivalent than $\aPerformance_2$ is the intersection between $5$ half-planes, thus either an empty set or a convex polygon. This can be trivially generalized to the computation of the zone in which all performances in a set $\aSetOfPerformances_1$ are better or equivalent to all performances in a set $\aSetOfPerformances_2$. In this case, we obtain either an empty set or a convex polygon that is the intersection of $|\aSetOfPerformances_1| |\aSetOfPerformances_2|+4$ half planes. If needed, the representation of this polygon as a set of linear inequalities can be converted in an equivalent representation based on its vertices and edges.

\paragraph{When $\priorneg\ne\priorpos$.}
In this case, we proceed in three steps: (1) we apply a tartget/prior shift to the performances to fall back in the case of uniform priors, then (2) we calculate the polygon as described above, and finally (3) we cancel the effect of a tartget/prior shift by applying a deformation to this polygon according to what was described in \cref{sec:operations-on-performances}. This transformation is continuous and invertible, so that the border of the computed zone corresponds to the contour of the polygon. For that reason, it is convenient to represent the polygons based on their vertices and edges. Applying the transformation to them can be done by simply discretizing the edges and applying the transformation to the points. In this way, the resulting zone is approximated as a polygon.

\clearpage
\subsection{Supplementary material about \cref{sec:no-skill-performances}}

\subsubsection{What is achievable by no-skilled classifiers?} 

\Cref{fig:values-no-skill} uses the \tile to depict, for all canonical ranking scores, the maximum value achievable by the no-skill performances.
\begin{figure}
\begin{centering}
\hfill
\includegraphics[width=0.3\linewidth]{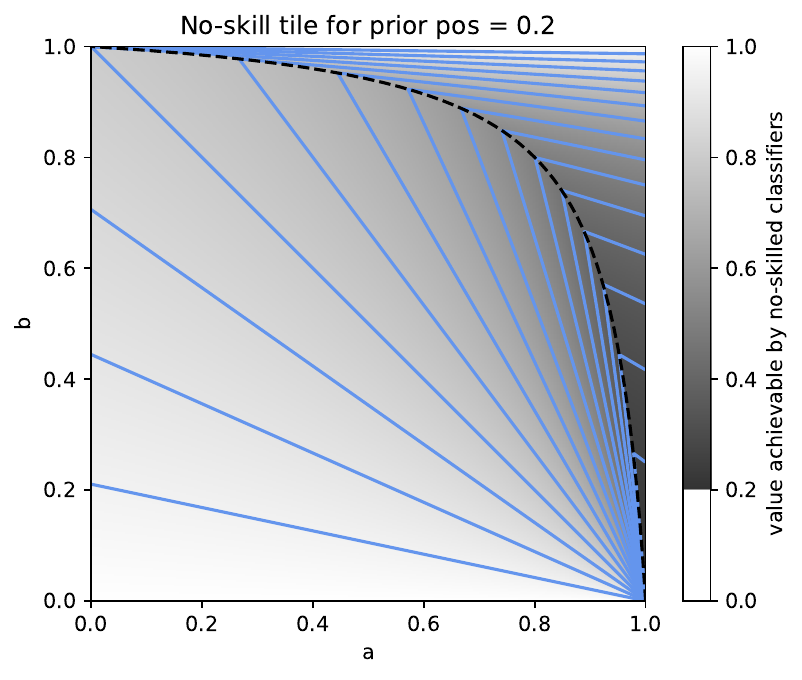}
\hfill
\hfill
\includegraphics[width=0.3\linewidth]{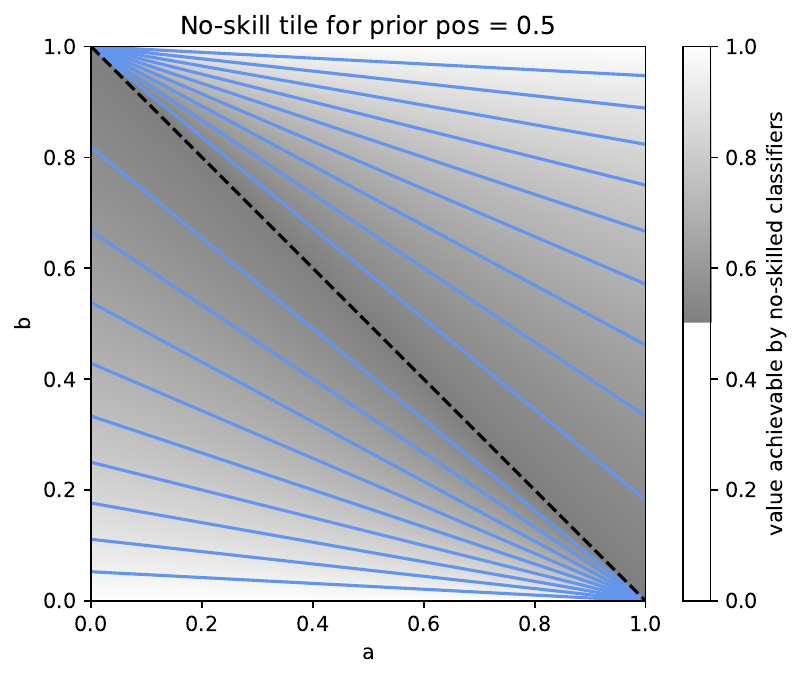}
\hfill
\hfill
\includegraphics[width=0.3\linewidth]{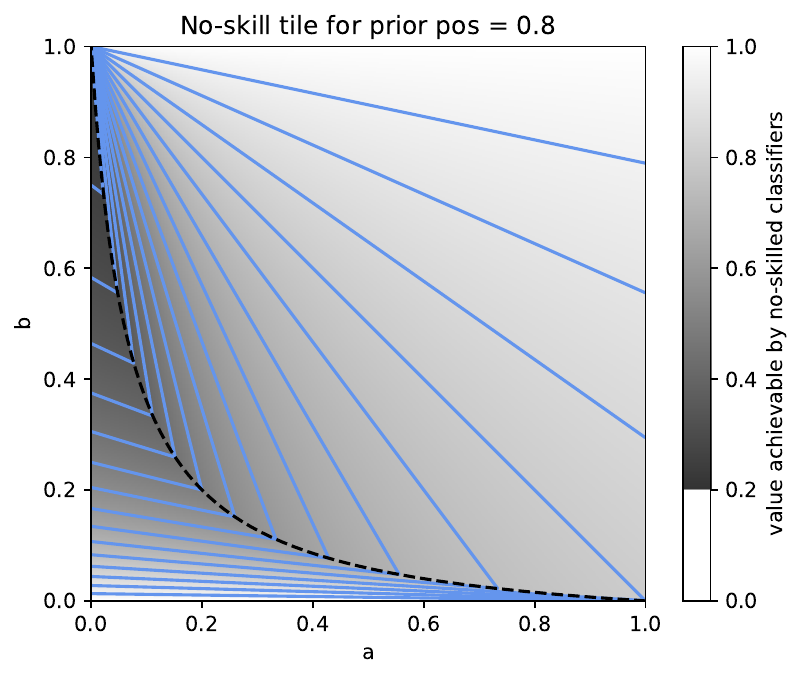}
\hfill
\par\end{centering}
\caption{\tiles showing the values of canonical ranking scores achievable by no-skill classifiers when the prior of the positive class is $0.2$ (left), $0.5$ (center), and $0.8$ (right). In all cases, the \tile is divided into two parts. On the bottom left part, the best no-skill classifier is the one predicting always the negative class. In the upper-right part, the best no-skill classifier is the one predicting always the positive class. Note that the limit between the two parts is the curve $\tileCurvePriors$ given in \cref{fig:curves-no-skill}.\label{fig:values-no-skill}}

\end{figure}

\clearpage
\subsection{Averaging all canonical ranking scores: the \emph{Volume Under Tile}}
\label{sec:score-under-tile}

\subsubsection{Definition}

Let us study the score $\scoreVUT$ (\emph{Volume Under Tile}):
\begin{equation}
    \scoreVUT:\allPerformances\rightarrow[0,1]:\aPerformance\mapsto\scoreVUT(\aPerformance)=\int_{a=0}^{1}\int_{b=0}^{1}\canonicalRankingScore(\aPerformance)\,\mathrm{d}b\,\mathrm{d}a\,.
\end{equation}

\subsubsection{Closed-form expression}

It can be showed that:
\begin{enumerate}
    \item When $\aPerformance(\eventTP)=\aPerformance(\eventTN)$ and $\aPerformance(\eventFN)=\aPerformance(\eventFP)$:
    \begin{align}
        \scoreVUT(\aPerformance) & =\canonicalRankingScore(\aPerformance)\,\forall (a,b)\in [0,1] \\ & \qquad =\scoreAccuracy(\aPerformance)=\scoreTNR(\aPerformance)=\scoreTPR(\aPerformance) \\ & \qquad \qquad =\scoreNPV(\aPerformance)=\scorePPV(\aPerformance)=\scoreFBeta(\aPerformance)=\ldots
    \end{align}
    \item When $\aPerformance(\eventTP)=\aPerformance(\eventTN)$ and $\aPerformance(\eventFN)\ne\aPerformance(\eventFP)$:
    \begin{equation}
        \scoreVUT(\aPerformance) = \frac{\aPerformance(\eventTN)}{\aPerformance(\eventFN)-\aPerformance(\eventFP)}\left(\ln\left(\aPerformance(\{\sampleTN,\sampleFN\})\right)-\ln\left(\aPerformance(\{\sampleTN,\sampleFP\}))\right)\right)
    \end{equation}
    \item When $\aPerformance(\eventTP)\ne\aPerformance(\eventTN)$ and $\aPerformance(\eventFN)=\aPerformance(\eventFP)$:
    \begin{equation}
        \scoreVUT(\aPerformance) = 1-\frac{\aPerformance(\eventFN)}{\aPerformance(\eventTP)-\aPerformance(\eventTN)}\left(\ln\left(\aPerformance(\{\sampleTP,\sampleFN\})\right)-\ln\left(\aPerformance(\{\sampleTN,\sampleFN\}))\right)\right)
    \end{equation}
    \item When $\aPerformance(\eventTP)\ne\aPerformance(\eventTN)$ and $\aPerformance(\eventFN)\ne\aPerformance(\eventFP)$:
    \begin{equation}
        \scoreVUT(\aPerformance)=\frac{1}{2}-\frac{1}{2}\frac{\begin{array}{cc}
             & \left(\aPerformance(\eventTN)^{2}-\aPerformance(\eventFN)^{2}\right)\ln\left(\aPerformance(\{\sampleTN,\sampleFN\})\right)\\
            + & \left(\aPerformance(\eventTP)^{2}-\aPerformance(\eventFP)^{2}\right)\ln\left(\aPerformance(\{\sampleTP,\sampleFP\})\right)\\
            + & \left(\aPerformance(\eventFP)^{2}-\aPerformance(\eventTN)^{2}\right)\ln\left(\aPerformance(\{\sampleFP,\sampleTN\})\right)\\
            + & \left(\aPerformance(\eventFN)^{2}-\aPerformance(\eventTP)^{2}\right)\ln\left(\aPerformance(\{\sampleFN,\sampleTP\})\right)
        \end{array}}{(\aPerformance(\eventTP)-\aPerformance(\eventTN))(\aPerformance(\eventFN)-\aPerformance(\eventFP))}
        \label{eq:Anthony}
    \end{equation}
\end{enumerate}

\begin{proof}[Proof of \cref{eq:Anthony}]

For the sake of concision, let us rewrite under volume under the \tile as the following double integral, where $a,b,c,d$ are positive numbers:

$\int_{x=0}^1 \int_{y=0}^1 \frac{(1-x) a+x d}{(1-x) a+(1-y) b+y c+x d} d x d y$

$=\int_{x=0}^1 \int_{y=0}^1 \frac{(d-a) x+a}{(d-a) x+a+b+(c-b) y} d x d y$

\noindent Let's substitute: $p=(d-a), q=a, r=(d-a)=p, s=a+b+(c-b) y$

\noindent Knowing that$\int \frac{p x+q}{r x+s} d x=\int f(x) d x =\frac{p x}{r}+\frac{1}{r}\left(q-\frac{p s}{r}\right) \ln |r x+s|+C \,$, we have 

$\int_0^1 f(x) d x=\frac{p}{r}+\frac{1}{r}\left(q-\frac{p s}{r}\right) \ln |r+s|-\frac{1}{r}\left(q-\frac{p s}{r}\right) \ln |s|$

\noindent As $p=r$, we can simplify the equation as:

$\int_0^1 f(x) d x=1+\left(\frac{q}{r}-\frac{s}{r}\right) \ln |r+s|-\left(\frac{q}{r}-\frac{s}{r}\right) \ln |s|$

$=1+\frac{a-a-b+(b-c) y}{d-a} \ln |d-a+a+b+(c-b) y|-\left(\frac{-b+(b-c) y}{d-a}\right)\ln |a+b+(c-b) y|$

$=1+\left[\frac{(b-c)}{d-a} y-\frac{b}{d-a}\right] \ln |b+d+(c-b) y| - \left[\frac{(b-c)}{d-a} y-\frac{b}{d-a}\right] \ln |a+b+(c-b) y|
$

\noindent Let's now substitute: $\alpha=\frac{b-c}{d-a}, \beta=-\frac{b}{d-a}, \gamma=b+d, \delta=(c-b), \varepsilon=a+b $

$ \rightarrow \int_0^1 1 d y+\int_0^1(\alpha y+\beta) \ln  \underbrace{|\delta y+\gamma|}_{\geqslant 0} d y-\int_0^1(\alpha y+\beta) \ln \underbrace{|\delta y+\varepsilon|}_{\geqslant 0} d y$

\noindent Knowing that $\int(a x+b) \ln (c x+d) d x=\int g(x) d x $

$=\frac{1}{4 c^2} [2 (c x+d) \ln (c x+d)(a c x-a d+2 b c) -c x(a c x-2 a d+4 b c)]+C \,$, \\we have

$
\begin{aligned}
     \int_0^1 g(x) d x= & 
    \frac{1}{4 c^2}[2(c+d) \ln (c+d)(a c-a d+2 b c)-c(a c-2 a d+4 b c)] \\
   &   -\frac{1}{4 c^2}[2 d \ln (d)(2 b c-a c)] \\
    & 
     \end{aligned}
$

$
\begin{aligned}
=1&+ \frac{1}{4 \delta^2}\left[2 (\delta+\gamma) \ln (\delta+\gamma)(\alpha \delta-\alpha \gamma+2 \beta \delta) \right.\\
&-\delta\left(\alpha \delta-2 \alpha \gamma+4 \beta \delta)-2 \gamma \ln (2 \beta \delta-\alpha \delta)\right]\\
&-\frac{1}{4 \delta^2}\left[2 (\delta+\varepsilon) \ln (\delta+\varepsilon)(\alpha \delta-\alpha \varepsilon+2 \beta \delta) \right.\\
&-\delta\left(\alpha \delta-2 \alpha \varepsilon+4 \beta \delta)-2 \varepsilon \ln (2 \beta \delta-\alpha \delta)\right]
\end{aligned}
$

$
\begin{aligned}
&=1+ \frac{1}{4(c-b)^2(d-a)}\left[2(c+d) \ln (c+d)\left[-(b-c)^2-(b-c)(b+d)+2 b(b-c)\right]\right. \\
& -2(c+a) \ln (c+a)\left[-(b-c)^2-(b-c)(b+a)+2 b(b-c)\right] \\
& -2(b+d) \ln (b+d)[2 (c-b)(-b)+(c-b)(b+d)] \\
& +2(a+b) \ln (a+b)[2(c-b)(-b)+(c-b)(b+a)] \\
& -(c-b)\left[-(c-b)^2-2 (b-c)(b+d)-4 b(c-b)\right] \\
& +(c-b)\left[-(c-b)^2-2(b-c)(b+a)-4 b(c-b)]\right] \\
\end{aligned}
$

$
\begin{aligned}
& =1+\frac{1}{4(c-b)(d-a)}[2 (c+d) \ln (c+d)[(b-c)+(b+d)-2 b] \\
& -2(c+a) \ln (c+a)[(b-c)+(b+a)-2 b] \\
& -2(b+d) \ln (b+d)[d-b] \\
& +2(a+b) \ln (a+b)[a-b] \\
& +\left[(c-b)^2+2(b-c)(b+d)+4 b(c-b)\right] \\
& -\left[(c-b)^2+2(b-c)(b+a)+4 b(c-b)\right] \\
\end{aligned}
$

\noindent The volume under the \tile is thus analytically expressed as:

$\begin{aligned} 
\scoreVUT &=\frac{1}{2} - \frac{1}{2(c-b)(d-a)}\left[\left(c^2-d^2\right) \ln (c+d)\right. \\ 
& +\left(a^2-c^2\right) \ln(a+c) \\ 
& +\left(d^2-b^2\right) \ln (b+d) \\ 
& \left.+\left(b^2-a^2\right) \ln (a+b)\right] \\ 
& \end{aligned}$

\qed\end{proof}

\subsubsection{Discussion: can we use it to rank?}
The ordering induced by $\scoreVUT$ does not satisfy Axiom~\ref{axiom:combinations}. Nevertheless, it is interesting to note that it has a very high rank correlation with the accuracy $\scoreAccuracy$ (Spearman's $\rho$ is about $0.996$ for a uniform performance distribution, \ie a Dirichlet distribution with all concentration parameters set to $\alpha=1$).

\stopcontents[mytoc]

\end{document}